\documentclass[11pt,twoside]{article}

\usepackage[utf8]{inputenc}

\usepackage{fullpage}

\usepackage{epsf}
\usepackage{fancyhdr}
\usepackage{graphics}
\usepackage{graphicx}
\usepackage{psfrag}
\usepackage{algorithm}
\usepackage{algorithmic}


\usepackage{color}

\usepackage{amsthm}
\usepackage{amsfonts}
\usepackage{amsmath}
\usepackage{amssymb,bbm}
\usepackage{natbib}
\bibliographystyle{abbrvnat}

\usepackage[colorlinks,linkcolor=magenta,citecolor=blue]{hyperref}
\usepackage{url}
\usepackage{nicefrac}

\usepackage{chngpage}

 \usepackage{tabularx}%

\usepackage{enumitem}
\usepackage{booktabs}
\usepackage{pbox}

\usepackage{caption,subcaption}

\usepackage{mathtools}

\usepackage{fullpage}



\setlength{\textwidth}{\paperwidth}
\addtolength{\textwidth}{-6cm}
\setlength{\textheight}{\paperheight}
\addtolength{\textheight}{-4cm}
\addtolength{\textheight}{-1.1\headheight}
\addtolength{\textheight}{-\headsep}
\addtolength{\textheight}{-\footskip}
\setlength{\oddsidemargin}{0.5cm}
\setlength{\evensidemargin}{0.5cm}












\newcommand{\bracket}[1]{\left[ #1 \right]}
\newcommand{\parenth}[1]{\left( #1 \right)}

\newcommand{\braces}[1]{\left\{ #1 \right \}}
\newcommand{\abss}[1]{\left| #1 \right |}
\newcommand{\angles}[1]{\left\langle #1 \right \rangle}

\newcommand{\ceil}[1]{\left\lceil #1 \right\rceil}


\newcommand{\stepsize}{\eta}


\newcommand{\grad}{\nabla}






\newcommand{\Rspace}{\ensuremath{\mathbb{R}}}







\newcommand{\vecnorm}[2]{\left\| #1\right\|_{#2}}


\newcommand{\inprod}[2]{\ensuremath{\langle #1 , \, #2 \rangle}}


\newcommand{\Exs}{\ensuremath{{\mathbb{E}}}}
\newcommand{\Prob}{\ensuremath{{\mathbb{P}}}}




\newtheoremstyle{named}{}{}{\itshape}{}{\bfseries}{.}{.5em}{\thmnote{#3's }#1}
\theoremstyle{named}

\theoremstyle{plain}

\newtheorem{theorem}{Theorem}

\newtheorem{lemma}{Lemma}

\newtheorem{corollary}{Corollary}
\newtheorem{definition}{Definition}

\newtheorem{assumption}{Assumption}

\newlength{\widebarargwidth}
\newlength{\widebarargheight}
\newlength{\widebarargdepth}

\makeatletter
\long\def\@makecaption#1#2{
        \vskip 0.8ex
        \setbox\@tempboxa\hbox{\small {\bf #1:} #2}
        \parindent 1.5em  
        \dimen0=\hsize
        \advance\dimen0 by -3em
        \ifdim \wd\@tempboxa >\dimen0
                \hbox to \hsize{
                        \parindent 0em
                        \hfil
                        \parbox{\dimen0}{\def\baselinestretch{0.96}\small
                                {\bf #1.} #2
                                }
                        \hfil}
        \else \hbox to \hsize{\hfil \box\@tempboxa \hfil}
        \fi
        }
\makeatother


\long\def\comment#1{}
\definecolor{battleshipgrey}{rgb}{0.52, 0.52, 0.51}
\definecolor{darkgray}{rgb}{0.66, 0.66, 0.66}
\definecolor{darkgreen}{rgb}{0.0, 0.2, 0.13}
\definecolor{darkspringgreen}{rgb}{0.09, 0.45, 0.27}
\definecolor{dukeblue}{rgb}{0.0, 0.0, 0.61}
\definecolor{olivedrab7}{rgb}{0.24, 0.2, 0.12}
\definecolor{darkblue}{rgb}{0.0, 0.0, 0.55}
\definecolor{darkscarlet}{rgb}{0.34, 0.01, 0.1}
\definecolor{candyapplered}{rgb}{1.0, 0.03, 0.0}
\definecolor{ao(english)}{rgb}{0.0, 0.5, 0.0}
\definecolor{applegreen}{rgb}{0.55, 0.71, 0.0}




\newcommand{\trueMat}{\ensuremath{\mathbf{X}^*}}
\newcommand{\fitMat}{\ensuremath{\mathbf{F}}}
\newcommand{\senseMat}{\ensuremath{\mathbf{A}}}
\newcommand{\matA}{\ensuremath{\mathbf{A}}}
\newcommand{\matB}{\ensuremath{\mathbf{B}}}

\newcommand{\Umat}{\ensuremath{\mathbf{U}}}
\newcommand{\Vmat}{\ensuremath{\mathbf{V}}}
\newcommand{\Ucoeff}{\ensuremath{\mathbf{S}}}
\newcommand{\Vcoeff}{\ensuremath{\mathbf{T}}}
\newcommand{\trueUcoeff}{\ensuremath{\mathbf{D_S^*}}}
\newcommand{\trueVcoeff}{\ensuremath{\mathbf{D_T^*}}}
\newcommand{\objFun}{\ensuremath{\mathcal{L}}}
\newcommand{\Id}{\ensuremath{\mathbf{I}}}

\newcommand{\popUup}{\ensuremath{\mathcal{M}_{\Ucoeff}}}
\newcommand{\popVup}{\ensuremath{\mathcal{M}_{\Vcoeff}}}

\newcommand{\bA}{\ensuremath{\mathbf{A}}}
\newcommand{\bB}{\ensuremath{\mathbf{B}}}
\newcommand{\bF}{\ensuremath{\mathbf{F}}}
\newcommand{\bG}{\ensuremath{\mathbf{G}}}
\newcommand{\bU}{\ensuremath{\mathbf{U}}}
\newcommand{\bu}{\ensuremath{\mathbf{u}}}
\newcommand{\bX}{\ensuremath{\mathbf{X}}}
\newcommand{\bx}{\ensuremath{\mathbf{x}}}
\newcommand{\bz}{\ensuremath{\mathbf{z}}}
\newcommand{\Vcal}{\ensuremath{\mathcal{V}}}

\begin{document}
\begin{center}
{\bf{\LARGE{On the computational and statistical complexity of over-parameterized matrix sensing}}} 

\vspace*{.2in}
{\large{
\begin{tabular}{cccc}
Jiacheng Zhuo$^{\dagger}$ & Jeongyeol Kwon$^{\flat}$ & Nhat Ho$^{\diamond}$ & Constantine Caramanis$^{\flat}$ \\
\end{tabular}
}}

\vspace*{.2in}
\begin{tabular}{c}
Department of Computer Science, University of Texas at Austin$^\dagger$, \\
Department of Electrical and Computer Engineering, University of Texas at Austin$^\flat$ \\
Department of Statistics and Data Sciences, University of Texas at Austin$^\diamond$
\end{tabular}

\today

\vspace*{.2in}

\begin{abstract}
We consider solving the low rank matrix sensing problem with Factorized Gradient Descend (FGD) method when the true rank is unknown and over-specified, which we refer to as over-parameterized matrix sensing.
If the ground truth signal $\bX^* \in \mathbb{R}^{d*d}$ is of rank $r$, but we try to recover it using $\fitMat \fitMat^\top$ where $\fitMat \in \mathbb{R}^{d*k}$ and $k>r$, the existing statistical analysis falls short, due to a flat local curvature of the loss function around the global maxima.
By decomposing the factorized matrix $\fitMat$ into separate column spaces to capture the effect of extra ranks, we show that $\vecnorm{\fitMat_t \fitMat_t - \trueMat}{F}^2$ converges to a statistical error of $\tilde{\mathcal{O}} \parenth{k d \sigma^2/n}$ after $\tilde{\mathcal{O}}(\frac{\sigma_{r}}{\sigma}\sqrt{\frac{n}{d}})$ number of iterations where $\fitMat_t$ is the output of FGD after $t$ iterations, $\sigma^2$ is the variance of the observation noise, $\sigma_{r}$ is the $r$-th largest eigenvalue of $\trueMat$, and $n$ is the number of sample.
Our results, therefore, offer a comprehensive picture of the statistical and computational complexity of FGD for the over-parameterized matrix sensing problem. 
\end{abstract}
\end{center}

\section{Introduction} 
\label{sec:introduction}

We consider the low rank matrix sensing problem: we are given $n$ i.i.d. observations $\{\matA_i, y_i\}_{i=1}^n$ from the data generating model $y_i = \angles{\matA_i, \trueMat} + \epsilon_i$, where $\matA_i \in \mathbb{R}^{d*d}$ is a symmetric random sensing matrix, $\trueMat \in \mathbb{R}^{d*d}$ is the target rank $r$ symmetric matrix we want to recover, and $\epsilon_i$ is a zero-mean sub-Gaussian noise with variance proxy $\sigma^2$. The low rank matrix sensing problem has found applications in various scenarios, such as multi-task regression, vector auto-regressive process, image processing, metric embedding, quantum tomography, and so on~\citep{candes2011tight, negahban2011estimation, recht2010guaranteed, jain2013low, gross2010quantum, candes2011robust, waters2011sparcs, kalev2015quantum}.
One common approach to recover a low-rank matrix $\bX \in \mathbb{R}^{d * d}$ is to solve the following optimization problem:
\begin{align}
\label{formulation:convex}
    \underset{\bX: \bX \succeq 0, \text{rank}(\bX) \leq k}{\arg \min} \frac{1}{4n} \sum_{i=1}^n \left( y_i - \angles{\matA_i, \bX}  \right)^2,
\end{align}
where $k$ is a chosen rank based on domain knowledge of the data. This problem can be solved by relaxing the rank constraint to nuclear norm constraint \citep{recht2010guaranteed, candes2011tight}.
However for computational benefits, it is common to reformulate this as a non-convex problem by introducing $\bF\in \mathbb{R}^{d*k}$ such that $\bX = \bF \bF^\top$ and solving the transformed problem \citep{bhojanapalli2016dropping, chen2015fast, jain2013low, hardt2014understanding}
\begin{align}
\label{formulation:non-convex}
    \underset{\bF: \bF \in \mathbb{R}^{d*k}}{\arg \min} \quad \objFun(\fitMat) := \frac{1}{4n} \sum_{i=1}^n \left( y_i - \angles{\matA_i, \bF \bF^\top}  \right)^2.
\end{align}
Solving this formulation directly with gradient descent method on the matrix $\bF$ is usually referred to as the Factorized Gradient Descent (FGD) method, which is given by:
\begin{align}
\label{eqn:factored-gradient-method}
    \fitMat_{t+1} = \fitMat_t - \eta \bG_t^n, \quad \text{where} \quad
    \mathbf{G}_t^n = \grad \objFun(\fitMat_{t}) = \frac{1}{n} \sum_{i=1}^n \left(  \angles{\matA_i, \fitMat_t \fitMat_t^{\top} - y_i} \right)\matA_i  \fitMat_t,
\end{align}
where $\eta$ is the step size and $\bG_t^n$ denotes the gradient evaluated at iteration $t$ with $n$ i.i.d. samples.

When the specified rank $k$ matches the ground truth rank $r$, namely, the true rank $r$ is known, FGD converges linearly to a statistical error \citep{chen2015fast}, and the statistical error is minimax optimal up to log factors \citep{candes2011tight}.
However, in the real world applications, it is often a big challenge to correctly identify the true rank $r$, and hence the practitioners tend to over-specify the rank. When the rank is over-specified (i.e. $k>r$), we refer to that setting as the \emph{over-parameterized matrix sensing} problem. 





The over-parameterized matrix sensing comes with many challenges, and to the best of our knowledge, none of the existing works offer a complete understanding about the computational and statistical performance of FGD under this setting.
First and foremost, we are faced a degenerate Hessian around the global maxima caused by the over-specification of the rank. 
Hence previous works with known rank settings~\citep{bhojanapalli2016dropping, zheng2016convergence, tu2016low} are no longer applicable since they rely on local strong convexity around the global maxima.
The analysis of~\citet{chen2015fast} is also void, because with over-parameterization the ratio of the first and the $k$-th eigenvalue of $\trueMat$ is infinity. 
\citet{li2018algorithmic} focus on the implicit regularization effect with early stopping, and their analysis is limited to the setting where there is no observation noise ($\epsilon_i = 0$), $k = d$ and they can only guarantee recovery within a lower and upper bounded iteration range (as in their Theorem 1).
In summary, despite the current progress on the matrix sensing problem, the following questions remain unclear: \\
\emph{If we solve the over-parameterized matrix sensing problem with FGD, (1) what is the achievable statistical error? (2) and how fast we can recover a target matrix $X^*$?} 

\vspace{0.5 em}
\noindent
\textbf{Contribution.}
This paper offers a comprehensive analysis of over-parameterized low-rank matrix sensing with the FGD method. We show that $\vecnorm{\fitMat_t\fitMat_t^\top - \trueMat}{F}^2$ converges to a final statistical error of $\tilde{\mathcal{O}} \parenth{{kd \sigma^2 } /n}$ after $\tilde{\mathcal{O}}(\frac{\sigma_{r}}{\sigma}\sqrt{\frac{n}{d}})$ number of iterations where $\sigma_r$ and $\sigma$ respectively the $r$-th largest eigenvalue of $\trueMat$ and the standard deviation of the observation noise. It is different from the computational and statistical behavior of FGD when the true rank is known, namely, the FGD converges to a radius of convergence $\tilde{\mathcal{O}}(r d \sigma^2/ n)$ around the true matrix $\trueMat$ after $\mathcal{O}(\log(\frac{\sigma_{r}}{\sigma_{1}} \cdot \frac{n}{d}))$ iterates~\citep{chen2015fast} where $\sigma_{1}$ is the largest eigenvalue of $\trueMat$. Since we assume no {\it a priori} knowledge of true rank $r$, the statistical error $\tilde{\mathcal{O}} \parenth{{kd \sigma^2 } /n}$ is also minimax optimal up to logarithmic factors \citep{candes2011tight}. Furthermore, the number of iterations $\tilde{\mathcal{O}}(\frac{\sigma_{r}}{\sigma}\sqrt{\frac{n}{d}})$ is needed in the over-parameterized setting as the local curvature of the loss function~\eqref{formulation:convex} around the global maxima is not quadratic and therefore the FGD only converges sub-linearly to the global maxima; see the simulations in Figure~\ref{fig:motivation-sim} for an illustration. Finally, when $\sigma=0$, i.e., in the noiseless case, we can guarantee the exact recovery similar to when we correctly specify the rank~\citep{chen2015fast}.

\subsection{Related Work}

\paragraph{Works related to Matrix Sensing.}
Early works on matrix sensing often perform a semidefinite programming (SDP) relaxation, and replace the nonconvex rank constraint with a convex constraint based on the trace norm or nuclear norm; see for example~\citep{candes2011tight, recht2010guaranteed, negahban2011estimation, chen2013low} and the references therein.
\citet{candes2011tight} show that for any estimator $\hat{\bX}$ based on $\braces{\matA_i, y_i}_{i=1}^{n}$ observations, $\vecnorm{\hat{\bX} - \bX^*}{F}^2 \geq \frac{dr}{n}\sigma^2$,
where $\bX^*$ is the ground truth rank $r$ matrix that we want to recover, and $\sigma$ is the standard deviation of the (sub)-Gaussian observation noise (see Section \ref{section:notations} for details).
This convex relaxation approach is nearly optimal in this sense.
Although we can theoretically solve this convex problem in polynomial time, the computational cost is often prohibitively high for large scale problems.
For example, if we solve this SDP problem with the classical interior point method, the computational cost is roughly $\mathcal{O}(d^6)$ \citep{boyd2004convex, chen2015fast}
Although recently some tailored algorithms \citep{zheng2015convergent, tu2016low} are developed to solve this convex problem, their computational complexity is at least $\mathcal{O}(d^3)$ since this SDP involves multiplication of two matrices in $\mathbb{R}^{d*d}$.
This computational overhead motivates the study of FGD method.
The low rank matrix sensing problem is tightly connected to the low rank matrix completion problem, since they have the same population update when solved by (factorized) gradient method, and they can often be analyzed by very similar techniques \citep{negahban2012restricted, koltchinskii2011nuclear, chi2019nonconvex}.

\paragraph{Works related to FGD.}
The idea of factorizing the low rank matrix dates back to~\cite{burer2003nonlinear, burer2005local}. 
\citet{bhojanapalli2016dropping} characterize the computational convergence behavior of FGD method for general convex and strongly convex function using the restricted strong convexity argument.
However, such analysis cannot be converted into statistical analysis.
\citet{chen2015fast} offer a general theoretical framework for understanding FGD method from both computational and statistical perspective. Specifically, they show that with suitable initialization, FGD converges geometrically up to a statistical precision. However, their analysis only works when we know the ground truth rank ($k = r$).

In this work we focus on local convergence as this is the crux in statistical analysis (see~\citep{chen2015fast}).
Initialization condition can be achieved via spectral methods (see~\citep{bhojanapalli2016dropping, tu2016low, zheng2016convergence}).
Moreover, the works by \cite{bhojanapalli2016global}, \cite{ge2016matrix}, and~\cite{Zhang_Sharp_2019} show that reformulation~\eqref{formulation:non-convex} does not have any spurious local minima from optimization's perspective, indicating that it is possible to extend our analysis to random initialization.

Recently,~\cite{li2018algorithmic} look into the implicit regularization effect in the learning of over-parameterized matrix factorization with FGD.
They show that if there is no observation noise ($\epsilon_i = 0$) and $k = d$, FGD tends to first recover the majority part of the true signal (that is of rank $r$) due to the implicit regularization effect of the FGD method.
However their analysis can only address the noiseless case, and can not be extended to the more realistic setting when the observation is noisy, i.e., $\epsilon_i \neq 0$.
Moreover, they only guarantee recovery within an iteration lower bound and upper bound (e.g., as in the Theorem 1 in \citet{li2018algorithmic}, the number of iterations to reach the target accuracy has an upper bound and lower bound), which is not in line with the common notion of convergence and statistical rate.
We focus on the statistical rate, which means we want to understand the algorithm behavior if run the algorithm for infinitely long.
(Further discussion can be found in Section \ref{sec:discussion}).


\paragraph{Localized analysis for degenerate landscape.} When the curvature around the local optimum degenerates, first-order methods such as gradient descent slow down due to vanishing gradients as the estimator gets closer to the local optimum. This phenomenon is reported in various optimization problems with degenerate landscapes in weakly separated mixture of distributions \citep{Raaz_Ho_Koulik_2018, kwon2020minimax}. We can observe the same phenomenon when the rank is over-specified for low-rank matrix factorization problems.


The localization technique is a powerful analysis tool to handle degenerate landscapes with a tight statistical rate. This technique has been used widely in the empirical process theory literature~\citep{Vaart_Wellner_2000}. We now see how the localization argument can be applied for a low-rank matrix sensing when we over-specify the rank. In result we obtain a tight statistical rate of FGD which matches the known information-theoretical lower bound for this problem even if we over-specify the rank.

\subsection{Motivating Simulations}
\label{sec:motivating-sim}

\begin{figure}[t]
\centering
\begin{subfigure}[t]{0.48\textwidth}
\includegraphics[width=1\textwidth]{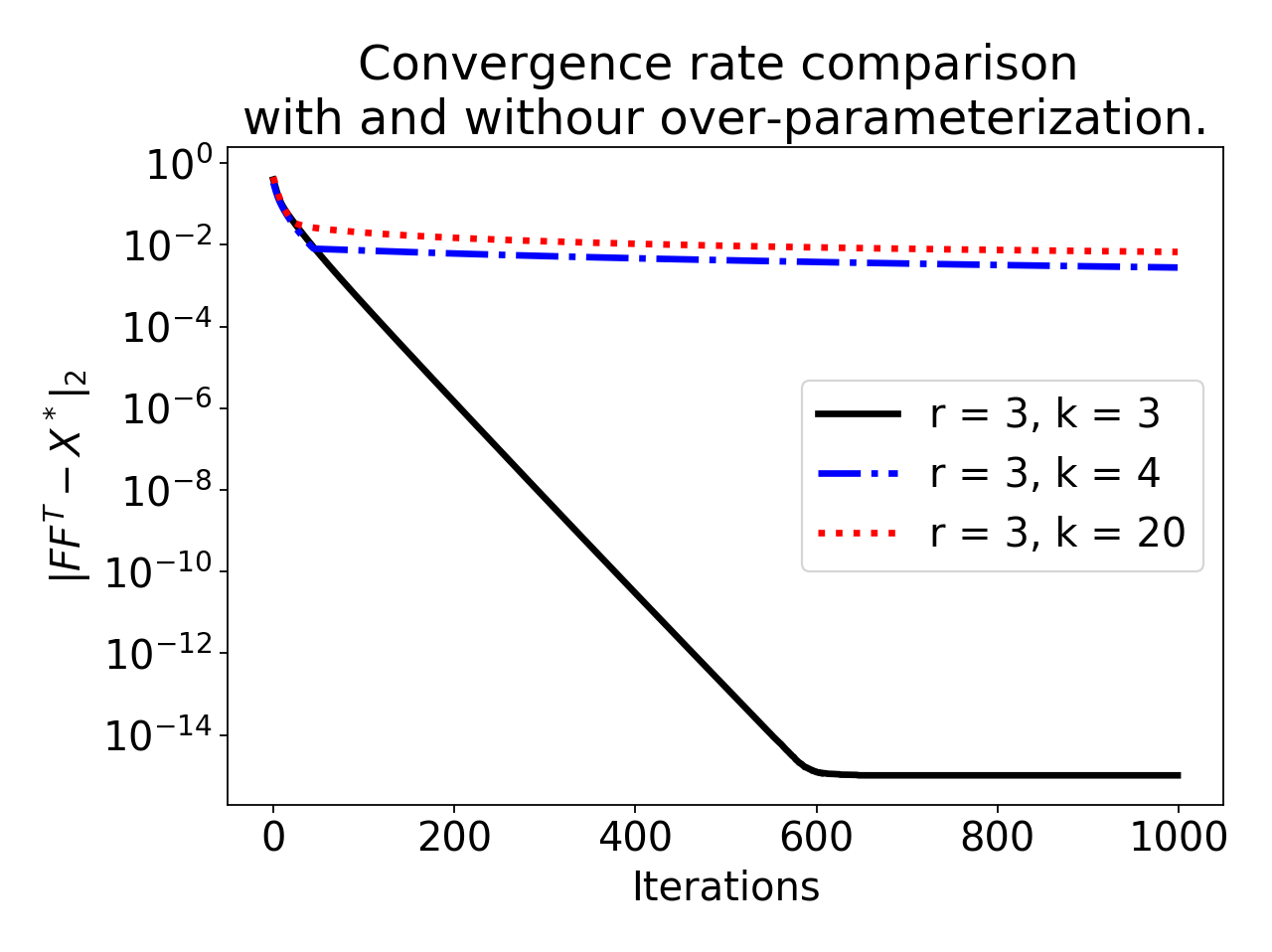}
\caption{}
\label{fig:motivation-sim-a}
\end{subfigure}
\begin{subfigure}[t]{0.48\textwidth}
\includegraphics[width=1\textwidth]{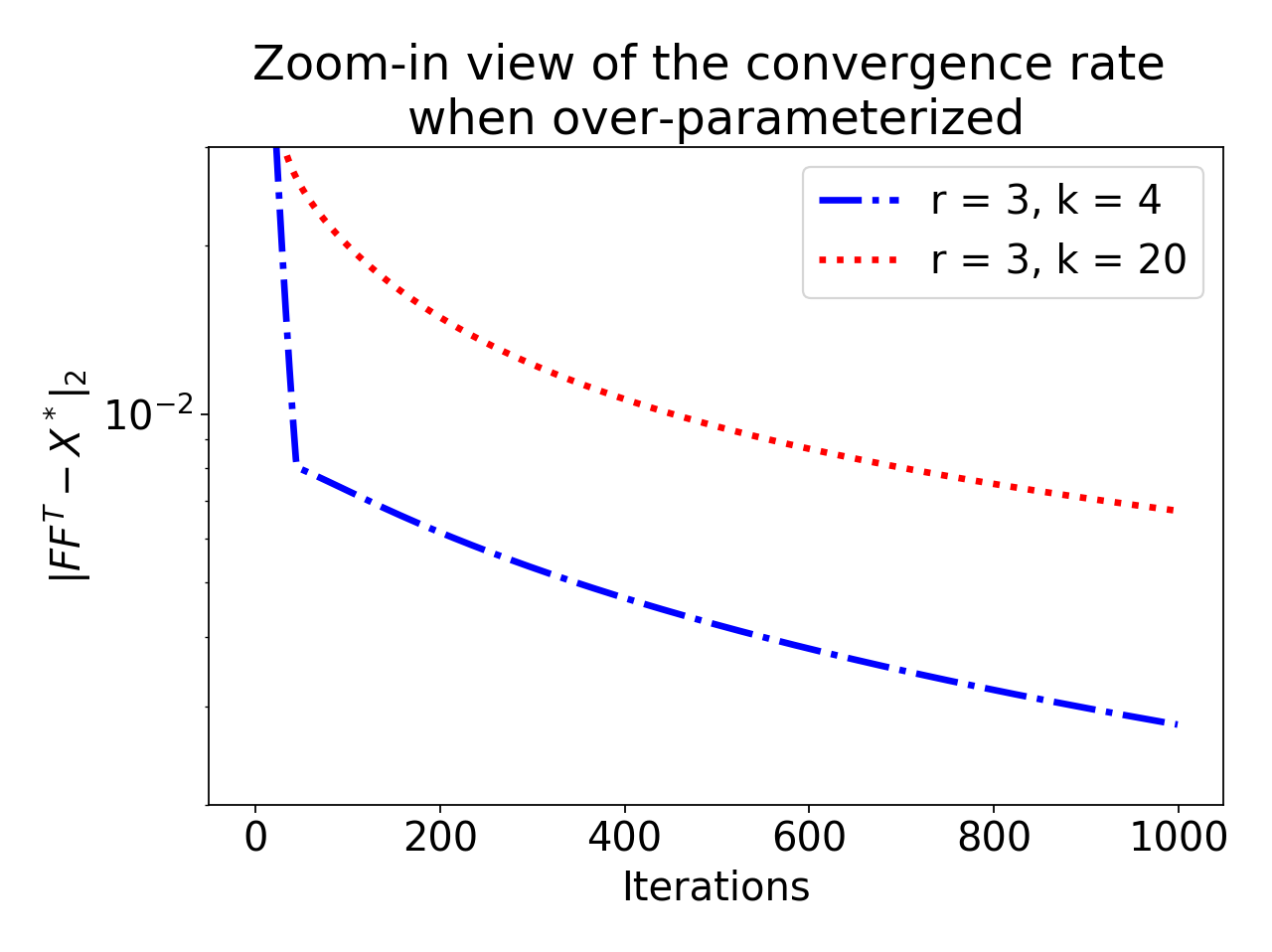}
\caption{}
\label{fig:motivation-sim-b}
\end{subfigure}
\caption{
\textbf{The motivating simulations.}
(a) When we correctly specify the rank (i.e., $k=r=3$), the FGD method converges geometrically towards machine precision. But when $k>r$, FGD only converges sub-linearly.
(b) A zoom-in view of the convergence rate shows that, FGD might first converge geometrically, and then converge sub-linearly.
}
\label{fig:motivation-sim}
\end{figure}


In the simulations, we consider the dimension $d = 20$,  the true rank $r = 3$, and the number of samples $n = 200$.
We first generate random orthonormal matrices $\Umat$ and $\Vmat$ such that the union of their column spaces is $\mathbb{R}^d$.
We set $\trueUcoeff$ to be a diagonal matrix, with its $(1,1), (2,2), (3,3)$ entries be $1, 0.9, 0.8$ respectively, and zero elsewhere.
Hence $\trueMat = \Umat \trueUcoeff \Umat^\top$.
The upper triangle entries of the sensing matrices $\bA_i$ are sampled from standard Gaussian distribution, and we fill the lower triangle entries accordingly such that $\bA_i$ are symmetric.
We further assume that there is no observation noise, so that we have a better understanding of the convergence behavior of the algorithm.

Let $\braces{\fitMat_t}_t$ be the sequence generated by the FGD method as in equation~\eqref{eqn:factored-gradient-method} with $\eta = 0.1$.
The simulation results are shown in Figure \ref{fig:motivation-sim}.
When we correctly specify the rank (i.e. $k=r=3$), the FGD method converge geometrically towards machine precision.
However, even if we increase the specified rank by $1$, FGD will end up with a much slower convergence rate.
A zoom-in view of the convergence rate shows that, FGD might first converge geometrically, and then converge sub-linearly.
This phenomenon is not captured by the recent works about FGD \citep{li2018algorithmic, chen2015fast}.
What exactly is the convergence rate? And what about the statistical error? These are the questions that we want to answer in this work.
\subsection{Organization}
The remainder of the paper is organized as follows. In Section~\ref{sec:main_result}, we present the convergence rate of the FGD iterates under the over-parameterized matrix sensing setting. Then, we present the proof sketch of the results in Section~\ref{sec:proof_main_result}. The detailed proofs of the main results are deferred to the Appendices while we conclude the paper with a few discussions in Section~\ref{sec:discussion}.
\subsection{Notations}
\label{section:notations}
In the paper, 
we use bold lower case letters to represent vectors, such as $\bx$, and bold upper case letters to represent matrices, such as $\bX$.
When $\bX$ is a matrix, we use $X_{ij}$ to represent the element on the $i$-th row and $j$-th column of $\bX$, unless otherwise specified.
We use $\angles{\cdot, \cdot}$ for matrix inner product. For example $\angles{\matA, \bX} = \sum_{ij} A_{ij} X_{ij} $.
We denote $\ceil{ x}$ as the smallest integer greater than or equal to $x$ for any $x \in \Rspace$.
We write $\bA \succ \bB$ (respectively $\bA \succeq \bB$) if $\bA - \bB$ is positive definite (respectively positive semidefinite) for square matrices  $\bA$ and $\bB$.
We write $\braces{\bA_i}_{i=1}^t$ to represent the sequence $\braces{\bA_1, \bA_2, ..., \bA_t}$.
We also use the short hand $\braces{\bA_i}_{i}$ to represent $\braces{\bA_1, \bA_2, ...}$
We use $\sigma_1$ and $\sigma_r$ to denote the first eigenvalue and the $r$-th eigenvalue of $\trueMat$ respectively, which is the ground truth rank $r$ matrix that we want to recover. And we use $\kappa$ to denote the conditional number: $\kappa := \sigma_1 / \sigma_r$.

We also use the standard asymptotic complexity notation.
Specifically, $f(x) = \mathcal{O}(g(x))$ implies $\abss{f(x)} \leq C \abss{g(x)}$ for some constant $C$ and for large enough $x$, $f(x) = \Omega(g(x))$ implies $\abss{f(x)} \geq C \abss{g(x)}$ for some constant $C$ and for large enough $x$, and $f(x) = \Theta(g(x))$ implies $C_1\abss{g(x)} \leq \abss{f(x)} \leq C_2\abss{g(x)}$ for some constant $C_1, C_2$ and for large enough $x$.
When  $\log$ factors are omitted, we use $\tilde{\mathcal{O}}$, $\tilde{\Omega}$, $\tilde{\Theta}$ to represent ${\mathcal{O}}$, ${\Omega}$, ${\Theta}$ respectively.
\begin{definition}
\textbf{(Sub-Gaussian Random Variable).} We call a random variable $X$ with mean $\mu$ sub-Gaussian with variance proxy $\sigma > 0$ if $\forall \lambda \in \mathbb{R}$,
$
    \Exs \bracket{\exp\parenth{\lambda\parenth{X - \mu}}} \leq e^{\parenth{\sigma^2\lambda^2 / 2}}
$.
\end{definition}
\begin{definition}
\textbf{(Sub-Gaussian Sensing Matrix).} We call a matrix $\bA$ a sub-Gaussian sensing matrix if it is sampled as follow:
each upper triangle entry ($i < j$) $A_{ij}$ is sampled i.i.d. from a zero-mean sub-Gaussian distribution with variance proxy $1$, each lower triangle entry ($i > j$) $A_{ij} = A_{ji}$, and the diagonal entries are sample from i.i.d. from a zero-mean sub-Gaussian distribution with variance proxy $1$.
\end{definition}


\section{Main Result}
\label{sec:main_result}


Before we present our main result, we formally introduce the decomposition notation for $\trueMat$.
Let the eigen-decomposition of $\trueMat$ (eigenvalues ordered by the absolute values) be given by
\begin{align*}
    \mathbf{X}^* = \left[\Umat \;\; \Vmat \right] \begin{bmatrix} \trueUcoeff & 0 \\ 0 & \trueVcoeff \end{bmatrix} \left[\Umat \;\; \Vmat \right]^\top = \Umat \trueUcoeff \Umat^\top + \Vmat \trueVcoeff \Vmat^\top,
\end{align*}
where $\Umat \in \mathbb{R}^{d*r}$, $\Vmat \in \mathbb{R}^{d*(d-r)}$, $\trueUcoeff \in \mathbb{R}^{r *r}$, $\trueVcoeff \in \mathbb{R}^{(d-r)*(d-r)}$.
Without loss of generality we assume that the both $\Umat$ and $\Vmat$ are orthonormal and $\Umat^\top \Vmat = 0$ (i.e. $\Umat$ and $\Vmat$ together span the entire $\mathbb{R}^d$).
Denote $\sigma_1$ be the largest value in  $\trueUcoeff$, $\sigma_r$ be the smallest value in $\trueUcoeff$, and $\sigma_{r+1}$ be the largest value in $\trueVcoeff$.
Since we assume $\trueMat$ is of approximately rank $r$, there is a non-trivial gap between $\sigma_r$ and $\sigma_{r+1}$. In this section, we assume that $\sigma_{r+1} \ll \sigma_r$. 
Since the union of the column space of $\Umat$ and $\Vmat$ spans the entire $\mathbb{R}^d$, then for any $\fitMat_t \in \mathbb{R}^{d*k}$, there exits  matrices $\Ucoeff_{t} \in \mathbb{R}^{r*k}$ and $\Vcoeff_{t}\in \mathbb{R}^{(d-r)*k}$ such that 
\begin{align*}
  	\fitMat_{t} = \Umat \Ucoeff_{t} + \Vmat \Vcoeff_{t}.
\end{align*} 
As $t$ goes to infinity, we hope that $\Ucoeff_{t} \Ucoeff_{t}^\top$ converges to $\trueUcoeff$, $\Vcoeff_{t} \Vcoeff_{t}^\top$ converges to $\trueVcoeff$, and $\Ucoeff_t \Vcoeff_t^\top$ and $\Vcoeff_t \Ucoeff_t^\top$ converges to zero, and hence $\fitMat_t \fitMat_t^\top = \Umat \Ucoeff_{t} \Ucoeff_{t}^\top \Umat^\top + \Vmat \Vcoeff_{t} \Vcoeff_{t}^\top \Vmat^\top + \Umat \Ucoeff_t \Vcoeff_t^\top \Vmat^\top + \Vmat\Vcoeff_t \Ucoeff_t^\top \Umat^\top$ converges to $\trueMat$.

We introduce the decomposition and study the convergence of $\Ucoeff_{t} \Ucoeff_{t}^\top$, $\Vcoeff_{t} \Vcoeff_{t}^\top$, and $\Ucoeff_t \Vcoeff_t^\top$ separately.
This decomposition technique is essential, since we can then bypass some technical difficulties when we over-specify the rank. For example we do not have to establish the uniqueness (up to rotational ambiguity) of the optimal solution as in the Lemma 1 in \cite{chen2015fast}.
Moreover, this gives more insights about which part is the computational and/or statistical bottleneck. As we will see shortly (both in Theorem~\ref{theorem:sample-convergence-T-small} and Lemma \ref{lemma:sample-contraction-T-small}), it is the convergence of $\braces{\vecnorm{\Vcoeff_t \Vcoeff_t^\top - \trueVcoeff}{2}}_t$ that slows down the entire process of the convergence.
Similar decomposition technique is also employed in the work of \cite{li2018algorithmic}.

Here we focus on the local convergence of FGD method within the following basin of attraction:
\begin{assumption}
\label{assumption:init}
\textbf{(Initialization assumption)}
\begin{align}
    \vecnorm{\fitMat_0 \fitMat_0^\top - \trueMat}{2} \leq \rho \sigma_r, \quad \text{for} \quad \rho \leq 0.07.
\end{align}
\end{assumption}
Note that $0.07$ is a universal constant and is chosen for the ease of presentation.
Note that one can use spectrum method to achieve this initialization \citep{chen2015fast, bhojanapalli2016dropping, tu2016low}. Connecting the initialization condition to our decomposition strategy, we need to control $\max \left\{ \vecnorm{\trueVcoeff - \Vcoeff_0 \Vcoeff_0^\top}{2} ,  \vecnorm{\trueUcoeff - \Ucoeff_0 \Ucoeff_0^\top}{2} , \vecnorm{ \Ucoeff_0 \Vcoeff_0^\top}{2} \right\}$ in our analysis.
The following Lemma establish the connection between what we need in the analysis and Assumption \ref{assumption:init}.
\begin{lemma}
\label{lemma:init}
If $\vecnorm{\fitMat_0 \fitMat_0^\top - \trueMat}{2} \le 0.7 \rho\sigma_r$, then 
$$
    \max \left\{ \vecnorm{\trueVcoeff - \Vcoeff_0 \Vcoeff_0^\top}{2} ,  \vecnorm{\trueUcoeff - \Ucoeff_0 \Ucoeff_0^\top}{2} , \vecnorm{ \Ucoeff_0 \Vcoeff_0^\top}{2} \right\} \leq \rho \sigma_r.
$$
\end{lemma}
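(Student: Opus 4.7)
The plan is to reduce everything to a block-matrix spectral-norm comparison via the orthogonal change of basis defined by $[\Umat\;\Vmat]$. Since $\Umat$ and $\Vmat$ are orthonormal with $\Umat^\top\Vmat=0$ and together span $\real^d$, the matrix $Q \defn [\Umat\;\Vmat] \in \real^{d\times d}$ is orthogonal, so $Q^\top Q = \Id$ and spectral norms are preserved under conjugation by $Q$.

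First I would write out the decomposition explicitly. Using $\fitMat_0 = \Umat\Ucoeff_0 + \Vmat\Vcoeff_0$ (which defines $\Ucoeff_0 = \Umat^\top \fitMat_0$ and $\Vcoeff_0 = \Vmat^\top \fitMat_0$ uniquely because $Q$ is orthogonal) and $\trueMat = \Umat \trueUcoeff \Umat^\top + \Vmat \trueVcoeff \Vmat^\top$, expand $\fitMat_0\fitMat_0^\top - \trueMat$ and compute
\begin{align*}
Q^\top\parenth{\fitMat_0\fitMat_0^\top - \trueMat}Q
= \begin{bmatrix} \Ucoeff_0\Ucoeff_0^\top - \trueUcoeff & \Ucoeff_0\Vcoeff_0^\top \\ \Vcoeff_0\Ucoeff_0^\top & \Vcoeff_0\Vcoeff_0^\top - \trueVcoeff \end{bmatrix}.
\end{align*}
Since $Q$ is orthogonal, the spectral norm of the left-hand side equals $\vecnorm{\fitMat_0\fitMat_0^\top - \trueMat}{2} \le 0.7\,\rho\,\sigma_r$.

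Next I would invoke the standard fact that any submatrix of a matrix has spectral norm bounded by the spectral norm of the whole: for any matrix $M$ partitioned as $M=\begin{bmatrix}A & B\\ C & D\end{bmatrix}$, one has $\max\{\opnorm{A},\opnorm{B},\opnorm{C},\opnorm{D}\}\le \opnorm{M}$, which follows by padding unit test vectors with zeros on appropriate coordinate blocks. Applied to the block matrix above, this gives simultaneously
\begin{align*}
\vecnorm{\Ucoeff_0\Ucoeff_0^\top - \trueUcoeff}{2},\;
\vecnorm{\Vcoeff_0\Vcoeff_0^\top - \trueVcoeff}{2},\;
\vecnorm{\Ucoeff_0\Vcoeff_0^\top}{2} \;\le\; 0.7\,\rho\,\sigma_r \;\le\; \rho\,\sigma_r,
\end{align*}
which is precisely the conclusion.

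There is no real obstacle here; this is essentially bookkeeping. The only thing worth being careful about is that the argument genuinely uses $\Umat\Umat^\top + \Vmat\Vmat^\top = \Id$ (i.e.\ full span), otherwise the change of basis would introduce a projected residual and the bounds on the off-diagonal block $\Ucoeff_0\Vcoeff_0^\top$ would require a separate estimate. Under the WLOG assumption stated just above Lemma~\ref{lemma:init}, this span property holds, so the block-matrix reduction is clean and the $0.7$ slack in the hypothesis gives room to spare.
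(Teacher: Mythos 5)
Your proof is correct, and it takes a genuinely different and in fact tighter route than the paper's. The paper argues term by term: for the two diagonal blocks it writes the operator norm variationally, lifts the test vector to $\real^d$, and argues the maximizer must lie entirely in the relevant subspace; for the cross term $\Ucoeff_0\Vcoeff_0^\top$ it uses a symmetrization identity, $\vecnorm{\Ucoeff_0\Vcoeff_0^\top}{2} = \tfrac{1}{2}\vecnorm{\Umat\Ucoeff_0\Vcoeff_0^\top\Vmat^\top + \Vmat\Vcoeff_0\Ucoeff_0^\top\Umat^\top}{2}$, followed by a triangle inequality against $\fitMat_0\fitMat_0^\top - \trueMat$ and the two diagonal contributions. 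That last step loses a constant (the paper ends up with roughly $\tfrac{1}{2}(0.7 + 0.7\sqrt{2})\rho\sigma_r \approx 0.85\,\rho\sigma_r$), which is exactly why the hypothesis carries the slack factor $0.7$. Your argument — conjugate by the orthogonal matrix $Q = [\Umat\;\Vmat]$, read off the $2\times 2$ block structure, and apply the fact that every block of a matrix has spectral norm at most that of the whole — handles all three quantities in one stroke and yields the stronger bound $0.7\,\rho\,\sigma_r$ for each, showing the $0.7$ slack is actually unnecessary under this route. Both proofs rest on the same structural facts ($Q$ orthogonal, $\Umat^\top\Vmat = 0$, full span), but yours packages them more cleanly; the paper's variational phrasing is essentially an unrolled, blockwise version of your submatrix inequality, with an avoidable loss on the off-diagonal block.
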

\noindent
We leave the proof of Lemma~\ref{lemma:init} to Appendix~\ref{appendix:proof-init}.
Now we are ready to present our main result.


\begin{theorem}
\label{theorem:sample-convergence-T-small}
\textbf{(Main result)}
Assume the following settings: (1) $\vecnorm{\trueVcoeff}{2} < \sqrt{\frac{d \log d}{n}} \sigma $;
(2) we have good initialization as in Assumption~\ref{assumption:init}; 
(3) the sample size $n > C_1 k \kappa^2 d \log^3 d \cdot \max(1, \sigma^2/\sigma_r^2)$ for some universal constant $C_1$;
(4)  the step size $\stepsize = \frac{1}{100\sigma_1}$,
(5) $\bA_i$s are sub-Gaussian sensing matrices.
Let $\braces{\fitMat_t}_t$ be the sequence generated by the FGD algorithm as in Equation \ref{eqn:factored-gradient-method}.
Then, the following holds:
\begin{enumerate}
    \item[(a)] After $t >  \ceil{2\log \frac{\sigma_r}{\epsilon_{comp}}}$ steps,
    $\max \braces{ \vecnorm{\Ucoeff_{t}\Ucoeff_{t}^\top - \trueUcoeff}{2}, \vecnorm{\Ucoeff_t \Vcoeff_t^{\top}}{2}} < C \epsilon_{comp}$ for some universal constant $C$, where $\epsilon_{comp} = \sqrt{\frac{k \kappa^2 d \log d}{n} }  \sigma_r$.
    \item[(b)] After $t \geq \Theta\parenth{\frac{\sigma_1}{\epsilon_{stat}}}$ steps,
    $\max \braces{ \vecnorm{\Ucoeff_{t}\Ucoeff_{t}^\top - \trueUcoeff}{2}, \vecnorm{\Ucoeff_t \Vcoeff_t^{\top}}{2}, \vecnorm{\Vcoeff_t \Vcoeff_t^\top - \trueVcoeff}{2}} < C_1 \epsilon_{stat}$, and
    $\vecnorm{\fitMat_t \fitMat_t^\top - \trueMat}{2} \leq C_2 \epsilon_{stat}$ for some constants $C_1$ and $C_2$, where $\epsilon_{stat} : = \kappa \sqrt{\frac{ d \log d }{n}}\sigma$.
\end{enumerate}
\end{theorem}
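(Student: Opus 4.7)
The plan is to exploit the orthogonal decomposition $\fitMat_t = \Umat\Ucoeff_t + \Vmat\Vcoeff_t$ and to track the three error matrices $A_t \defn \Ucoeff_t\Ucoeff_t^\top - \trueUcoeff$, $B_t \defn \Vcoeff_t\Vcoeff_t^\top - \trueVcoeff$, and $C_t \defn \Ucoeff_t\Vcoeff_t^\top$ separately. Because $\Umat^\top\Vmat = 0$, the FGD update splits cleanly into $\Ucoeff_{t+1} = \Ucoeff_t - \eta\Umat^\top\bG_t^n$ and $\Vcoeff_{t+1} = \Vcoeff_t - \eta\Vmat^\top\bG_t^n$. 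Using the identity $\fitMat_t\fitMat_t^\top - \trueMat = \Umat A_t\Umat^\top + \Vmat B_t\Vmat^\top + \Umat C_t\Vmat^\top + \Vmat C_t^\top\Umat^\top$, the population gradient $\bG_t^{pop} \defn (\fitMat_t\fitMat_t^\top - \trueMat)\fitMat_t$ admits the tidy projections $\Umat^\top\bG_t^{pop} = A_t\Ucoeff_t + C_t\Vcoeff_t$ and $\Vmat^\top\bG_t^{pop} = B_t\Vcoeff_t + C_t^\top\Ucoeff_t$. I would first carry out the entire argument at the population level, then absorb finite-sample deviations through a concentration argument for sub-Gaussian sensing matrices that is localized to the current scale of $A_t, B_t, C_t$.

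\textbf{Part (a): geometric contraction in the signal subspace.} Assumption~\ref{assumption:init} together with Lemma~\ref{lemma:init} gives $\vecnorm{A_0}{2},\, \vecnorm{B_0}{2},\, \vecnorm{C_0}{2} \leq \rho\sigma_r$, from which one inductively preserves $\lambda_{\min}(\Ucoeff_t\Ucoeff_t^\top) \geq (1-\rho)\sigma_r$. Expanding $\Ucoeff_{t+1}\Ucoeff_{t+1}^\top$ to first order in $\eta$ yields
\begin{align*}
A_{t+1} \;=\; A_t - \eta\bigl(A_t\Ucoeff_t\Ucoeff_t^\top + \Ucoeff_t\Ucoeff_t^\top A_t\bigr) - 2\eta\, C_t C_t^\top + \mathcal{O}(\eta^2) + \text{(sample noise)},
\end{align*}
and a standard Lyapunov argument for the symmetric Sylvester operator gives $\vecnorm{A_{t+1}}{2} \leq (1 - c\eta\sigma_r)\vecnorm{A_t}{2} + \mathcal{O}\!\parenth{\eta\vecnorm{C_t}{2}^2} + \text{noise}$. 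The analogous expansion of $C_{t+1} = \Ucoeff_{t+1}\Vcoeff_{t+1}^\top$ has dominant term $-\eta\Ucoeff_t\Ucoeff_t^\top C_t$, producing the same factor $(1-c\eta\sigma_r)$ from the left. Iterating $\ceil{2\log(\sigma_r/\epsilon_{comp})}$ steps drives both $\vecnorm{A_t}{2}$ and $\vecnorm{C_t}{2}$ down to the intermediate floor $\epsilon_{comp}$; the $\sqrt{k}\kappa$ inside $\epsilon_{comp}$ is the cost of a uniform bound for the $d\times k$ gradient increment at this scale.

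\textbf{Part (b): sub-linear decay of the over-specified directions.} Once $\vecnorm{A_t}{2},\vecnorm{C_t}{2} \lesssim \epsilon_{comp}$, the $B$-block evolves essentially in isolation. The first-order expansion gives
\begin{align*}
B_{t+1} \;=\; B_t - \eta\bigl(\Vcoeff_t\Vcoeff_t^\top B_t + B_t\Vcoeff_t\Vcoeff_t^\top\bigr) - 2\eta\, C_t^\top C_t + \mathcal{O}(\eta^2) + \text{noise},
\end{align*}
and because $\vecnorm{\trueVcoeff}{2} < \epsilon_{stat}$ the first bracketed term collapses to $2\eta B_t^2$ up to lower-order corrections, producing the self-bounded sub-linear recursion $\vecnorm{B_{t+1}}{2} \leq \vecnorm{B_t}{2}\bigl(1 - c\eta\vecnorm{B_t}{2}\bigr) + \mathcal{O}(\epsilon_{stat}^2/\sigma_1)$. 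Telescoping $1/\vecnorm{B_t}{2}$ then gives $\vecnorm{B_t}{2} = \mathcal{O}(1/(\eta t))$ until the noise floor is reached, which requires $t = \Theta(\sigma_1/\epsilon_{stat})$ iterations given $\eta = 1/(100\sigma_1)$. The claimed final bound on $\vecnorm{\fitMat_t\fitMat_t^\top - \trueMat}{2}$ follows from the triangle inequality $\vecnorm{\fitMat_t\fitMat_t^\top - \trueMat}{2} \leq \vecnorm{A_t}{2} + \vecnorm{B_t}{2} + 2\vecnorm{C_t}{2}$.

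\textbf{Main obstacle.} The principal difficulty is controlling $\vecnorm{\bG_t^n - \bG_t^{pop}}{2}$ at a scale that shrinks with the current error rather than with the initial radius. A generic sub-Gaussian RIP bound of the form $\vecnorm{\bG_t^n - \bG_t^{pop}}{2} \lesssim \sqrt{d/n}\bigl(\vecnorm{\fitMat_t\fitMat_t^\top - \trueMat}{2} + \sigma\bigr)\vecnorm{\fitMat_t}{2}$ is adequate for part (a) but too loose in part (b): it would tie the noise floor of $B_t$ to the slowly decaying $\vecnorm{B_t}{2}$ itself rather than to $\sigma$. The resolution is a localized decoupling that splits the sensing noise into a signal-dependent piece, absorbed into the contraction factors through the inductive bounds on $\vecnorm{A_t}{2},\vecnorm{B_t}{2},\vecnorm{C_t}{2}$, and a pure-observation-noise piece of size $\sqrt{d\log d/n}\,\sigma\,\vecnorm{\fitMat_t}{F}$. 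A simultaneous induction further enforcing $\vecnorm{C_t}{2} \lesssim \min(\vecnorm{A_t}{2},\vecnorm{B_t}{2})$ prevents the cross-term from ever polluting the $B$-recursion; this is what removes the spurious $\sqrt{k}\kappa$ factor separating $\epsilon_{comp}$ from the final $\epsilon_{stat}$.
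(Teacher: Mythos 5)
Your proposal follows essentially the same route as the paper: the same decomposition $\fitMat_t=\Umat\Ucoeff_t+\Vmat\Vcoeff_t$, linear contraction for the $\Ucoeff_t\Ucoeff_t^\top$ and $\Ucoeff_t\Vcoeff_t^\top$ blocks, the self-bounded quadratic recursion $\|B_{t+1}\|\le(1-c\eta\|B_t\|)\|B_t\|+\text{noise}$ for the over-specified block, and a concentration bound whose signal-dependent part scales with the current error $D_t$ and whose noise part scales with $\sigma$ (the paper's Lemma~\ref{lemma:sample-contraction-T-small} and equation~\eqref{eqn:tight-delta_bound}). Two points differ in detail: the paper makes the data-dependent-radius concentration rigorous over $\Theta(\sigma_1/\epsilon_{stat})$ iterations via an epoch argument (re-invoking the uniform bound only each time $D_t$ halves), which your sketch leaves implicit; and your closing claim that an induction enforcing $\|C_t\|\lesssim\min(\|A_t\|,\|B_t\|)$ is what removes the $\sqrt{k}\kappa$ gap is not the paper's mechanism --- that gap is closed by the sample-size condition, which makes $\sqrt{kd\log d/n}\,D_t$ negligible against the contraction, so the floor is set by the pure-noise term.
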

\noindent
The proof of Theorem~\ref{theorem:sample-convergence-T-small} is in Appendix~\ref{sec:proof-main-theorem}. We now have a few remarks with these results.

\vspace{0.5 em}
\noindent
\textit{(1) The sequences $\braces{\vecnorm{\Ucoeff_{t}\Ucoeff_{t}^\top - \trueUcoeff}{2}}_t$ and $\braces{\vecnorm{\Ucoeff_t \Vcoeff_t^{\top}}{2}}_t$ first converge linearly and then sub-linearly.} Theorem~\ref{theorem:sample-convergence-T-small} indicates that the sequences $\braces{\vecnorm{\Ucoeff_{t}\Ucoeff_{t}^\top - \trueUcoeff}{2}}_t$ and $\braces{\vecnorm{\Ucoeff_t \Vcoeff_t^{\top}}{2}}_t$ first converge linearly from $0.1\sigma_r$ to  $\epsilon_{comp} $, and then converge sub-linearly to $\Omega \parenth{\epsilon_{stat}}$.
Furthermore, the sequence $\braces{\vecnorm{\fitMat_t \fitMat_t^\top - \trueMat}{2}}_t$ always converges sublinearly towards $\Omega \parenth{\epsilon_{stat}}$.
This is consistent with our simulations in Figure \ref{fig:sim-verification}.
As we will see later in Lemma~\ref{lemma:sample-contraction-T-small}, it is the convergence of $\vecnorm{\trueVcoeff - \Vcoeff_t \Vcoeff_t^\top}{2}$ that slows down the convergence of $\braces{\vecnorm{\fitMat_t \fitMat_t^\top - \trueMat}{2}}_t$, and incurring the sublinear convergence of $\braces{\vecnorm{\Ucoeff_{t}\Ucoeff_{t}^\top - \trueUcoeff}{2}}_t$ and $\braces{\vecnorm{\Ucoeff_t \Vcoeff_t^{\top}}{2}}$.

\vspace{0.5 em}
\noindent
\textit{(2) There is a convergence rate discrepancy between the population and finite-sample versions}.
It is often believed that the convergence rate is consistent even if we go from finite $n$ to infinitely large $n$ (i.e., from finite sample scenario to the scenario when we have access to the population gradient).
However this is not the case in our setting.
As we will show shortly in Lemma~\ref{lemma:pop-contraction}, if we have access to the population gradient, the convergence rates of the sequences $\braces{\vecnorm{\Ucoeff_{t}\Ucoeff_{t}^\top - \trueUcoeff}{2}}_t$ and $\braces{\vecnorm{\Ucoeff_t \Vcoeff_t^T}{2}}_t$ are linear all the way until zero.
In our setting, going from population to finite-sample creates an unusual tangling factor, causing the convergence rate discrepancy between the finite-sample and population sequences.

\vspace{0.5 em}
\noindent
\textit{(3) This achieves nearly minimax-optimal statistical error.}
At a glance the statistical error seems too good to be true compared to Yudong's work, and even better than the minimax rate~\citep{candes2011tight}.
In fact the guarantees we offer are in spectral norm, while the typical rate in the related work is in Frobenius norm. Translating the spectral norm to Frobenius norm will introduce an extra $\sqrt{k}$ factor. 
That is, the statistical error is $\kappa\sqrt{\frac{ kd \log d }{n}}\sigma$ if we evaluate $\vecnorm{\fitMat_t \fitMat_t^\top - \trueMat}{F}$. This statistical error is similar to the results in \citet{chen2015fast} when the rank is known, i.e., $k = r$. 
Furthermore, we are able to cover both the noisy and noiseless matrix sensing settings.
Given that we assume no {\it a priori} knowledge of true rank $r$, the statistical error in Theorem~\ref{theorem:sample-convergence-T-small} is minimax optimal up to log factors~\citep{candes2011tight}.

\subsection{Simulation verification of the main result}

\begin{figure}[t]
\centering
\begin{subfigure}[t]{0.48\textwidth}
\includegraphics[width=1\textwidth]{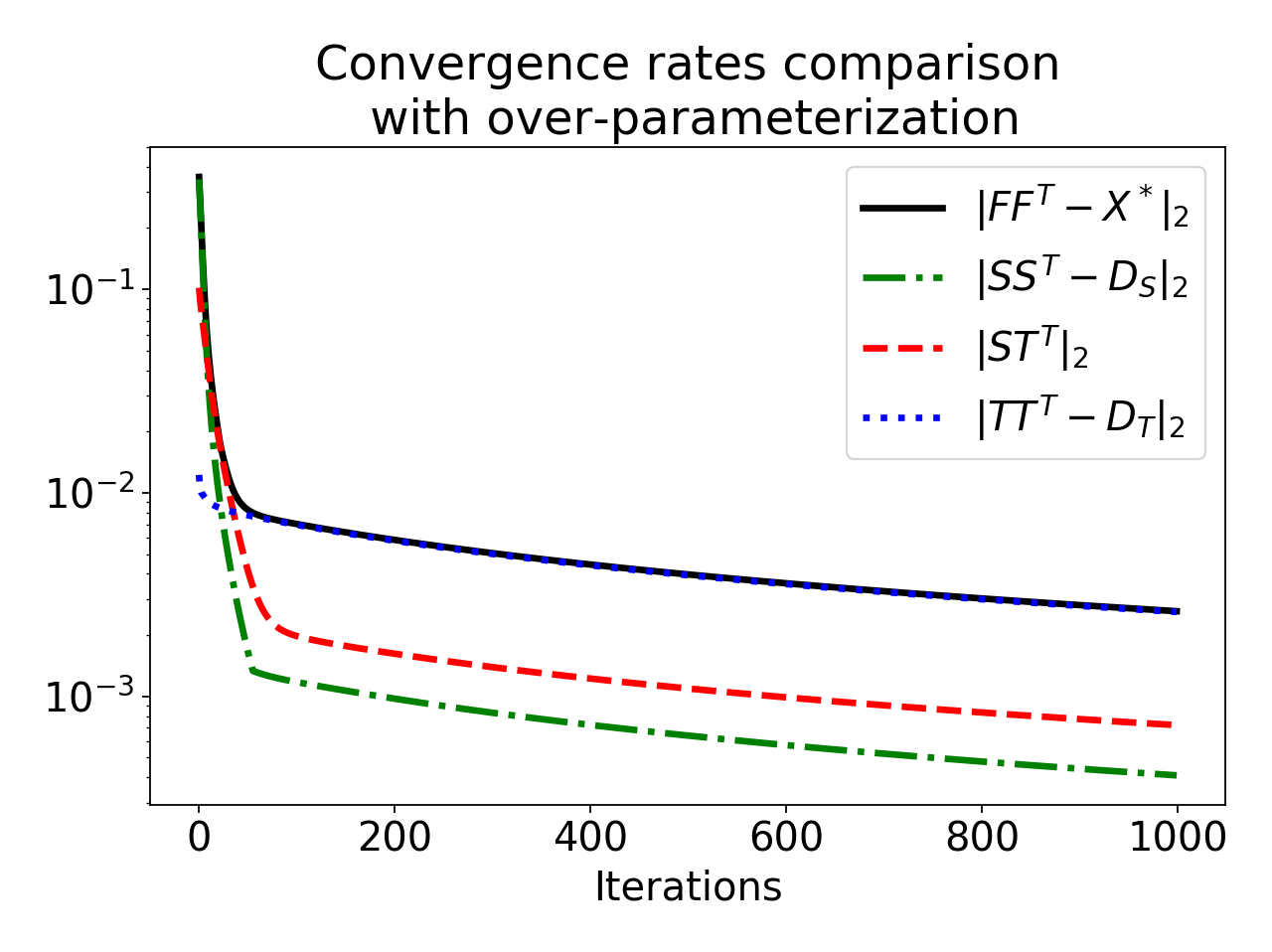}
\caption{}
\label{fig:sim-verification-a}
\end{subfigure}
\begin{subfigure}[t]{0.48\textwidth}
\includegraphics[width=1\textwidth]{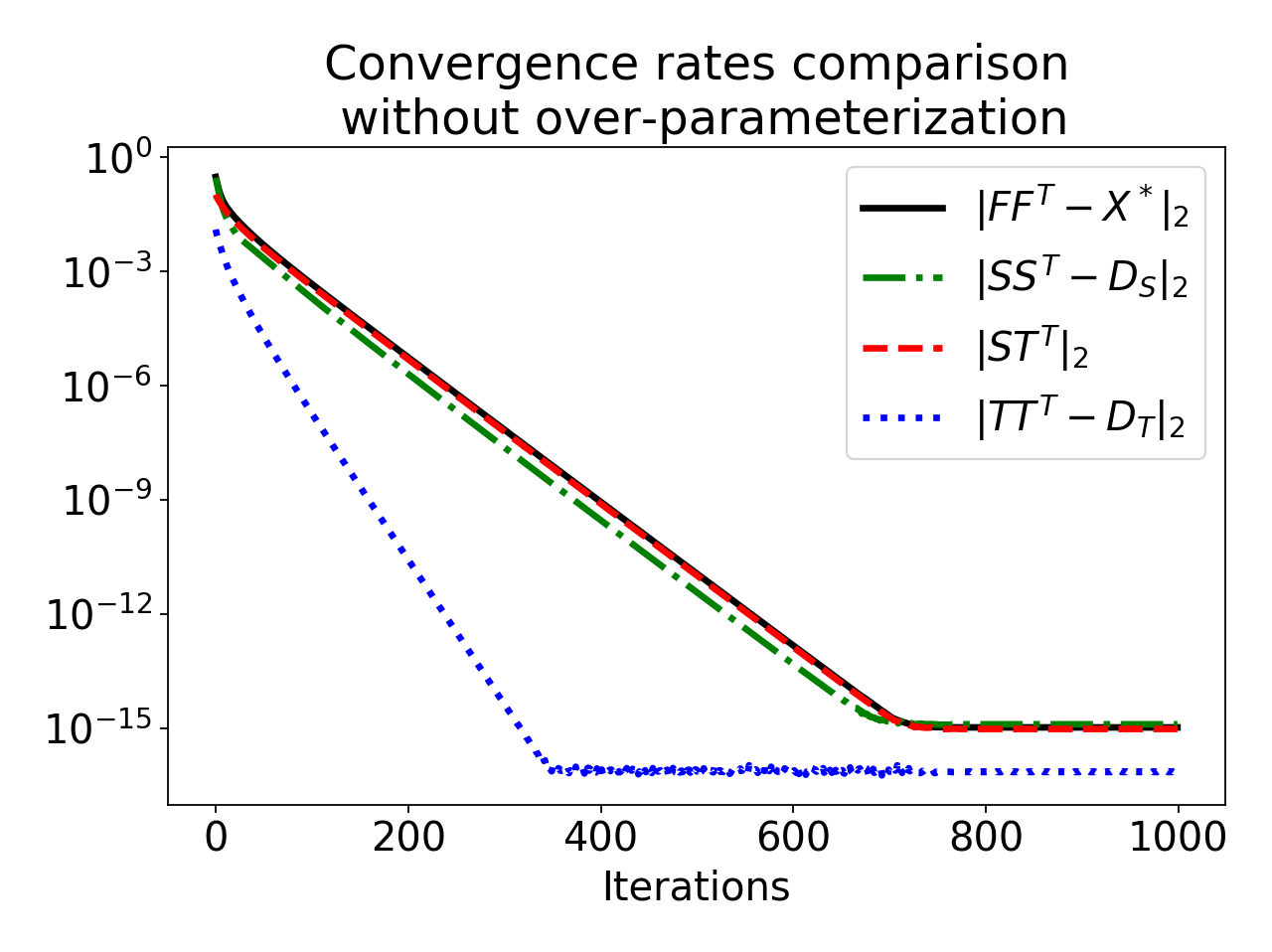}
\caption{}
\label{fig:sim-verification-b}
\end{subfigure}
\vspace{1 em}
\begin{subfigure}[t]{0.48\textwidth}
\includegraphics[width=1\textwidth]{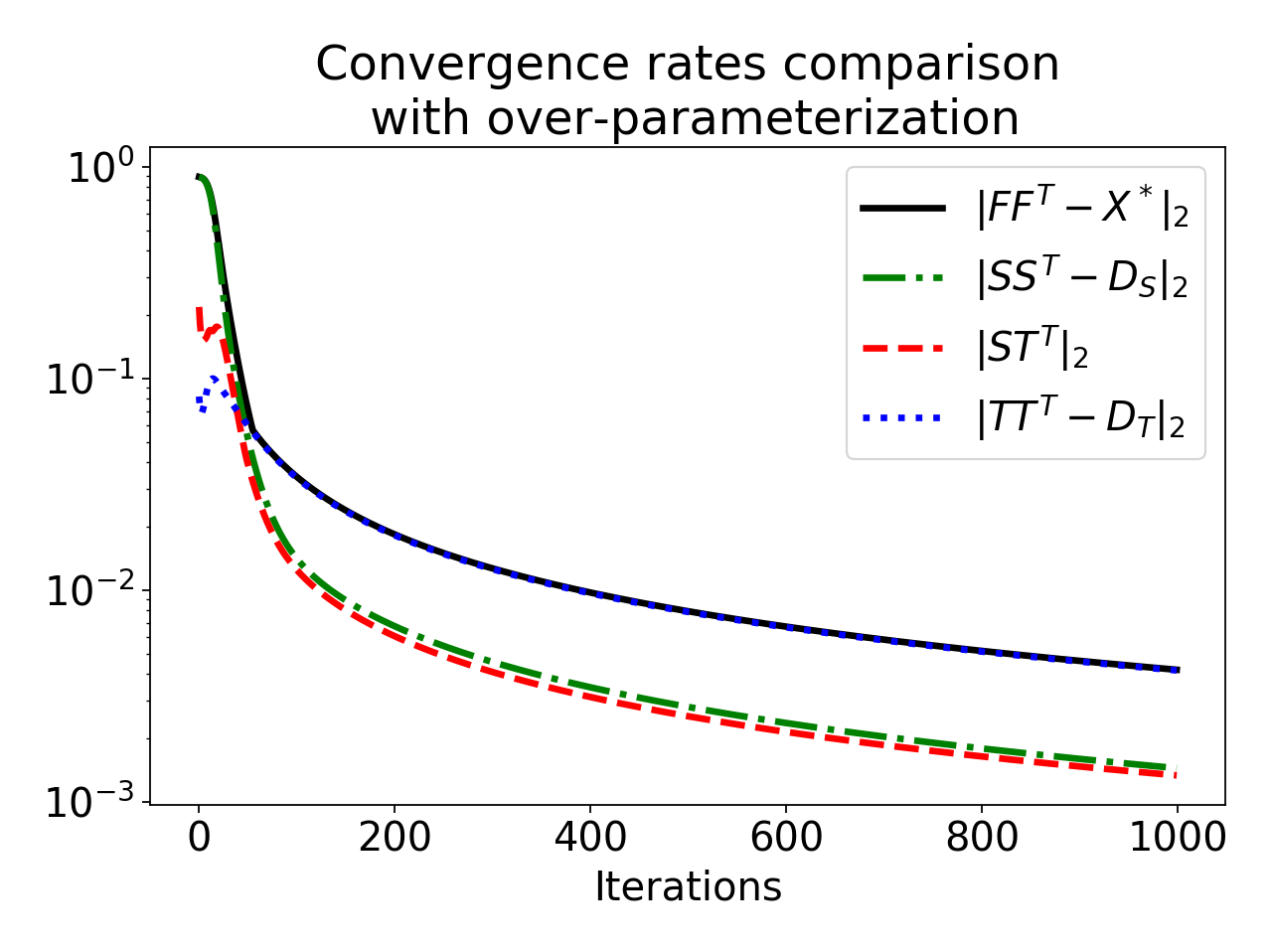}
\caption{}
\label{fig:sim-verification-c}
\end{subfigure}
\begin{subfigure}[t]{0.48\textwidth}
\includegraphics[width=1\textwidth]{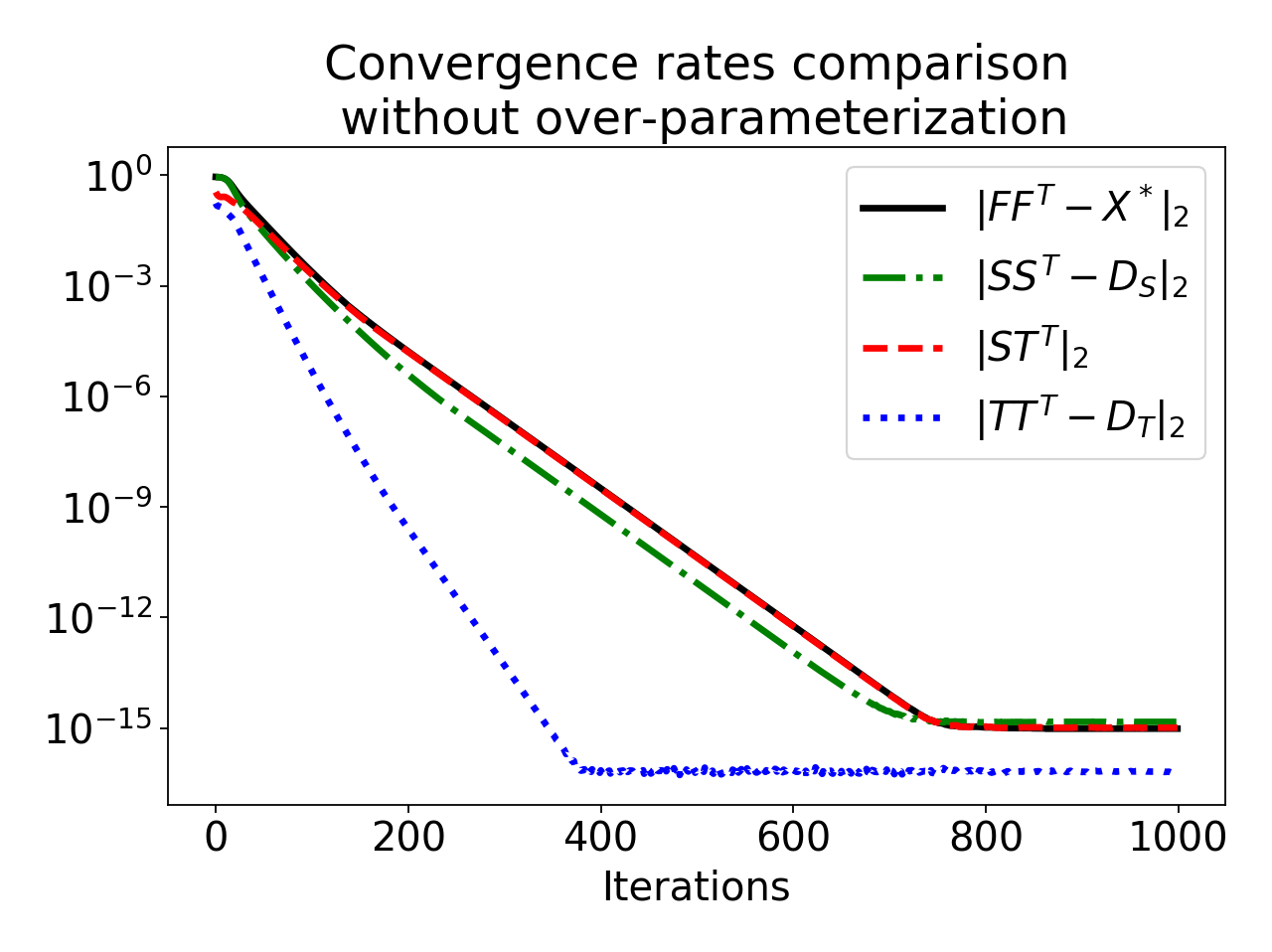}
\caption{}
\label{fig:sim-verification-d}
\end{subfigure}
\caption{
\textbf{Simulations that verify the main result.}
(a) Convergence rates of the FGD iterates when we over-specify the rank ($r=3, k=4$).
(b) Convergence rates of the FGD method when we correctly specify the rank ($r = k = 3$).
The Figures in (c) and (d) are executed in the same setting as those in (a) and (b) respectively, except with random initialization around the origin, instead of using Assumption~\ref{assumption:init}. 
}
\label{fig:sim-verification}
\end{figure}


In this subsection we use the same simulation setup as in Section \ref{sec:motivating-sim}.
Let $\braces{\fitMat_t}_t$ be the sequence generated by the FGD method as in Equation \ref{eqn:factored-gradient-method}, and let $\Ucoeff, \Vcoeff$ be defined as in the previous subsection.

The simulation results are shown in Figure \ref{fig:sim-verification}.
In Figure \ref{fig:sim-verification-a}, we plot $\vecnorm{\fitMat \fitMat^\top - \trueMat}{2}$, $\vecnorm{\Ucoeff \Ucoeff^\top - \trueUcoeff}{2}$, $\vecnorm{\Ucoeff \Vcoeff^\top}{2}$, and $\vecnorm{\Vcoeff \Vcoeff^\top - \trueVcoeff}{2}$ against the algorithm iterations.
The simulation results are aligned with our theory.
As said in Theorem \ref{theorem:sample-convergence-T-small}, $\vecnorm{\Ucoeff \Ucoeff^\top - \trueUcoeff}{2}$ and $\vecnorm{\Ucoeff \Vcoeff^\top}{2}$ first converge linearly, and then sublinearly. Furthermore,
$\vecnorm{\fitMat \fitMat^\top - \trueMat}{2}$ is soon dominated by $\vecnorm{\Vcoeff \Vcoeff^\top - \trueVcoeff}{2}$, which converges sub-linearly all the time.
Note that these phenomena are when the true rank is $3$ and we set $k = 4$.
If we correctly specify the rank ($k=r=3$), the convergences will be linear, as shown in Figure \ref{fig:sim-verification-b}.
In Figures~\ref{fig:sim-verification-c} and~\ref{fig:sim-verification-d}, we re-produce the result as in Figures~\ref{fig:sim-verification-a} and~\ref{fig:sim-verification-b} with random initialization.
This indicates that our assumption of initialization could possibly be waived using recent insights about the global landscape of the matrix sensing problem~\citep{zhang2020many}.

\section{Proof of the main result}
\label{sec:proof_main_result}
The proof of the main result follows the typical population-sample analysis~\citep{Siva_2017}.
We first analyze the convergence behavior of the algorithm when we have access to the population gradient.
Then in the finite sample setting, we quantify the difference between the population gradient and the finite sample gradient using concentration arguments, and use this difference plus the convergence result in population analysis, to characterize the convergence behavior in the finite sample setting.

While it is common to use the Restricted Isometric Property (RIP) as the building block to encapsulate the concentration requirement \citep{chen2015fast, chi2019nonconvex, li2018algorithmic}, 
we build our results directly based on the concentration of sub-Gaussian sensing matrices for technical convenience.
While it is possible to control the Frobenius norm directly, we find it technically easier and more reader friendly to show that the sequence $\vecnorm{\bF_t \bF_t^\top - \trueMat}{2}$ converges, and the resulting statistical rate is tight.
However, RIP is defined in Frobenius norm since it was first developed for vector and then extended to matrix \citep{recht2010guaranteed, candes2011tight}.
Translating the Frobenius norm directly to spectral norm will incur a $\Theta \parenth{\sqrt{k}}$ factor of sub-optimality.
That being said, we believe that it is possible to establish similar results for $\vecnorm{\bF_t \bF_t^\top - \trueMat}{F}$ directly, and hence we can use the general RIP notion.
We leave this for future work.

\subsection{Population analysis}
\label{sec:popu-analysis}
The first step of our analysis is to understand the contraction if we have access to the population gradient. 
One can check that
$\Exs \left[ \angles{\matA_i, \matB} \matA_i \right] = \matB$
for any matrix $\matB$ with appropriate dimensions.
Combined with the fact that $y_i = \angles{\matA_i, \mathbf{X}^*} + \epsilon$,  the population gradient (taking expectation over the observation noise $\epsilon$ and the observation matrices $\matA_i $) is
\begin{align*}
    \mathbf{G}_t := \Exs [ \bG_t^n] = \left(\fitMat_t \fitMat_t^{\top} - \mathbf{X}^* \right)  \fitMat_t.
\end{align*}
A closer look at the update in the Factored Gradient Method (Equation \ref{eqn:factored-gradient-method}) with population gradient reveals that 
at each iteration, the update only changes the coefficient
matrices $\Ucoeff$ and $\Vcoeff$. Simple algebra using the last observation yields:
\begin{align*}
  & \hspace{- 4 em} \fitMat_{t}  - \stepsize \bG_t   \\
  =&   \fitMat_t - \stepsize \parenth{\fitMat_t \fitMat_t^{\top} \fitMat_t - 
   \mathbf{X}^* \fitMat_t } \\
  =&  \Umat \Ucoeff_t + \Vmat \Vcoeff_t   - \stepsize \bracket{ \left( \Umat \Ucoeff_t + \Vmat \Vcoeff_t \right) \left( \Ucoeff_t^\top \Ucoeff_t + \Vcoeff_t^\top \Vcoeff_t \right) - (\Umat \trueUcoeff \Ucoeff_t + \Vmat \trueVcoeff \Vcoeff_t) } \\
  =& \Umat \popUup\parenth{\Ucoeff_t} + \Vmat \popVup\parenth{\Vcoeff_t}
\end{align*}
where we define the following operators:
\begin{align*}
\popUup(\Ucoeff) 
&= \Ucoeff - \stepsize \parenth{ \Ucoeff \Ucoeff^\top \Ucoeff
 	 + \Ucoeff \Vcoeff^\top \Vcoeff - \trueUcoeff \Ucoeff }, \\
\popVup(\Vcoeff) 
&= \Vcoeff - \stepsize \parenth{ \Vcoeff \Vcoeff^\top \Vcoeff
 	 + \Vcoeff \Ucoeff^\top \Ucoeff - \trueVcoeff \Vcoeff }.
\end{align*}

\begin{lemma}
\label{lemma:pop-contraction}
\textbf{(Contraction per iteration with access to the population gradient.)}
Set $\eta = \frac{1}{100 \sigma_1}$ .
We assume good initialization as in Assumption \ref{assumption:init}. 
Then we have:
\begin{enumerate}
    \item[(a)] $\vecnorm{\trueUcoeff - \popUup(\Ucoeff)  \popUup(\Ucoeff)^\top}{2} \leq \parenth{1 - \eta \sigma_r}\vecnorm{\trueUcoeff - \Ucoeff \Ucoeff^\top}{2} + 3 \eta \vecnorm{\Ucoeff \Vcoeff^\top}{2}^2$,
    \item[(b)] $\vecnorm{\popUup(\Ucoeff)\popVup(\Vcoeff)^\top }{2} \leq \vecnorm{\Ucoeff \Vcoeff^\top}{2} \parenth{1 - \eta \sigma_r}$,
    \item[(c)] $\vecnorm{\popVup(\Vcoeff) \popVup(\Vcoeff)^\top }{2} \le \vecnorm{\Vcoeff \Vcoeff^\top}{2} \parenth{1 - \eta \vecnorm{\Vcoeff \Vcoeff^\top}{2} + 2\eta \vecnorm{\trueVcoeff}{2}}  $, 
    \item[(d)] $ \vecnorm{\popVup(\Vcoeff) \popVup(\Vcoeff)^\top - \trueVcoeff}{2} \leq \vecnorm{\Vcoeff \Vcoeff^\top - \trueVcoeff}{2} \vecnorm{\Id - 2\eta \Vcoeff \Vcoeff^\top}{2} + 3 \eta \vecnorm{\Ucoeff\Vcoeff^\top}{2}^2$.
\end{enumerate}

\end{lemma}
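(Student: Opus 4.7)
The plan is to expand each product on the left-hand side of (a)--(d) directly, decompose the expansion into a ``same-block'' piece (free of $C := \Ucoeff\Vcoeff^\top$) and a ``cross-block'' piece (linear or quadratic in $C$), and then combine spectral-norm bounds on these pieces with Assumption~\ref{assumption:init} and the step-size choice $\eta = 1/(100\sigma_1)$. I would first rewrite the operators in the compact form
\begin{align*}
\popUup(\Ucoeff) = (\Id - \eta\Delta_U)\Ucoeff - \eta\Ucoeff\Vcoeff^\top\Vcoeff, \qquad \popVup(\Vcoeff) = (\Id - \eta\Delta_V)\Vcoeff - \eta\Vcoeff\Ucoeff^\top\Ucoeff,
\end{align*}
where $\Delta_U := \Ucoeff\Ucoeff^\top - \trueUcoeff$ and $\Delta_V := \Vcoeff\Vcoeff^\top - \trueVcoeff$, so that every subsequent product splits cleanly into the two promised pieces.

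The central algebraic identity that powers parts (a), (b), and (d) is
\begin{equation*}
\Delta - \eta(\Delta S + S\Delta) \;=\; \tfrac{1}{2}\bigl[\Delta(\Id - 2\eta S) + (\Id - 2\eta S)\Delta\bigr],
\end{equation*}
which yields $\|\Delta - \eta(\Delta S + S\Delta)\|_2 \leq \|\Delta\|_2\,\|\Id - 2\eta S\|_2$ for any symmetric $\Delta, S$ by the triangle inequality. For part~(a), expansion shows that the same-block piece of $\popUup(\Ucoeff)\popUup(\Ucoeff)^\top - \trueUcoeff$ simplifies to $\Delta_U - \eta(\Delta_U\Ucoeff\Ucoeff^\top + \Ucoeff\Ucoeff^\top\Delta_U)$ plus an $\eta^2$ tail; applying the identity gives the prefactor $\|\Id - 2\eta\Ucoeff\Ucoeff^\top\|_2 \leq 1 - 2\eta\sigma_r(1-\rho) \leq 1 - \eta\sigma_r$, where the first inequality uses $\lambda_{\min}(\Ucoeff\Ucoeff^\top) \geq (1-\rho)\sigma_r$ (from Assumption~\ref{assumption:init}) and the second uses $\rho \leq 0.07$. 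The cross block contributes $-2\eta CC^\top$, whose spectral norm is $2\eta\|C\|_2^2$, and the residual $\eta^2$ terms fit within the extra $\eta\|C\|_2^2$ of slack to yield the claimed $3\eta\|C\|_2^2$. Part~(d) is the mirror calculation with $(U,V)$ swapped: applying the identity to $(\Delta_V,\Vcoeff\Vcoeff^\top)$ produces the prefactor $\|\Id - 2\eta\Vcoeff\Vcoeff^\top\|_2$, and the cross/quadratic terms again absorb into $3\eta\|C\|_2^2$.

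For part~(b), expanding $\popUup(\Ucoeff)\popVup(\Vcoeff)^\top$ gives the cross-block main piece $C - \eta(AC + CB) + \eta^2 ACB$ with $A := 2\Ucoeff\Ucoeff^\top - \trueUcoeff$ and $B := 2\Vcoeff\Vcoeff^\top - \trueVcoeff$. I factor as $C - \eta(AC + CB) = (\Id-\eta A)C(\Id - \eta B) - \eta^2 ACB$ and apply sub-multiplicativity of the spectral norm to obtain a prefactor $\|\Id - \eta A\|_2\,\|\Id - \eta B\|_2$; the bounds $\lambda_{\min}(A) \geq (1-2\rho)\sigma_r$ and $\|B\|_2 \leq 2(\sigma_{r+1} + \rho\sigma_r)$, combined with $\sigma_{r+1} \ll \sigma_r$, collapse this prefactor to $1 - \eta\sigma_r$ after absorbing the $\eta^2$ residual. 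For part~(c), substituting $\Delta_V = \Vcoeff\Vcoeff^\top - \trueVcoeff$ lets me rewrite the same-block main piece as $\Vcoeff\Vcoeff^\top - 2\eta(\Vcoeff\Vcoeff^\top)^2 + \eta(\trueVcoeff\,\Vcoeff\Vcoeff^\top + \Vcoeff\Vcoeff^\top\,\trueVcoeff)$; a direct eigenvalue argument yields $\|S - 2\eta S^2\|_2 \leq \|S\|_2(1 - 2\eta\|S\|_2)$ (valid here since $\eta\|S\|_2 \leq 1/2$ and $\|S\|_2 \leq 1/(4\eta)$), the anti-commutator with $\trueVcoeff$ contributes at most $2\eta\|\trueVcoeff\|_2\|S\|_2$, and the negative cross term $-2\eta C^\top C$ does not raise the top eigenvalue (by Weyl). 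The extra $\eta\|S\|_2^2$ slack between the $2\eta$ coefficient I obtain and the stated $\eta$ coefficient is what absorbs the $\eta^2$ remainders.

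The main obstacle is the $\eta^2$ bookkeeping across the four parts: each expansion produces several sign-indefinite remainders of the form $\eta^2(\Delta_U\Ucoeff\Ucoeff^\top\Delta_U + CC^\top\Delta_U + C^\top\Ucoeff\Ucoeff^\top C + \cdots)$ that must be shown to stay dominated by either the leading contraction factor or the stated cross-term allowance. Uniform control comes from Assumption~\ref{assumption:init}, which yields $\max\{\|\Delta_U\|_2, \|\Delta_V\|_2, \|C\|_2\} \leq \rho\sigma_r$, together with the step-size condition $\eta\sigma_1 \leq 1/100$: together these make every $\eta$ times a block's spectral norm at most $1/100$, so each quadratic-in-$\eta$ correction is strictly lower order than the tolerance allowed by the statement.
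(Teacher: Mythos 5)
Your plan is correct and follows essentially the same route as the paper's proof: the same same-block/cross-block expansion, the same symmetrized factorization yielding the prefactors $\|\Id - 2\eta\Ucoeff\Ucoeff^\top\|_2$ in (a) and $\|\Id - 2\eta\Vcoeff\Vcoeff^\top\|_2$ in (d), the same eigenvalue argument for $\|S-2\eta S^2\|_2$ plus dropping of the negative-semidefinite cross term in (c), and the same absorption of the $\eta^2$ remainders via Assumption~\ref{assumption:init} and $\eta = 1/(100\sigma_1)$. The only (cosmetic) divergence is in (b), where you factor the first-order piece as $(\Id-\eta A)\,\Ucoeff\Vcoeff^\top(\Id-\eta B)$ while the paper splits it into half-left- and half-right-multiplied pieces; both reduce to the same spectral estimates and the same (admittedly loose) constants.
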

\noindent
The proof of Lemma~\ref{lemma:pop-contraction}  can be found in Appendix~\ref{subsec:proof:lemma:pop_contraction}.

According to Lemma~\ref{lemma:pop-contraction} above, we have fast convergence in estimating $\Ucoeff\Ucoeff^\top$, $\Ucoeff\Vcoeff^\top$, but slow convergence in estimating $\Vcoeff\Vcoeff^\top$.
Intuitively, $\Vcoeff \Vcoeff^\top$ is slow because the local curvature of the population version of the loss function~\eqref{formulation:non-convex} is flat, namely, the Hessian matrix around the global maxima $\trueVcoeff$ is degenerate.
We know that when the curvature of the target matrix is undesirable, we can only guarantee sub-linear convergence rate \citep{bhojanapalli2016dropping}.

Note that, we assume that $k > r$ for the above analysis.
The case when $k \leq r$ is already covered by various existing works (see \citet{chen2015fast, tu2016low, bhojanapalli2016dropping} and the references therein); therefore, we will not focus on this setting in our analysis.

\subsection{Finite sample analysis}
\label{sec:finite_sample}





On top of our population analysis result, we consider the case when we only have access to the gradient evaluated with finitely many samples.
Consider the deviation of the population and sample gradient:
\begin{align*}
    \bG_t^n - \bG_t =& \frac{1}{n} \sum_{i=1}^n \parenth{ \angles{\matA_i, \fitMat_t \fitMat_t^{\top}} - y_i} \matA_i  \fitMat_t - \parenth{\fitMat_t \fitMat_t^{\top} - \mathbf{X}^* }  \fitMat_t \\
    =& \frac{1}{n} \sum_{i=1}^n \parenth{ \angles{\matA_i, \fitMat_t \fitMat_t^{\top} - \trueMat} + \epsilon_i } \matA_i  \fitMat_t - \parenth{\fitMat_t \fitMat_t^{\top} - \mathbf{X}^* }  \fitMat_t.
\end{align*}
We define $\Delta_t$ to quantify this deviation:
\begin{align*}
    \Delta_t = \frac{1}{n} \sum_i^n  \parenth{\angles{\senseMat_i, \fitMat_t \fitMat_t^\top - \mathbf{X}^*} + \epsilon_i} \senseMat_i   - (\fitMat_t \fitMat_t^\top - \mathbf{X}^*),
\end{align*}
and hence $\bG_t^n - \bG_t = \Delta_t \fitMat_t$.
If we can control $\Delta_t$, we can have contraction per-iteration, as shown in the lemma below.
Note that we make no attempts to optimize the constants.

\begin{lemma}
\label{lemma:sample-contraction-T-small}
\textbf{(Contraction per iteration.)}
Assume that we have the same setting as Theorem \ref{theorem:sample-convergence-T-small}.
Denote $D_t = \max \{ \vecnorm{\Ucoeff_t \Ucoeff_t^{\top} - \trueUcoeff}{2},  \vecnorm{\Vcoeff_t \Vcoeff_t^{\top}}{2}, \vecnorm{\Ucoeff_t \Vcoeff_t^{\top}}{2}\}$, and assume that $D_t$ is still sub-optimal to the statistical error: $D_t > 50\kappa \sqrt{\frac{ d \log d }{n}}\sigma$.
Suppose
\begin{align}
\label{eqn:tight-delta_bound}
\vecnorm{\Delta_t}{2} \leq 10 \sqrt{\frac{k d \log d}{n} }  D_t + 4 \sqrt{\frac{d \log d}{n}} \sigma,
\end{align}
Then, we find that
\begin{align}
\label{eqn:contraction-SS-main-text}
    \vecnorm{\Ucoeff_{t+1}\Ucoeff_{t+1}^\top - \trueUcoeff}{2} \leq&
    \parenth{1 - \frac{7}{10}\eta\sigma_r} \vecnorm{\Ucoeff_{t}\Ucoeff_{t}^\top - \trueUcoeff}{2}
    + \sqrt{\frac{k d \log d}{n} }  D_t 
    + \frac{4}{10} \sqrt{\frac{d \log d }{n}}\sigma,\\
\label{eqn:contraction-ST-main-text}
    \vecnorm{\Ucoeff_{t+1}\Vcoeff_{t+1}^\top }{2} \leq& 
    \parenth{1 - \eta \sigma_r } \vecnorm{\Ucoeff_t \Vcoeff_t^\top}{2}
    + \sqrt{\frac{k d \log d}{n} }  D_t 
    + \frac{4}{10} \sqrt{\frac{d \log d }{n}}\sigma.
\end{align}
Moreover, denote $\epsilon_{stat} = \kappa \sqrt{\frac{d \log d }{n}}\sigma$. Then we have
\begin{align}
\label{eqn:sub-linear-contraction-in-lemma}
    \parenth{D_{t+1} -  50 \epsilon_{stat}}  \leq \bracket{1 - \frac{1}{2} \eta \parenth{D_t -  50 \epsilon_{stat}}}\parenth{D_t -  50 \epsilon_{stat}}.
\end{align}
\end{lemma}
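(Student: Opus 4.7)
The plan is to decompose the sample FGD update as the population update plus a perturbation driven by $\Delta_t$, apply the four bounds of Lemma~\ref{lemma:pop-contraction} to the population parts, and absorb the perturbation via the hypothesis~\eqref{eqn:tight-delta_bound}. Writing $\bG_t^n = \bG_t + \Delta_t \fitMat_t$ and projecting the FGD iterate onto the column spaces of $\Umat$ and $\Vmat$ yields
\[
  \Ucoeff_{t+1} = \popUup(\Ucoeff_t) - \eta\,\Umat^\top \Delta_t \fitMat_t, \qquad \Vcoeff_{t+1} = \popVup(\Vcoeff_t) - \eta\,\Vmat^\top \Delta_t \fitMat_t.
\]
Throughout I would use the deterministic bound $\|\fitMat_t\|_2^2 \leq \sigma_1 + D_t \leq 2\sigma_1$, which is valid by Assumption~\ref{assumption:init} and the maintained local radius, to control $\|\popUup(\Ucoeff_t)\|_2$ and $\|\popVup(\Vcoeff_t)\|_2$ uniformly.

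To prove~\eqref{eqn:contraction-SS-main-text}, I would expand $\Ucoeff_{t+1}\Ucoeff_{t+1}^\top$ into a population term, two cross terms linear in $\Delta_t$, and a quadratic term $\eta^2 \Umat^\top \Delta_t \fitMat_t \fitMat_t^\top \Delta_t^\top \Umat$. Applying Lemma~\ref{lemma:pop-contraction}(a) to the population term and bounding the remaining pieces by $2\eta\|\popUup(\Ucoeff_t)\|_2\|\fitMat_t\|_2\|\Delta_t\|_2 + \eta^2\|\fitMat_t\|_2^2\|\Delta_t\|_2^2$, then substituting~\eqref{eqn:tight-delta_bound} and $\eta = 1/(100\sigma_1)$, the $\Delta_t$-driven perturbation simplifies to at most $\sqrt{kd\log d/n}\,D_t + \tfrac{4}{10}\sqrt{d\log d/n}\,\sigma$, the quadratic piece being of strictly higher order in $n$ under the sample-size assumption. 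The residual $3\eta\|\Ucoeff_t\Vcoeff_t^\top\|_2^2 \leq 3\eta\rho\sigma_r\, D_t$ from Lemma~\ref{lemma:pop-contraction}(a) is absorbed into the $(1 - \eta\sigma_r)$ factor using $\rho \leq 0.07$, producing the stated $(1 - \tfrac{7}{10}\eta\sigma_r)$. The bound~\eqref{eqn:contraction-ST-main-text} follows by the same argument using Lemma~\ref{lemma:pop-contraction}(b); no extra residual appears.

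For the sub-linear bound~\eqref{eqn:sub-linear-contraction-in-lemma}, I would first derive an analogous finite-sample estimate from Lemma~\ref{lemma:pop-contraction}(c),
\[
  \|\Vcoeff_{t+1}\Vcoeff_{t+1}^\top\|_2 \leq \|\Vcoeff_t\Vcoeff_t^\top\|_2\bigl(1 - \eta\|\Vcoeff_t\Vcoeff_t^\top\|_2 + 2\eta\|\trueVcoeff\|_2\bigr) + \sqrt{\tfrac{kd\log d}{n}}\,D_t + \tfrac{4}{10}\sqrt{\tfrac{d\log d}{n}}\,\sigma,
\]
and then take the maximum of the three per-component bounds. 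When the maximum is attained by the SS or ST coordinate, the contraction factor $1 - \tfrac{7}{10}\eta\sigma_r$ is already stronger than the target $1 - \tfrac12\eta(D_t - 50\epsilon_{stat})$ since $D_t \leq \sigma_r$, so the sub-linear recursion holds trivially. The delicate case is when $D_t = \|\Vcoeff_t\Vcoeff_t^\top\|_2$, where the self-referential contraction $(1 - \eta D_t)$ is the true bottleneck; here I would subtract $50\epsilon_{stat}$ from both sides and rearrange, using $\|\trueVcoeff\|_2 \leq \epsilon_{stat}/\kappa$ from hypothesis~(1) of Theorem~\ref{theorem:sample-convergence-T-small}, to match the form $[1 - \tfrac12\eta(D_t - 50\epsilon_{stat})](D_t - 50\epsilon_{stat})$.

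The principal obstacle is this last rearrangement: after subtracting $50\epsilon_{stat}$, the residual linear-in-$D_t$ slack $2\eta\|\trueVcoeff\|_2 D_t + \sqrt{kd\log d/n}\,D_t$ together with the additive noise $\tfrac{4}{10}\sqrt{d\log d/n}\,\sigma$ must be dominated by $\tfrac12\eta(D_t - 50\epsilon_{stat})\cdot 50\epsilon_{stat}$. Verifying this requires invoking the sample-size assumption $n > C_1 k\kappa^2 d\log^3 d\cdot\max(1, \sigma^2/\sigma_r^2)$ to guarantee $\sqrt{kd\log d/n} \ll \eta\sigma_r/\kappa$, and using the threshold $D_t > 50\epsilon_{stat}$ together with the definition $\epsilon_{stat} = \kappa\sqrt{d\log d/n}\,\sigma$ to convert each piece of additive noise into a constant multiple of $\eta\,\epsilon_{stat}(D_t - 50\epsilon_{stat})$. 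The generous constant $50$ is what buys the needed margin to close the inequality.
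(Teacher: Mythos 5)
Your proposal is correct and follows essentially the same route as the paper: write the sample iterate as the population iterate plus the $\Delta_t$-driven perturbation, apply Lemma~\ref{lemma:pop-contraction} to the population part, absorb the perturbation via~\eqref{eqn:tight-delta_bound}, and close the sub-linear recursion by reducing to the $\Vcoeff\Vcoeff^\top$ coordinate (the paper makes your ``delicate case'' rigorous by setting $\vecnorm{\Vcoeff_t\Vcoeff_t^\top}{2} = zD_t$ and observing the resulting quadratic in $z$ is maximized at $z=1$, which also covers the iterations where the max defining $D_t$ is not attained by the $\Vcoeff\Vcoeff^\top$ coordinate). The only real difference is that you bound the cross terms crudely by $2\eta\vecnorm{\popUup(\Ucoeff_t)}{2}\vecnorm{\fitMat_t}{2}\vecnorm{\Delta_t}{2}$, whereas the paper splits them into eight pieces controlled by the non-expansion bounds of Lemma~\ref{lemma:more-pop-contraction}; both yield on the order of $\tfrac{1}{40}\vecnorm{\Delta_t}{2}$, which fits within the stated constants.
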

\noindent
The proof of Lemma~\ref{lemma:sample-contraction-T-small} can be found in Appendix~\ref{section:proof-sample-contraction-T-small}.

\paragraph{Implication of equations~\eqref{eqn:contraction-SS-main-text} and~\eqref{eqn:contraction-ST-main-text}:} 
Firstly, when $n$ goes to infinity, the sequence of $\braces{\vecnorm{\Ucoeff_{t}\Ucoeff_{t}^\top - \trueUcoeff}{2}}_t$ has constant contraction at each step, and hence achieves a linear convergence after all. This matches our population results in Lemma~\ref{lemma:pop-contraction}.
Secondly, if $n$ is finite, the sequence of $\braces{\vecnorm{\Ucoeff_{t}\Ucoeff_{t}^\top - \trueUcoeff}{2}}_t$ still has constant contraction, until roughly the magnitude of $\vecnorm{\Ucoeff_{t}\Ucoeff_{t}^\top - \trueUcoeff}{2}$ reaches $5 \sqrt{\frac{k d \log d}{n} }  D_t$. 
This indicates that we will have a linear convergence behavior in the beginning, and then sublinear convergence, as is indicated in Theorem \ref{theorem:sample-convergence-T-small}.


\subsection{Proof sketch for the main theorem }
\label{sec:proof-sketch}
In this subsection we offer a proof sketch for Theorem \ref{theorem:sample-convergence-T-small}.
Detailed proof can be found in Appendix~\ref{sec:proof-main-theorem}.


Lemma \ref{lemma:sample-contraction-T-small} is our key building block towards the main theorem.
However there are two missing pieces.
(1) Firstly we have to establish equation~\eqref{eqn:tight-delta_bound} so that Lemma~\ref{lemma:sample-contraction-T-small} can be invoked for one iteration.
(2) Secondly we have to find a way to correctly invoke Lemma \ref{lemma:sample-contraction-T-small} for all iterations and obtain the correct statistical rate.

We resolve the first point by bounding $\vecnorm{\Delta_t}{2}$ using matrix Bernstein concentration bound \citep{troppuserfriendly} together with the $\epsilon$-net discretization techniques.

\begin{lemma}
\label{lemma:concentration-Aepsilon-main-text}
Let $\bA_i$ be symmetric random matrices in $\mathbb{R}^{d*d}$, with the upper triangle entries ($i\geq j$) being independently sampled from an identical  sub-Gaussian distribution whose mean is $0$ and variance proxy is $1$. Let $\epsilon_i$ follows $N(0, \sigma)$. Then
\begin{align*}
\Prob \parenth{\vecnorm{\frac{1}{n}\sum_i^n \bA_i \epsilon_i}{2} \geq C \sqrt{\frac{d \sigma^2}{n}}} \leq \exp(-C).
\end{align*}
\end{lemma}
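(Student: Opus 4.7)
The plan is to reduce the matrix concentration problem to a scalar concentration problem via an $\epsilon$-net on the unit sphere, taking advantage of the fact that $M_n := \frac{1}{n}\sum_{i=1}^n \bA_i \epsilon_i$ is symmetric. Since the $\bA_i$ are unbounded in spectral norm, I would avoid a direct application of matrix Bernstein (which requires either boundedness or a truncation step) and go through the variational formula for the spectral norm. Specifically, $\vecnorm{M_n}{2} = \sup_{u \in \sphere^{d-1}} |u^\top M_n u|$, and for a standard $1/4$-net $\mathcal{N} \subset \sphere^{d-1}$ of cardinality at most $9^d$, a well-known covering argument yields $\vecnorm{M_n}{2} \leq 2 \max_{u \in \mathcal{N}} |u^\top M_n u|$.

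Next, for a fixed unit vector $u$, I would analyze the scalar
\[
u^\top M_n u \;=\; \frac{1}{n}\sum_{i=1}^n \epsilon_i \bigl(u^\top \bA_i u\bigr).
\]
Writing $u^\top \bA_i u = \sum_{j} A^{(i)}_{jj} u_j^2 + 2\sum_{j<k} A^{(i)}_{jk} u_j u_k$ as a linear combination of the independent, mean-zero, variance-proxy-$1$ sub-Gaussian entries of $\bA_i$, the total variance proxy is $\sum_j u_j^4 + 4\sum_{j<k} u_j^2 u_k^2 \leq 2 \bigl(\sum_j u_j^2\bigr)^2 = 2$. So $u^\top \bA_i u$ is sub-Gaussian with a universal $O(1)$ parameter. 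Because $\epsilon_i \sim \NORMAL(0,\sigma^2)$ is independent of $\bA_i$, the product $X_{i,u} := \epsilon_i \bigl(u^\top \bA_i u\bigr)$ is centered and sub-exponential with norm $\lesssim \sigma$. The classical Bernstein inequality for sub-exponential sums then gives, for each fixed $u$,
\[
\Prob\biggl(\Bigl|\tfrac{1}{n}\sum_{i=1}^n X_{i,u}\Bigr| \geq t\biggr) \;\leq\; 2 \exp\bigl(-c\, n \min(t^2/\sigma^2,\, t/\sigma)\bigr),
\]
for some universal constant $c>0$.

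Finally, I would combine the above with a union bound over the net. This costs a multiplicative factor of $|\mathcal{N}| \leq 9^d = e^{d \log 9}$, so that
\[
\Prob\bigl(\vecnorm{M_n}{2} \geq 2t\bigr) \;\leq\; 2 \cdot 9^d \exp\bigl(-c\, n \min(t^2/\sigma^2,\, t/\sigma)\bigr).
\]
Under the sample-size regime $n \gtrsim d$ considered in Theorem~\ref{theorem:sample-convergence-T-small}, choosing $t \asymp \sqrt{d\sigma^2/n}$ places the minimum in the quadratic regime ($t/\sigma \lesssim 1$), so the exponent becomes $c \cdot C^2 d$ for a suitably large constant $C$. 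Tuning $C$ to dominate the net's entropy $d \log 9$ by the desired margin yields the claimed tail $\exp(-C)$ after rescaling constants to match the statement.

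The main obstacle I anticipate is bookkeeping rather than conceptual: one must verify that $u^\top \bA_i u$ truly has a dimension-free sub-Gaussian variance proxy (the symmetry of $\bA_i$ doubles off-diagonal contributions, and a naive bound would be off by a factor of $d$), and then carefully match the constants in Bernstein, the covering number $9^d$, and the final $\exp(-C)$ form so that a single constant $C$ can be used on both sides of the stated inequality.
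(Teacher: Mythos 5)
Your proof is correct, but it takes a genuinely different route from the paper. The paper proves this lemma by applying the matrix Bernstein inequality (Theorem 1.4 of Tropp) to the sum $\sum_i \epsilon_i \bA_i$: it computes $\Exs[(\epsilon_i\bA_i)^2] = \sigma^2 d\, \mathbf{I}$ so that the variance term is $n\sigma^2 d$, and it handles the boundedness hypothesis via a separate high-probability bound $\vecnorm{\epsilon\bA}{2}\le C_1\sigma\sqrt{d}$ (strictly speaking this step calls for a truncation argument, since matrix Bernstein requires an almost-sure bound; the paper glosses over this). You instead reduce to scalars through the variational formula for the spectral norm of the symmetric matrix $M_n$, observe that $u^\top\bA_i u$ is sub-Gaussian with a dimension-free parameter (your variance-proxy computation $\sum_j u_j^4 + 4\sum_{j<k}u_j^2u_k^2 \le 2$ is the right one), conclude that $\epsilon_i(u^\top\bA_i u)$ is sub-exponential with parameter $O(\sigma)$, and finish with scalar Bernstein plus a union bound over a $1/4$-net. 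This cleanly sidesteps the unboundedness issue that makes the paper's matrix-Bernstein application informal, at the cost of the explicit entropy term $d\log 9$ and the restriction to the quadratic regime $t/\sigma\lesssim 1$, i.e.\ $n\gtrsim C^2 d$. That restriction is not in the lemma statement but is guaranteed by the sample-size assumption of Theorem~\ref{theorem:sample-convergence-T-small} wherever the lemma is invoked (and the paper's own constants are equally loose at this point), so your argument delivers the claimed tail $\exp(-C)$ after the rescaling of constants you describe.
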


\begin{lemma}
\label{lemma:concentration-uniform-main-text}
Let $\bA_i$ be a symmetric random matrix of dimension $d$ by $d$. Its upper triangle entries ($i\geq j$) are independently sampled from an identical  sub-Gaussian distribution whose mean is $0$ and variance proxy is $1$. If $\bU$ is of rank $k$ and is in a bounded spectral norm ball of radius $R$ (i.e., $\|\bU\|_2 \leq R$),
\begin{align*}
\Prob \parenth{\sup_{\bU: \vecnorm{\bU}{2} \leq R}\vecnorm{\frac{1}{n}\sum_i^n \parenth{\langle \bA_i, \bU \rangle \bA_i - \bU}}{2} \leq \sqrt{\frac{d \log d}{n}} \sqrt{k}R} 
\geq  1 - \exp \parenth{ - C_2 \log d} .
\end{align*}
\end{lemma}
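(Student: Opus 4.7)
The plan is to establish this uniform bound via a standard discretization argument: first apply a matrix Bernstein concentration inequality at a fixed rank-$k$ matrix $\bU$, then take an $\epsilon$-net over the class of such matrices, and finally close the gap between the net and the continuum using the linearity of the relevant operator in $\bU$.

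For the pointwise step, I would fix a symmetric rank-$k$ matrix $\bU$ with $\vecnorm{\bU}{2}\le R$ and define the i.i.d.\ zero-mean random matrices $Z_i := \angles{\bA_i,\bU}\bA_i - \bU$. Using the sub-Gaussian structure of the entries of $\bA_i$ and the fact that a rank-$k$ matrix satisfies $\fronorm{\bU}\le\sqrt{k}\,\vecnorm{\bU}{2}$, I would bound the matrix variance statistic $\vecnorm{\Exs[Z_i^2]}{2}\lesssim d\,\fronorm{\bU}^2 \le dkR^2$, and bound $\vecnorm{Z_i}{2}$ via a truncation so that a sub-exponential version of matrix Bernstein applies. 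This yields, for a single $\bU$,
\begin{align*}
\Prob\left(\vecnorm{\tfrac{1}{n}\sum_i Z_i}{2} \ge c\sqrt{\tfrac{dk\log d}{n}}\,R\right) \;\le\; \exp(-c'kd\log d).
\end{align*}

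For the net step, I would parameterize each rank-$k$ symmetric $\bU$ with $\vecnorm{\bU}{2}\le R$ as $\bU=\bW\bD\bW^\top$ with $\bW\in\Rspace^{d\times k}$ on the Stiefel manifold and symmetric $\bD\in\Rspace^{k\times k}$ with $\vecnorm{\bD}{2}\le R$. Standard volume arguments give an $\epsilon$-net (in spectral norm) for this class with cardinality at most $(CR/\epsilon)^{Ckd}$. A union bound over the net multiplies the failure probability above by this cardinality; since the net size contributes $\exp(Ckd\log(CR/\epsilon))$ while the tail contributes $\exp(-c'kd\log d)$, one can take $\epsilon$ to be a small polynomial in $R$ and $d$ and still obtain total failure probability $\exp(-C_2\log d)$, as claimed.

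Finally, to move from the net back to the continuum, I would use that $T(\bU):=\tfrac{1}{n}\sum_i(\angles{\bA_i,\bU}\bA_i-\bU)$ is linear in $\bU$: for any $\bU$ in the admissible class and its closest net point $\widetilde{\bU}$, the difference $\bU-\widetilde{\bU}$ has rank at most $2k$ and spectral norm at most $\epsilon$, so applying the same pointwise concentration to the (slightly larger) rank-$2k$ class controls $\vecnorm{T(\bU-\widetilde{\bU})}{2}$ and lets it be absorbed into the target bound by shrinking $\epsilon$. The main technical obstacle is the pointwise Bernstein step: the summands $\angles{\bA_i,\bU}\bA_i$ are products of sub-Gaussians (hence sub-exponential and unbounded), so one must carefully truncate or invoke a Bernstein-type inequality for sub-exponential random matrices to obtain both the correct $\sqrt{k}$ dependence in the rate and the correct $\log d$ exponent in the failure probability. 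Once that pointwise bound is in place, the $\epsilon$-net and linearity arguments are routine.
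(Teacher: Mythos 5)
Your overall route is the same as the paper's: a pointwise matrix--Bernstein bound at a fixed $\bU$, an $\epsilon$-net over a factor parameterization of the rank-$k$, spectral-norm-bounded class (you net a Stiefel parameterization $\bU=\bW\bD\bW^\top$; the paper writes $\bU=\sum_{i=1}^k\bu_i\bu_i^\top$ and nets each $\bu_i$ --- both nets have cardinality $\exp(\Theta(kd))$), a union bound, and $\|\bU\|_F\le\sqrt{k}\,\|\bU\|_2$ to pass from Frobenius to spectral norm. Your explicit net-to-continuum step via linearity of $T(\cdot)$ and the rank-$\le 2k$ difference is a detail the paper leaves implicit.

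The concrete problem is the pointwise tail you claim, and it is load-bearing for your union bound. With the per-summand variance statistic $\|\Exs[Z_i^2]\|_2\lesssim d\,\|\bU\|_F^2\le dkR^2$ that you yourself state, the Gaussian regime of matrix Bernstein at deviation $t=c\sqrt{dk\log d/n}\,R$ for the average gives an exponent of order $-\,nt^2/(dkR^2)=-\Theta(\log d)$, i.e.\ a tail of order $d^{-c}$, not $\exp(-c'kd\log d)$; the sub-exponential range term only makes this worse, which is exactly why the paper's fixed-$\bU$ bound (Lemma~\ref{lemma:concentration-standard}) delivers only $1-\exp(-C_2\log d)$ under $n\gtrsim d\log^3 d$. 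A tail of $d^{-c}$ cannot absorb a union bound over $\exp\bigl(Ckd\log(CR/\epsilon)\bigr)$ net points, so the argument as written does not close: at the stated deviation level you would need $n\gtrsim kd^2$, or an argument that exploits the low-rank structure more delicately than a raw union bound over the net. (The paper's own proof invokes its fixed-$\bU$ lemma at every net point and ``takes a union bound'' without accounting for the net's cardinality, so it is open to the same objection; but your write-up makes the exponent claim explicit, and the mismatch between your variance bound and your claimed tail $\exp(-c'kd\log d)$ is the step that fails.)
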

The proof of the above two concentration results can be found in the Appendix~\ref{appendix:concentration-proof}.
If we invoke these lemmas for $\Delta_t$, then we can immediately have
\begin{align*}
\vecnorm{\Delta_t}{2} &= \vecnorm{\frac{1}{n} \sum_i^n  \angles{\senseMat_i, \fitMat_t \fitMat_t^\top - \mathbf{X}^*} \senseMat_i   - (\fitMat_t \fitMat_t^\top - \mathbf{X}^*) + \frac{1}{n} \sum_i^n \epsilon_i \senseMat_i}{2}\\
&\leq 5\sqrt{\frac{k d \log d}{n} }  D_t + \sqrt{\frac{d \log d}{n}} \sigma.
\end{align*}

\paragraph{The linear convergence part.}
Claim (a) in the main theorem is about linear convergence.
We mention in the remark that equations~\eqref{eqn:contraction-SS-main-text} and~\eqref{eqn:contraction-ST-main-text} imply constant contractions for $\vecnorm{\Ucoeff_{t}\Ucoeff_{t}^\top - \trueUcoeff}{2}$ and $\vecnorm{\Ucoeff_{t}\Vcoeff_{t}^\top}{2}$ respectively.
To make the argument more precise,
we consider $\vecnorm{\Ucoeff_{t}\Ucoeff_{t}^\top - \trueUcoeff}{2} > 1000 \sqrt{\frac{k \kappa^2 d \log d}{n} }  \sigma_r > 1000 \sqrt{\frac{k \kappa^2 d \log d}{n} }  D_t$. 
Then, we find that $\sqrt{\frac{k d \log d}{n} }  D_t \leq 0.1\eta \sigma_r \vecnorm{\Ucoeff_{t}\Ucoeff_{t}^\top - \trueUcoeff}{2}$ since $\eta \sigma_r = 0.01/\kappa$.
Also, $\frac{4}{10} \sqrt{\frac{d \log d }{n}}\sigma < 0.1\eta \sigma_r \vecnorm{\Ucoeff_{t}\Ucoeff_{t}^\top - \trueUcoeff}{2}$ by the choice of the constants in the lower bound of $n$.
Hence, when $\vecnorm{\Ucoeff_{t}\Ucoeff_{t}^\top - \trueUcoeff}{2} > 1000 \sqrt{\frac{k \kappa^2 d \log d}{n} }  \sigma_r$, we find that
\begin{align*}
    \vecnorm{\Ucoeff_{t+1}\Ucoeff_{t+1}^\top - \trueUcoeff}{2} 
    \leq &
    \parenth{1 - \frac{7}{10}\eta\sigma_r} \vecnorm{\Ucoeff_{t}\Ucoeff_{t}^\top - \trueUcoeff}{2}
    + \sqrt{\frac{k d \log d}{n} }  D_t 
    + \frac{4}{10} \sqrt{\frac{d \log d }{n}}\sigma \\
    \leq &
    \parenth{1 - \frac{5}{10}\eta\sigma_r} \vecnorm{\Ucoeff_{t}\Ucoeff_{t}^\top - \trueUcoeff}{2}.
\end{align*}
The same arguments hold for $\vecnorm{\Ucoeff_{t}\Vcoeff_{t}^\top}{2}$.
Therefore to obtain the linear convergence result as the part (a) in the main theorem, we can just invoke concentration lemmas for each iteration to obtain constant contraction, and then take union bound over all the iterations.

\paragraph{The sub-linear convergence part.}
Claim (b) of the main theorem is about sublinear convergence, and is build upon equation~\eqref{eqn:sub-linear-contraction-in-lemma}.

Before we discuss how equation~\eqref{eqn:tight-delta_bound} holds in this sub-linear convergence case for all iteration $t$,
we briefly illustrate how equation~\eqref{eqn:sub-linear-contraction-in-lemma} implies convergence to $\Theta\parenth{\epsilon_{stat}}$ after $\mathcal{O}(1/\epsilon_{stat})$ iterations.
By equation~\eqref{eqn:sub-linear-contraction-in-lemma}, we know that $A_{t+1} \leq \parenth{1 - \frac{1}{2} \eta A_t} A_t $ where $A_t = D_t -  50 \epsilon_{stat}$. Hence
\begin{align*}
     A_{t+1} \leq \parenth{1 - \frac{1}{2} \eta A_t} A_t 
     \overset{(1)}{\leq} &\parenth{1 -   \frac{2}{ t + \frac{4}{\eta A_0}}} \frac{4}{\eta t + \frac{4}{A_0}} 
     = \frac{\parenth{t + \frac{4}{\eta A_0}}-2}{t + \frac{4}{\eta A_0}} \frac{4}{\eta \parenth{t + \frac{4}{\eta A_0}}}\\
     \overset{(2)}{\leq} & \frac{4}{\eta \parenth{t + 1+ \frac{4}{\eta A_0}}},
\end{align*}
where inequality $(1)$ holds because $\parenth{1 - \frac{1}{2} \eta A_t} A_t$ is quadratic with respect to $A_t$ and we plug-in the optimal $A_t$; inequality $(2)$ holds because $\frac{\parenth{t + \frac{4}{\eta A_0}}-2}{\parenth{t + \frac{4}{\eta A_0}}^2} \leq \frac{1}{\parenth{t + \frac{4}{\eta A_0}} + 1}$.
Therefore, after $t \geq \Theta\parenth{\frac{1}{\eta \epsilon_{stat}}}$ number of iterations, $A_t = D_t - 50 \kappa \sqrt{\frac{d \log d }{n}}\sigma \leq \Theta\parenth{\epsilon_{stat}}$.

We still need to show that equation~\eqref{eqn:tight-delta_bound} holds (with probability at least $1 - d^{-c}$) in this sub-linear convergence case for all iteration $t$ with high probability.
To do so, we need to use the localization technique \citep{kwon2020minimax, dwivedi2019challenges, Raaz_Ho_Koulik_2018}.
Without the localization technique, the statistical error will be proportional to $n^{-1/4}$ which is not tight.
With the localization argument, we can improve it to $n^{-1/2}$, making the result nearly minimax optimal.
We leave the details of this argument to Appendix~\ref{sec:proof-main-theorem}.

\section{Discussion}
\label{sec:discussion}
In the paper, we provide a comprehensive analysis of the computational and statistical complexity of the Factorized Gradient Descent method under the over-parameterized matrix sensing problem, namely, when the true rank is unknown and over-specified. We show that $\vecnorm{\fitMat_t \fitMat_t - \trueMat}{F}^2$ converges to a minimax optimal radius of convergence $\tilde{\mathcal{O}} \parenth{k d/n}$ after $\tilde{\mathcal{O}}(\sqrt{\frac{n}{d}})$ number of iterations where $\fitMat_t$ is the output of FGD after $t$ iterations. We now discuss a few natural questions with this work.

\vspace{0.5 em}
\noindent
\textbf{Can the results in~\cite{li2018algorithmic} imply this work?} We would like to explain the difference between our results and those in~\cite{li2018algorithmic}.
If we choose the specified rank $k$ as $d$, we have the same problem setting, and use the same algorithm. However, the results are different.
The key difference here is the sample complexity.
As~\cite{li2018algorithmic} focus on over-parameterization, their analysis requires $\tilde{\mathcal{O}} (d r)$ samples, where $r$ is the rank of the ground truth matrix $\bX^*$, while our analysis requires $\tilde{\mathcal{O}} (d k) = \tilde{\mathcal{O}} (d^2)$ samples when $k = d$.
Since they only require $\tilde{\mathcal{O}} (d r)$ samples, they cannot control the error of the over-parameterization part (equivalent to our $\Vcoeff \Vcoeff^\top$ part). 
In fact in their analysis, they only show that in a limited number of step, this error does not blow up.
While with $\tilde{\mathcal{O}} (d^2)$ we can show that the over-parameterization part also converges, although with a slower convergence rate. Therefore, their results cannot imply ours.

\vspace{0.5 em}
\noindent
\textbf{Extensions to general convex function of low rank problem.}
One natural question to ask is, if our analysis can be extended to general convex function with respect to a low rank matrix. In particular, we consider minimizing a convex function $f \parenth{\bX}$, where $\bX \in \mathbb{R}^{d*d}$ is PSD. Let $\bX^*$ be the ground truth solution and it is of rank $r$.
We can as well reformulate our problem as minimizing $f \parenth{\fitMat \fitMat^\top}$ where $\fitMat \in \mathbb{R}^{d*k}$.
Then as long as the population gradient with respect to $\bX$ is $\bX - \bX^*$, and the sample gradients have good concentration around the population gradient, our analysis techniques should be applicable.
For example, matrix completion and principle component analysis fall into this category~\citep{chen2015fast}.
However, extension the current results with over-parameterized matrix sensing to general convex function as in~\cite{chen2015fast}, or in~\cite{bhojanapalli2016dropping} would require more refined analysis. We leave this question for the future work.



\section{Acknowledgements}
We would like to thank Raaz Dwivedi, Koulik Khamaru, and Martin Wainwright for helpful discussion with this work.


\appendix

\section{Proofs for population analysis}
\label{appendix:population-proof}
In this appendix, we provide all the proofs for population analysis of matrix sensing problem.
\subsection{Proof of Lemma \ref{lemma:pop-contraction}}
\label{subsec:proof:lemma:pop_contraction}
We prove the four contraction results separately. 
To simplify the ensuing presentation, we drop all subscripts $t$ associated with the iteration counter.
\paragraph{Proof of the contraction result $(a)$ in Lemma \ref{lemma:pop-contraction}:} We would like to prove the following inequality:
\begin{align*}
    \vecnorm{\trueUcoeff - \popUup(\Ucoeff)  \popUup(\Ucoeff)^\top}{2} \leq \parenth{1 - \eta \sigma_r}\vecnorm{\trueUcoeff - \Ucoeff \Ucoeff^\top}{2} + 3 \eta \vecnorm{\Ucoeff \Vcoeff^\top}{2}^2.
\end{align*}
Indeed, from the formulation of $\popUup(\Ucoeff)$, we have
\begin{align*}
    & \hspace{- 3 em} \trueUcoeff - \popUup(\Ucoeff)  \popUup(\Ucoeff) ^\top  \\
    =&\trueUcoeff - 
    \parenth{\Ucoeff - \stepsize \Ucoeff \Vcoeff^\top \Vcoeff + \stepsize \parenth{\trueUcoeff - \Ucoeff \Ucoeff^\top} \Ucoeff }
    \parenth{\Ucoeff - \stepsize \Ucoeff \Vcoeff^\top \Vcoeff + \stepsize \parenth{\trueUcoeff - \Ucoeff \Ucoeff^\top} \Ucoeff }^\top.
\end{align*}
We can group the terms in the RHS of the above equation according to whether they contain $\trueUcoeff - \Ucoeff \Ucoeff^\top$ or not, namely, we find that
\begin{align*}
    \trueUcoeff - \popUup(\Ucoeff)  \popUup(\Ucoeff) ^\top  &= I + II\\
    \text{where}, \quad \quad 
    I &= \parenth{\trueUcoeff - \Ucoeff \Ucoeff^\top} - \eta \parenth{\trueUcoeff - \Ucoeff \Ucoeff^\top} \Ucoeff \Ucoeff^\top - \eta \Ucoeff \Ucoeff^\top \parenth{\trueUcoeff - \Ucoeff \Ucoeff^\top} \\
    & \quad - \eta^2 \parenth{\trueUcoeff - \Ucoeff \Ucoeff^\top} \Ucoeff \Ucoeff^\top \parenth{\trueUcoeff - \Ucoeff \Ucoeff^\top} \\
    & \quad + \eta^2 \parenth{\trueUcoeff - \Ucoeff \Ucoeff^\top} \Ucoeff \Vcoeff^\top \Vcoeff \Ucoeff^\top + \eta^2 \Ucoeff \Vcoeff^\top \Vcoeff \Ucoeff^\top \parenth{\trueUcoeff - \Ucoeff \Ucoeff^\top}, \\
    II &= 2 \eta \Ucoeff \Vcoeff^\top \Vcoeff \Ucoeff^\top - \eta^2 \Ucoeff \Vcoeff^\top \Vcoeff \Vcoeff^\top \Vcoeff \Ucoeff^\top.
\end{align*}

We first deal with the $I$ term. A direct application of inequality with operator norm leads to
\begin{align*}
    \vecnorm{I}{2} &\le \vecnorm{\trueUcoeff - \Ucoeff \Ucoeff^\top}{2} \vecnorm{\Id - 2 \eta \Ucoeff \Ucoeff^\top - \eta^2 \Ucoeff \Ucoeff^\top \parenth{\trueUcoeff - \Ucoeff \Ucoeff^\top} + 2\eta^2 \Ucoeff \Vcoeff^\top \Vcoeff \Ucoeff^\top}{2}.
\end{align*}
From the choice of the step size and the initialization condition, the term $\Id - 2 \eta \Ucoeff \Ucoeff^\top - \eta^2 \Ucoeff \Ucoeff^\top \parenth{\trueUcoeff - \Ucoeff \Ucoeff^\top} + 2\eta^2 \Ucoeff \Vcoeff^\top \Vcoeff \Ucoeff^\top$ is PSD matrix. Furthermore, for any $\|\bz\| = 1$, we have
\begin{align*}
    & \hspace{- 6 em} \bz^\top \parenth{\Id - 2 \eta \Ucoeff \Ucoeff^\top - \eta^2 \Ucoeff \Ucoeff^\top \parenth{\trueUcoeff - \Ucoeff \Ucoeff^\top} + 2\eta^2 \Ucoeff \Vcoeff^\top \Vcoeff \Ucoeff^\top} \bz  \\
    \le & 1 - 2 \eta \vecnorm{\Ucoeff \bz}{2}^2  + \eta^2 \vecnorm{\Ucoeff \Ucoeff^\top}{2} \vecnorm{ \trueUcoeff - \Ucoeff \Ucoeff^\top}{2} + 2 \eta^2 \vecnorm{\Vcoeff \Vcoeff^\top}{2} \vecnorm{\Ucoeff \bz}{2}^2 \\
    \stackrel{(i)}{\le} &1 - 2 \eta \sigma_r + 3 \eta^2 \sigma_r \sigma_1 \\
    \stackrel{(ii)}{\le} & 1 - \eta \sigma_r,
\end{align*}
where in step~(i) we used $0.9 \sigma_r \Id \preceq \Ucoeff \Ucoeff^\top \preceq (\sigma_1+0.1\sigma_r) \Id$,  $\vecnorm{\trueUcoeff - \Ucoeff \Ucoeff^\top}{2} \le 0.1 \sigma_r$ and $\vecnorm{\Vcoeff \Vcoeff^{\top}}{2}\leq 1.1 \sigma_r$ by initialization condition and triangular inequality; step~(ii) follows from choice of step size $\eta = \frac{1}{100\sigma_1}$, and definition of the conditional number $\kappa = \sigma_1 / \sigma_r$. 
Therefore, we arrive at the following inequality: 
\begin{align}
\vecnorm{\Id - 2 \eta \Ucoeff \Ucoeff^\top - \eta^2 \Ucoeff \Ucoeff^\top \parenth{\trueUcoeff - \Ucoeff \Ucoeff^\top} + 2\eta^2 \Ucoeff \Vcoeff^\top \Vcoeff \Ucoeff^\top}{2} \leq 1 - \eta \sigma_r. \label{eq:population_analysis_first_term_I}
\end{align}

To deal with the $II$ term, we have to establish the connection between $\trueUcoeff - \Ucoeff \Ucoeff^\top$ and $\Ucoeff \Vcoeff^\top$. Note that, $\vecnorm{\eta^2 \Ucoeff \Vcoeff^\top \Vcoeff \Vcoeff^\top \Vcoeff \Ucoeff^\top}{2} \leq \eta^2 \vecnorm{\Vcoeff\Vcoeff^\top}{2} \vecnorm{\Ucoeff\Vcoeff^\top\Vcoeff\Ucoeff^\top}{2} \leq \eta \vecnorm{\Ucoeff\Vcoeff^\top\Vcoeff\Ucoeff^\top}{2}$ since $\eta \leq 1/\sigma_r$. Hence, we have
\begin{align}
    \vecnorm{II}{2}  \leq \vecnorm{2 \eta \Ucoeff \Vcoeff^\top \Vcoeff \Ucoeff^\top}{2} + \vecnorm{ \eta^2 \Ucoeff \Vcoeff^\top \Vcoeff \Vcoeff^\top \Vcoeff \Ucoeff^\top }{2}
    \leq 3 \eta \vecnorm{\Ucoeff \Vcoeff^\top \Vcoeff \Ucoeff^\top}{2}. \label{eq:population_analysis_first_term_II}
\end{align}
Collecting the results from equations~\eqref{eq:population_analysis_first_term_I} and~\eqref{eq:population_analysis_first_term_II}, we obtain 
\begin{align*}
    \vecnorm{\trueUcoeff - \popUup(\Ucoeff)  \popUup(\Ucoeff)^\top}{2} \leq \parenth{1 - \eta \sigma_r}\vecnorm{\trueUcoeff - \Ucoeff \Ucoeff^\top}{2} + 3 \eta \vecnorm{\Ucoeff \Vcoeff^\top}{2}^2.
\end{align*}
Therefore, we reach the conclusion with claim (a) in Lemma~\ref{lemma:pop-contraction}.
\paragraph{Proof of the contraction result $(b)$ in Lemma \ref{lemma:pop-contraction}:} Recall that we want to demonstrate that
\begin{align*}
    \vecnorm{\popUup(\Ucoeff)\popVup(\Vcoeff)^\top }{2} \leq \vecnorm{\Ucoeff \Vcoeff^\top}{2} \parenth{1 - \eta \sigma_r}.
\end{align*}
Firstly, from the formulations of $\popUup(\Ucoeff)$ and $\popVup(\Vcoeff)$, we have the following equations:
\begin{align}
    & \hspace{- 3 em} \popUup(\Ucoeff)\popVup(\Vcoeff)^\top \nonumber \\ 
    = & \parenth{\Ucoeff + \stepsize \parenth{\trueUcoeff -\Ucoeff \Ucoeff^\top } \Ucoeff
 	 - \stepsize \Ucoeff \Vcoeff^\top \Vcoeff }
     \parenth{\Vcoeff + \stepsize \parenth{\trueVcoeff -\Vcoeff \Vcoeff^\top } \Vcoeff
 	 - \stepsize \Vcoeff \Ucoeff^\top \Ucoeff }^\top \nonumber \\
    =& \frac{1}{2} \parenth{\Id - 2 \eta \Ucoeff \Ucoeff^\top + 2\eta \parenth{\trueUcoeff - \Ucoeff \Ucoeff^\top }  - 2 \eta^2  \parenth{\trueUcoeff - \Ucoeff \Ucoeff^\top } \Ucoeff \Ucoeff^\top + 2 \eta^2 \Ucoeff \Vcoeff^\top \Vcoeff \Ucoeff^\top }  \Ucoeff \Vcoeff^\top \nonumber \\
    & + \frac{1}{2} \Ucoeff \Vcoeff^\top \parenth{\Id + 2 \eta \parenth{\trueVcoeff - \Vcoeff \Vcoeff^\top} - 2 \eta \Vcoeff \Vcoeff^\top  - 2 \eta^2  \Vcoeff \Vcoeff^\top\parenth{\trueVcoeff - \Vcoeff \Vcoeff^\top}}  \nonumber \\
    & + \eta^2 \parenth{\trueUcoeff - \Ucoeff \Ucoeff^\top} \Ucoeff \Vcoeff^\top \parenth{\trueVcoeff - \Vcoeff \Vcoeff^\top} . \label{eq:population_analysis_second_term}
\end{align}
Recall that, we have  $0.9 \sigma_r \Id \preceq \Ucoeff \Ucoeff^\top \preceq (\sigma_1+0.1\sigma_r) \Id$,  $\vecnorm{\trueUcoeff - \Ucoeff \Ucoeff^\top}{2} \le 0.1 \sigma_r$ and $\vecnorm{\Vcoeff \Vcoeff^{\top}}{2}\leq 1.1 \sigma_r$ by initialization condition and triangular inequality, and we choose $\eta = \frac{1}{100\sigma_1}$.

For the term in the first line of the RHS of equation~\eqref{eq:population_analysis_second_term} we have 
\begin{align*}
    & \vecnorm{\frac{1}{2} \parenth{\Id - 2 \eta \Ucoeff \Ucoeff^\top + 2\eta \parenth{\trueUcoeff - \Ucoeff \Ucoeff^\top }  - 2 \eta^2  \parenth{\trueUcoeff - \Ucoeff \Ucoeff^\top } \Ucoeff \Ucoeff^\top + 2 \eta^2 \Ucoeff \Vcoeff^\top \Vcoeff \Ucoeff^\top }  \Ucoeff \Vcoeff^\top}{2} \\
    \leq & \frac{1}{2} \vecnorm{\Ucoeff \Vcoeff^\top}{2} \parenth{ \vecnorm{1 - 2 \eta \Ucoeff\Ucoeff^\top}{2} + 2  \eta \vecnorm{\trueUcoeff - \Ucoeff \Ucoeff^\top}{2} + 2 \eta^2 \vecnorm{\trueUcoeff - \Ucoeff \Ucoeff^\top}{2}\vecnorm{\Ucoeff \Ucoeff^\top}{2} + 2 \eta^2 \vecnorm{\Ucoeff \Vcoeff^\top \Vcoeff \Ucoeff^\top}{2}} \\
    \leq & \frac{1}{2} \vecnorm{\Ucoeff \Vcoeff^\top}{2} \parenth{1 - 1.8 \eta \sigma_r + 0.2  \eta \sigma_r + 0.0022 \eta \sigma_r + 0.02 \eta^2 \sigma_r^2} \\
    \leq & \frac{1}{2} \vecnorm{\Ucoeff \Vcoeff^\top}{2} \parenth{1 - 1.5 \eta \sigma_r}.
\end{align*}
For the term in the second line of the RHS of equation~\eqref{eq:population_analysis_second_term}, direct calculation yields that 
\begin{align*}
    & \hspace{- 6 em}  \vecnorm{\frac{1}{2} \Ucoeff \Vcoeff^\top \parenth{\Id + 2 \eta \parenth{\trueVcoeff - \Vcoeff \Vcoeff^\top} - 2 \eta \Vcoeff \Vcoeff^\top  - 2 \eta^2  \Vcoeff \Vcoeff^\top\parenth{\trueVcoeff - \Vcoeff \Vcoeff^\top}}}{2} \\
    \leq & \frac{1}{2} \vecnorm{\Ucoeff \Vcoeff^\top}{2} \parenth{\vecnorm{\Id - 2 \eta \Vcoeff \Vcoeff^\top}{2} + 0.2 \eta  \sigma_r + 2.2 \eta^2  \sigma_r^2}\\
    \leq & \frac{1}{2} \vecnorm{\Ucoeff \Vcoeff^\top}{2} \parenth{1 + 0.3 \eta \rho \sigma_r}.
\end{align*}
Lastly, for the second order term in the third line of the RHS of equation~\eqref{eq:population_analysis_second_term} we have
\begin{align*}
    \vecnorm{\eta^2 \parenth{\trueUcoeff - \Ucoeff \Ucoeff^\top} \Ucoeff \Vcoeff^\top \parenth{\trueVcoeff - \Vcoeff \Vcoeff^\top}}{2} \leq \eta^2 \rho^2 \sigma_r^2 \vecnorm{\Ucoeff \Vcoeff^\top}{2} = \frac{1}{1000} \eta \sigma_r\vecnorm{\Ucoeff \Vcoeff^\top}{2}.
\end{align*}
Plugging the above results into equation~\eqref{eq:population_analysis_second_term} leads to
\begin{align*}
    \vecnorm{\popUup(\Ucoeff)\popVup(\Vcoeff)^\top }{2} \leq \vecnorm{\Ucoeff \Vcoeff^\top}{2} \parenth{1 - \eta \sigma_r}.
\end{align*}
Hence, we obtain the conclusion of claim (b) in Lemma~\ref{lemma:pop-contraction}.
\paragraph{Proof of the contraction result (c) in Lemma \ref{lemma:pop-contraction}:} We would like to establish that
\begin{align*}
    \vecnorm{\popVup(\Vcoeff) \popVup(\Vcoeff)^\top }{2} \le \vecnorm{\Vcoeff \Vcoeff^\top}{2} \parenth{1 - \eta \vecnorm{\Vcoeff \Vcoeff^\top}{2} + 2\eta \vecnorm{\trueVcoeff}{2}}.  
\end{align*}
To check the convergence in low SNR, i.e., with small singular values, we assume that $\|\trueVcoeff \| \ll \sigma_r$. It suggests that the focus is how fast $\Vcoeff \Vcoeff^\top$ converges to 0 when $\|\Vcoeff \Vcoeff^\top\|\gg \|\trueVcoeff \|$. Indeed, simple algebra indicates that 
\begin{align*}
    &\popVup(\Vcoeff) \popVup(\Vcoeff)^\top  \\
    = & \parenth{\Vcoeff + \stepsize \parenth{\trueVcoeff -\Vcoeff \Vcoeff^\top } \Vcoeff
 	 - \stepsize \Vcoeff \Ucoeff^\top \Ucoeff } 
 	 \parenth{\Vcoeff + \stepsize \parenth{\trueVcoeff -\Vcoeff \Vcoeff^\top } \Vcoeff
 	 - \stepsize \Vcoeff \Ucoeff^\top \Ucoeff }^\top \\
    = & \Vcoeff \Vcoeff^\top + \eta \Vcoeff \Vcoeff^\top  \parenth{\trueVcoeff -\Vcoeff \Vcoeff^\top } - \eta \Vcoeff \Ucoeff^\top \Ucoeff \Vcoeff^\top\\
    &+ \eta \parenth{\trueVcoeff -\Vcoeff \Vcoeff^\top } \Vcoeff \Vcoeff^\top + \eta^2 \parenth{\trueVcoeff -\Vcoeff \Vcoeff^\top } \Vcoeff \Vcoeff^\top \parenth{\trueVcoeff -\Vcoeff \Vcoeff^\top } - \eta^2 \parenth{\trueVcoeff -\Vcoeff \Vcoeff^\top } \Vcoeff \Ucoeff^\top \Ucoeff \Vcoeff^\top\\
    &- \eta \Vcoeff \Ucoeff^\top \Ucoeff \Vcoeff^\top - \eta^2 \Vcoeff \Ucoeff^\top \Ucoeff\Vcoeff^\top \parenth{\trueVcoeff -\Vcoeff \Vcoeff^\top }^\top + \eta^2 \Vcoeff \Ucoeff^\top \Ucoeff \Ucoeff^\top \Ucoeff \Vcoeff^\top\\
    =& III + IV + V,
\end{align*}
where we use the following shorthand notation:
\begin{align*}
    III = & \parenth{\Vcoeff \Vcoeff^\top - 2\eta \parenth{\Vcoeff \Vcoeff^\top}^2 + \eta^2 \parenth{\Vcoeff \Vcoeff^\top}^3},\\
    IV = &\eta \parenth{\trueVcoeff \Vcoeff \Vcoeff^\top + \Vcoeff \Vcoeff^\top \trueVcoeff}- \parenth{\eta^2 \trueVcoeff \parenth{\Vcoeff \Vcoeff^\top }^2 + \eta^2  \parenth{\Vcoeff \Vcoeff^\top }^2 \trueVcoeff} + \eta^2 \trueVcoeff \parenth{\Vcoeff \Vcoeff^\top } \trueVcoeff, \\
    V = &- 2 \eta \Vcoeff \Ucoeff^\top \Ucoeff \Vcoeff^\top  
    - \eta^2 \Vcoeff \Ucoeff^\top \Ucoeff\Vcoeff^\top \parenth{\trueVcoeff -\Vcoeff \Vcoeff^\top }^\top
    - \eta^2 \parenth{\trueVcoeff -\Vcoeff \Vcoeff^\top }\Vcoeff \Ucoeff^\top \Ucoeff\Vcoeff^\top  \\
    &+ \eta^2 \Vcoeff \Ucoeff^\top \Ucoeff \Ucoeff^\top \Ucoeff \Vcoeff^\top.
\end{align*} 
We first bound the $IV$ term. Inequalities with operator norm show that
\begin{align*}
    \vecnorm{\trueVcoeff \Vcoeff \Vcoeff^\top}{2} &\le \vecnorm{\trueVcoeff}{2} \vecnorm{ \Vcoeff \Vcoeff^\top}{2}, \\
    \vecnorm{\trueVcoeff \parenth{ \Vcoeff \Vcoeff^\top}^2}{2} &\le \sigma_r \vecnorm{\trueVcoeff}{2} \vecnorm{ \Vcoeff \Vcoeff^\top}{2}, \\
    \vecnorm{\trueVcoeff \parenth{ \Vcoeff \Vcoeff^\top} \trueVcoeff}{2} &\le \sigma_r \vecnorm{\trueVcoeff}{2} \vecnorm{ \Vcoeff \Vcoeff^\top}{2}.
\end{align*}
Given these bounds, we find that
\begin{align}
    \vecnorm{IV}{2} \leq \parenth{\eta + 3 \eta^2 \sigma_r} \vecnorm{\trueVcoeff}{2} \vecnorm{ \Vcoeff \Vcoeff^\top}{2}. \label{eq:population_analysis_third_term_IV}
\end{align}
Now, we move to bound the $V$ term. Indeed, we have
\begin{align}
    &- 2 \eta \Vcoeff \Ucoeff^\top \Ucoeff \Vcoeff^\top  
    - \eta^2 \Vcoeff \Ucoeff^\top \Ucoeff\Vcoeff^\top \parenth{\trueVcoeff -\Vcoeff \Vcoeff^\top }^\top
    - \eta^2 \parenth{\trueVcoeff -\Vcoeff \Vcoeff^\top }\Vcoeff \Ucoeff^\top \Ucoeff\Vcoeff^\top  
    + \eta^2 \Vcoeff \Ucoeff^\top \Ucoeff \Ucoeff^\top \Ucoeff \Vcoeff^\top \nonumber \\
    & \preceq \parenth{-2\eta + 2\eta^2 \rho \sigma_r + \eta^2 \sigma_1} \Vcoeff \Ucoeff^\top \Ucoeff \Vcoeff^\top \preceq 0. \label{eq:population_analysis_third_term_V}
\end{align}
Since $\popVup(\Vcoeff) \popVup (\Vcoeff)^\top$ is PSD, we can just relax this term to zero.
Finally, we bound the $III$ term. Observe that,
\begin{align*}
    \Vcoeff \Vcoeff^\top - 2\eta \parenth{\Vcoeff \Vcoeff^\top }^2 + \eta^2 \parenth{\Vcoeff \Vcoeff^\top }^3 \preceq \Vcoeff \Vcoeff^\top - \eta \parenth{\Vcoeff \Vcoeff^\top }^2,
\end{align*}
since $\eta < 1/\sigma_1$ and $\| \Vcoeff \Vcoeff^\top \| \le \rho \sigma_r$. The remaining task is to bound $\Vcoeff \Vcoeff^\top - \eta \parenth{\Vcoeff \Vcoeff^\top }^2$. Let the singular value decomposition of $\Vcoeff \Vcoeff^\top$ as $Q D Q^\top$. Note that $D$ is a diagonal matrix with diagonal entries less than $(1 + \rho) \sigma_r$. We can proceed as
\begin{align*}
    \|\Vcoeff \Vcoeff^\top - \eta \parenth{\Vcoeff \Vcoeff^\top }^2\| &= \max_{\|z\|=1} \parenth{ z^\top \Vcoeff \Vcoeff^\top z - \eta z^\top \parenth{\Vcoeff \Vcoeff^\top }^2 z } \\
    &= \max_{\|z\|=1} \parenth{ z^\top Q D Q^\top z - \eta z^\top Q D^2 Q^\top z } \\
    &= \max_{\|z'\|=1} \parenth{ z'^\top D z' - \eta z'^\top D^2 z' } \\
    &= \max_{\|z'\|=1} \sum_i \parenth{d_i - \eta d_i^2} {z'}_i^2.
\end{align*}
Since $d_i < \sigma_r$ and $1/2\eta \gg \sigma_1$, the above maximum is obtained at the largest singular value of $\Vcoeff \Vcoeff^\top$. That is, we have
\begin{align}
    \vecnorm{\Vcoeff \Vcoeff^\top - \eta \parenth{\Vcoeff \Vcoeff^\top }^2}{2} \le \vecnorm{\Vcoeff \Vcoeff^\top}{2} \parenth{1 - \eta \vecnorm{\Vcoeff \Vcoeff^\top}{2}}. \label{eq:population_analysis_third_term_III}
\end{align}
Now combining every pieces from equations~\eqref{eq:population_analysis_third_term_IV},~\eqref{eq:population_analysis_third_term_V}, and~\eqref{eq:population_analysis_third_term_III}, we arrive at the following inequality:
\begin{align*}
    \vecnorm{\popVup(\Vcoeff) \popVup(\Vcoeff)^\top }{2} \le \vecnorm{\Vcoeff \Vcoeff^\top}{2} \parenth{1 - \eta \vecnorm{\Vcoeff \Vcoeff^\top}{2} + 2\eta \vecnorm{\trueVcoeff}{2}}. 
\end{align*}
As long as $\vecnorm{\trueVcoeff}{2} \ll \vecnorm{\Vcoeff \Vcoeff^\top}{2}$, the contraction rate is roughly $(1 - \eta \vecnorm{\Vcoeff \Vcoeff^\top}{2})$. Therefore, we obtain the conclusion of claim (c) in Lemma~\ref{lemma:pop-contraction}.


\paragraph{Proof of the contraction result (d) in Lemma \ref{lemma:pop-contraction}:}
Direct calculation shows that
\begin{align*}
    & \hspace{-2 em} \popVup(\Vcoeff) \popVup(\Vcoeff)^\top - \trueVcoeff \\
    =&\parenth{\Vcoeff - \stepsize \parenth{ \Vcoeff \Ucoeff^\top \Ucoeff + \parenth{\Vcoeff \Vcoeff^\top  - \trueVcoeff} \Vcoeff } }
    \parenth{\Vcoeff - \stepsize \parenth{ \Vcoeff \Ucoeff^\top \Ucoeff + \parenth{\Vcoeff \Vcoeff^\top  - \trueVcoeff} \Vcoeff } }^\top 
    - \trueVcoeff\\
    = & VI + VII,
\end{align*}
where we denote VI and VII as follows:
\begin{align*}
    VI = &(\Vcoeff \Vcoeff^\top - \trueVcoeff) - \eta ((\Vcoeff \Vcoeff^\top - \trueVcoeff) \Vcoeff \Vcoeff^\top + \Vcoeff \Vcoeff^\top (\Vcoeff \Vcoeff^\top - \trueVcoeff)) \\
    &+ \eta^2 (\Vcoeff \Vcoeff^\top - \trueVcoeff) \Vcoeff \Vcoeff^\top (\Vcoeff \Vcoeff^\top - \trueVcoeff), \\
    VII = &  \eta^2 (\Vcoeff \Vcoeff^\top - \trueVcoeff) \Vcoeff \Ucoeff^\top \Ucoeff \Vcoeff^\top + \eta^2 \Vcoeff \Ucoeff^\top \Ucoeff \Vcoeff^\top (\Vcoeff \Vcoeff^\top - \trueVcoeff) \\
    & -2 \eta \Vcoeff \Ucoeff^\top \Ucoeff \Vcoeff^\top + \eta^2 \Vcoeff \Ucoeff^\top \Ucoeff \Ucoeff^\top \Ucoeff \Vcoeff^\top.
\end{align*}
We first show that the $\vecnorm{VII}{2} \leq 3 \eta \vecnorm{\Ucoeff\Vcoeff^\top}{2}^2$. 
Firstly, since $\eta \leq \frac{1}{10\sigma_1}$ and the initialization condition $ \vecnorm{\Vcoeff \Vcoeff^\top - \trueVcoeff}{2}\leq \rho \sigma_r$, we have
\begin{align*}
    \eta^2 \vecnorm{(\Vcoeff \Vcoeff^\top - \trueVcoeff) \Vcoeff \Ucoeff^\top \Ucoeff \Vcoeff^\top}{2} \leq \frac{1}{10} \eta \vecnorm{\Vcoeff \Ucoeff^\top \Ucoeff \Vcoeff^\top}{2}.
\end{align*}
Furthermore, by the choice of $\eta$ and the fact that $\vecnorm{\Ucoeff \Ucoeff^\top}{2} \leq \sigma_1 $, we find that
\begin{align*}
    \eta^2 \vecnorm{\Vcoeff \Ucoeff^\top \Ucoeff \Ucoeff^\top \Ucoeff \Vcoeff^\top}{2} 
    \leq \frac{1}{10} \eta \vecnorm{\Vcoeff \Ucoeff^\top \Ucoeff \Vcoeff^\top}{2}.
\end{align*}
Putting these results together we have $\vecnorm{VII}{2} \leq 3 \eta \vecnorm{\Ucoeff\Vcoeff^\top}{2}^2$.

Now for the $VI$ term, direct calculation shows that
\begin{align*}
    VI 
    = & \parenth{\Vcoeff \Vcoeff^\top - \trueVcoeff} - \eta \parenth{\Vcoeff \Vcoeff^\top - \trueVcoeff} \Vcoeff \Vcoeff^\top \parenth{I - \frac{\eta}{2} \parenth{\Vcoeff \Vcoeff^\top - \trueVcoeff}}\\
    &- \eta \parenth{I - \frac{\eta}{2} \parenth{\Vcoeff \Vcoeff^\top - \trueVcoeff}}\Vcoeff \Vcoeff^\top \parenth{\Vcoeff \Vcoeff^\top - \trueVcoeff} \\
    = & \parenth{\Vcoeff \Vcoeff^\top - \trueVcoeff} \parenth{\frac{\Id}{2} - \eta \Vcoeff \Vcoeff^\top \parenth{I - \frac{\eta}{2} \parenth{\Vcoeff \Vcoeff^\top - \trueVcoeff}}} \\
    &+  \parenth{\frac{\Id}{2} - \eta \parenth{I - \frac{\eta}{2} \parenth{\Vcoeff \Vcoeff^\top - \trueVcoeff}}\Vcoeff \Vcoeff^\top } \parenth{\Vcoeff \Vcoeff^\top - \trueVcoeff}.
\end{align*}
Note that, since $\vecnorm{\Vcoeff \Vcoeff^\top}{2} \leq \rho \sigma_r$, $\vecnorm{\Vcoeff \Vcoeff^\top - \trueVcoeff}{2} \leq \rho \sigma_r$, and $\eta = \frac{1}{C\sigma_1}$, we obtain that
\begin{align*}
    \vecnorm{\frac{\Id}{2} - \eta \Vcoeff \Vcoeff^\top \parenth{I - \frac{\eta}{2} \parenth{\Vcoeff \Vcoeff^\top - \trueVcoeff}}}{2} \leq \vecnorm{\frac{\Id}{2} - \eta \Vcoeff \Vcoeff^\top}{2}.
\end{align*}
Collecting the above upper bounds with VI and VII, we arrive at
\begin{align*}
     \vecnorm{\popVup(\Vcoeff) \popVup(\Vcoeff)^\top - \trueVcoeff}{2} \leq \vecnorm{\Vcoeff \Vcoeff^\top - \trueVcoeff}{2} \vecnorm{\Id - 2\eta \Vcoeff \Vcoeff^\top}{2} + 3 \eta \vecnorm{\Ucoeff\Vcoeff^\top}{2}^2.
\end{align*}
As a consequence, we reach the conclusion of claim (d) in Lemma~\ref{lemma:pop-contraction}.
\subsection{Additional contraction results for population operators}
In this appendix, we offer more population contraction (non-expansion) results, which are useful in showing the contraction results in finite sample setting.
\begin{lemma}
\label{lemma:more-pop-contraction}
Under the same settings as Lemma \ref{lemma:pop-contraction}, we have
\begin{enumerate}
    \item[(a)] $\vecnorm{\trueUcoeff - \popUup(\Ucoeff)\Ucoeff^\top}{2} \leq \parenth{1 - \eta \sigma_r}\vecnorm{\trueUcoeff - \Ucoeff \Ucoeff^\top}{2} + \eta \vecnorm{\Ucoeff \Vcoeff^\top}{2}^2$,
    \item[(b)] $\vecnorm{\popUup(\Ucoeff)\Vcoeff^\top}{2} \leq \vecnorm{\Ucoeff \Vcoeff^\top}{2} $,
    \item[(c)] $\vecnorm{\popVup(\Vcoeff)\Ucoeff^\top}{2} \leq \vecnorm{\Ucoeff \Vcoeff^\top}{2}$,
    \item[(d)] $\vecnorm{\popVup(\Vcoeff)\Vcoeff^\top }{2} \le \vecnorm{\Vcoeff \Vcoeff^\top}{2} + \eta \vecnorm{\Ucoeff \Vcoeff^\top}{2}^2$. 
\end{enumerate}
\end{lemma}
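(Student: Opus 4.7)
The plan is to mimic the proofs of the analogous assertions in Lemma~\ref{lemma:pop-contraction}, but now for the mixed products $\popUup(\Ucoeff)\Ucoeff^\top$, $\popUup(\Ucoeff)\Vcoeff^\top$, $\popVup(\Vcoeff)\Ucoeff^\top$, and $\popVup(\Vcoeff)\Vcoeff^\top$. The starting point in each case is the factorizations
\begin{align*}
\popUup(\Ucoeff) &= \bigl(\Id - \eta(\Ucoeff\Ucoeff^\top - \trueUcoeff)\bigr)\Ucoeff - \eta\,\Ucoeff \Vcoeff^\top \Vcoeff,\\
\popVup(\Vcoeff) &= \bigl(\Id - \eta(\Vcoeff\Vcoeff^\top - \trueVcoeff)\bigr)\Vcoeff - \eta\,\Vcoeff \Ucoeff^\top \Ucoeff,
\end{align*}
which separate each update into a dominant linear piece acting on $\Ucoeff$ or $\Vcoeff$ plus a cross term supported through $\Ucoeff\Vcoeff^\top$. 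Combined with the bounds from Lemma~\ref{lemma:init}, namely $\vecnorm{\Ucoeff\Ucoeff^\top - \trueUcoeff}{2} \le \rho\sigma_r$, $\vecnorm{\Vcoeff\Vcoeff^\top - \trueVcoeff}{2} \le \rho\sigma_r$, and $\vecnorm{\Ucoeff\Vcoeff^\top}{2} \le \rho\sigma_r$, these identities reduce each claim to a short norm bound.

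For part~(a), post-multiplying by $\Ucoeff^\top$ and subtracting from $\trueUcoeff$ yields
\begin{align*}
\trueUcoeff - \popUup(\Ucoeff)\Ucoeff^\top = (\trueUcoeff - \Ucoeff\Ucoeff^\top)\bigl(\Id - \eta\,\Ucoeff\Ucoeff^\top\bigr) + \eta\,\Ucoeff\Vcoeff^\top \Vcoeff\Ucoeff^\top.
\end{align*}
The first summand is bounded in operator norm by $(1-\eta\sigma_r)\vecnorm{\trueUcoeff - \Ucoeff\Ucoeff^\top}{2}$ because $\Ucoeff\Ucoeff^\top \succeq (1-\rho)\sigma_r\Id$ (Lemma~\ref{lemma:init}) and $\eta = 1/(100\sigma_1)$ makes $\Id - \eta\Ucoeff\Ucoeff^\top$ PSD with top eigenvalue at most $1 - \eta\sigma_r$; the cross summand is bounded by $\eta\vecnorm{\Ucoeff\Vcoeff^\top}{2}^2$.

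For parts~(b) and~(c), the strategy is to exploit the nearly multiplicative structure
\begin{align*}
\popUup(\Ucoeff)\Vcoeff^\top = \bigl(\Id - \eta(\Ucoeff\Ucoeff^\top - \trueUcoeff)\bigr)\Ucoeff\Vcoeff^\top\bigl(\Id - \eta\,\Vcoeff\Vcoeff^\top\bigr) - \eta^2(\Ucoeff\Ucoeff^\top - \trueUcoeff)\Ucoeff\Vcoeff^\top \Vcoeff\Vcoeff^\top,
\end{align*}
and the symmetric identity for $\popVup(\Vcoeff)\Ucoeff^\top$. Since $\Vcoeff\Vcoeff^\top \succeq 0$, the right factor has spectral norm at most $1$; the left factor is at most $1 + \eta\rho\sigma_r$ in norm; and the $\eta^2$ remainder is dominated by $\eta^2\rho^2\sigma_r^2 \vecnorm{\Ucoeff\Vcoeff^\top}{2}$. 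Using $\eta\rho\sigma_r = O(\rho/\kappa)$ and the initialization constant $\rho \le 0.07$, the net multiplicative growth cancels against the contraction coming from the PSD factor on the range of $\Ucoeff\Vcoeff^\top$, leaving the stated non-expansion. For part~(d), I would expand
\begin{align*}
\popVup(\Vcoeff)\Vcoeff^\top = \Vcoeff\Vcoeff^\top - \eta(\Vcoeff\Vcoeff^\top)^2 + \eta\,\trueVcoeff\,\Vcoeff\Vcoeff^\top - \eta\,\Vcoeff\Ucoeff^\top \Ucoeff\Vcoeff^\top,
\end{align*}
bound the first two terms by $\vecnorm{\Vcoeff\Vcoeff^\top}{2}$ exactly as in the proof of claim~(c) of Lemma~\ref{lemma:pop-contraction} (via PSDness of $\Vcoeff\Vcoeff^\top$ and $\eta < 1/\sigma_1$), absorb the $\trueVcoeff\Vcoeff\Vcoeff^\top$ term under the standing hypothesis $\vecnorm{\trueVcoeff}{2} \ll \sigma_r$, and bound the cross term by $\eta\vecnorm{\Ucoeff\Vcoeff^\top}{2}^2$ via $\vecnorm{\Vcoeff\Ucoeff^\top \Ucoeff\Vcoeff^\top}{2} = \vecnorm{\Ucoeff\Vcoeff^\top}{2}^2$.

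The main obstacle I anticipate is in parts~(b) and~(c): producing the clean non-expansion by $\vecnorm{\Ucoeff\Vcoeff^\top}{2}$ requires checking that the $(1 + \eta\rho\sigma_r)$ growth from the $\Id - \eta(\Ucoeff\Ucoeff^\top - \trueUcoeff)$ factor is genuinely compensated by the PSD contraction $\Id - \eta\Vcoeff\Vcoeff^\top$ acting on the same matrix, which in turn relies on the fact that $\Ucoeff\Vcoeff^\top$ has its column space inside the range of $\Vcoeff$, so that the right factor does act as a genuine contraction of magnitude $1 - \Omega(\eta\sigma_r)$ on the relevant subspace rather than as the identity. Tracking this subspace structure while also controlling the $O(\eta^2)$ remainders so that they are absorbed by the initialization radius $\rho$, without inflating the stated coefficients, is the principal bookkeeping burden; the remaining three claims are routine given the identities above.
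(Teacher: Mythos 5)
Your proposal is correct and takes essentially the same route as the paper: expand each of the four products algebraically, isolate a factor of the form $\Id - \eta(\cdot)$ whose spectral norm is controlled by the initialization radius and the step size, and bound the cross terms by $\eta\vecnorm{\Ucoeff\Vcoeff^\top}{2}^{2}$; your identities for (a) and (d) are exactly the ones the paper uses, and your two-sided factorization for (b)/(c) is an equivalent rearrangement of the paper's symmetric splitting. The one caveat concerns the obstacle you flag for part (b): the subspace-compensation idea cannot close the gap, because the restriction of $\Vcoeff\Vcoeff^\top$ to the row space of $\Ucoeff\Vcoeff^\top$ is bounded below only by the smallest nonzero singular value of $\Vcoeff$ squared, which can be arbitrarily small, so the right factor $\Id - \eta\,\Vcoeff\Vcoeff^\top$ may act essentially as the identity there and does not cancel the $(1+\eta\rho\sigma_r)$ growth from the left factor. (Part (c) is unproblematic: there the right factor is $\Id - \eta\,\Ucoeff\Ucoeff^\top$ with $\Ucoeff\Ucoeff^\top \succeq 0.9\sigma_r\Id$, so it contracts by $1-0.9\eta\sigma_r$ and overwhelms the growth on the left.) The paper does not resolve this either: its own proof of (b), and likewise its treatment of the $\eta\vecnorm{\trueVcoeff}{2}$ contribution in (d), really only yields a $\bigl(1+O(\eta\rho\sigma_r)\bigr)$ multiplicative factor that is silently dropped, and this slack is immaterial where the lemma is invoked in Lemma~\ref{lemma:sample-contraction-T-small}, so your sketch stands on equal footing with the paper's argument.
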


\begin{proof}
With
$
\popUup(\Ucoeff) = \Ucoeff - \stepsize \parenth{ \Ucoeff \Ucoeff^\top \Ucoeff
 	 + \Ucoeff \Vcoeff^\top \Vcoeff - \trueUcoeff \Ucoeff }
$
and simple algebraic manipulations, we obtain
\begin{align*}
    \popUup(\Ucoeff)\Ucoeff^\top - \trueUcoeff = &\parenth{\Ucoeff \Ucoeff^\top - \trueUcoeff} - \stepsize \parenth{ \Ucoeff \Ucoeff^\top \Ucoeff \Ucoeff^\top+ \Ucoeff \Vcoeff^\top \Vcoeff \Ucoeff^\top- \trueUcoeff \Ucoeff \Ucoeff^\top} \\
    = &\parenth{\Ucoeff \Ucoeff^\top - \trueUcoeff} \parenth{\Id - \eta\Ucoeff \Ucoeff^\top } - \eta \Ucoeff \Vcoeff^\top \Vcoeff \Ucoeff^\top.
\end{align*}
Since $\vecnorm{\Ucoeff \Ucoeff^\top}{2} \geq 0.9 \sigma_r$ by initialization condition and triangular inequality, we know that $\vecnorm{\trueUcoeff - \popUup(\Ucoeff)\Ucoeff^\top}{2} \leq \parenth{1 - 0.9 \eta \sigma_r}\vecnorm{\trueUcoeff - \Ucoeff \Ucoeff^\top}{2} + \eta \vecnorm{\Ucoeff \Vcoeff^\top}{2}^2$. Therefore, we obtain the conclusion of claim (a).

Move to claim (b), with simple algebraic manipulations, we can show that
\begin{align*}
    \popUup(\Ucoeff)\Vcoeff^\top  = & \Ucoeff\Vcoeff^\top - \stepsize \parenth{ \Ucoeff \Ucoeff^\top \Ucoeff\Vcoeff^\top + \Ucoeff \Vcoeff^\top \Vcoeff\Vcoeff^\top - \trueUcoeff \Ucoeff\Vcoeff^\top } \\
    = & \parenth{\frac{1}{2}\Id - \eta \parenth{\Ucoeff \Ucoeff^\top - \trueUcoeff} }\Ucoeff\Vcoeff^\top - \Ucoeff \Vcoeff^\top\parenth{ \frac{1}{2}\Id - \eta \Vcoeff \Vcoeff^\top}.
\end{align*}
By initialization condition and triangular inequality, we know that $0 \leq \vecnorm{\parenth{\Ucoeff \Ucoeff^\top - \trueUcoeff}}{2} \leq \rho \sigma_r$ and $0 \leq \vecnorm{\Vcoeff \Vcoeff^\top}{2} \leq 1.1 \sigma_r$, and hence $\vecnorm{\popUup(\Ucoeff)\Vcoeff^\top}{2} \leq \vecnorm{ \Ucoeff\Vcoeff^\top}{2}$. Hence, we reach the conclusion of claim (b).

With
$
\popVup(\Vcoeff) = \Vcoeff - \stepsize \parenth{ \Vcoeff \Vcoeff^\top \Vcoeff
 	 + \Vcoeff \Ucoeff^\top \Ucoeff - \trueVcoeff \Vcoeff  }
$
and direct calculation, we find that
\begin{align*}
    \popVup(\Vcoeff)\Ucoeff^\top  = & \Vcoeff\Ucoeff^\top - \stepsize \parenth{ \Vcoeff \Vcoeff^\top \Vcoeff\Ucoeff^\top + \Vcoeff \Ucoeff^\top \Ucoeff\Ucoeff^\top - \trueVcoeff \Vcoeff\Ucoeff^\top } \\
    = & \parenth{\frac{1}{2}\Id - \eta \parenth{\Vcoeff \Vcoeff^\top - \trueVcoeff} }\Vcoeff\Ucoeff^\top - \Vcoeff\Ucoeff^\top\parenth{ \frac{1}{2}\Id - \eta \Ucoeff \Ucoeff^\top}.
\end{align*}
By initialization condition and triangular inequality, we know that $0 \leq \vecnorm{\parenth{\Vcoeff \Vcoeff^\top - \trueVcoeff}}{2} \leq \rho \sigma_r$ and $0.9 \sigma_r \leq \vecnorm{\Ucoeff \Ucoeff^\top}{2} \leq 0.1 \sigma_r + \sigma_1 $, and hence $\vecnorm{\popUup(\Ucoeff)\Vcoeff^\top}{2} \leq \vecnorm{ \Ucoeff\Vcoeff^\top}{2}$. It leads to the conclusion of claim (c).

Finally, moving to claim (d), simple algebra shows that
\begin{align*}
    \popVup(\Vcoeff)\Vcoeff^\top  = & \Vcoeff\Vcoeff^\top - \stepsize \parenth{ \Vcoeff \Vcoeff^\top \Vcoeff\Vcoeff^\top + \Vcoeff \Ucoeff^\top \Ucoeff\Vcoeff^\top - \trueVcoeff \Vcoeff\Vcoeff^\top } \\
    = & \parenth{\Id - \eta \parenth{\Vcoeff \Vcoeff^\top - \trueVcoeff} }\Vcoeff\Vcoeff^\top - \eta \Vcoeff \Ucoeff^\top \Ucoeff\Vcoeff^\top.
\end{align*}
By initialization condition and triangular inequality, we know that  $\vecnorm{\popVup(\Vcoeff)\Vcoeff^\top}{2} \leq \vecnorm{ \Vcoeff\Vcoeff^\top}{2} + \eta \vecnorm{\Ucoeff\Vcoeff^\top}{2}^2$. As a consequence, we obtain the conclusion of claim (d).

\end{proof}
\section{Proofs for the finite sample analysis}
\label{sec:finite_sample_analysis}
Recall that, we denote $\bG_t$ as the population gradient at iteration $t$ and denote $\bG_t^n$ as the corresponding sample gradient with sample size $n$:
\begin{align*}
    \bG_t &= \left(\fitMat_t \fitMat_t^{\top} - \mathbf{X}^* \right)  \fitMat_t, \\
    \bG_t^n &= \frac{1}{n} \sum_{i=1}^n \left( \angles{\senseMat_i, \fitMat_t \fitMat_t^{\top} - \mathbf{X}^*} + \epsilon_i \right)\senseMat_i  \fitMat_t.
\end{align*}    
Then, we can write our update as follows:
\begin{align*}
    \fitMat_{t+1} = \fitMat_t - \stepsize \bG_t + \stepsize \bG_t - \stepsize \bG_t^n.
\end{align*}
We assume the following decomposition by notations: $\fitMat = \Umat \Ucoeff + \Vmat \Vcoeff$. Therefore, we find that
\begin{equation}
\label{eqn:St+1St+1}
\begin{split}
\Ucoeff_{t+1} \left(\Ucoeff_{t+1}\right)^{\top} =&
    \Umat^{\top} \fitMat_{t+1}  \left(\Umat^{\top} \fitMat_{t+1}\right)^{\top} \\
    =& \Umat^{\top} \parenth{\fitMat_t - \stepsize \bG_t}\parenth{\fitMat_t - \stepsize \bG_t}^{\top} \Umat + \stepsize^2 \Umat^{\top} \parenth{ \bG_t -  \bG_t^n}\parenth{ \bG_t -  \bG_t^n}^{\top} \Umat\\
    & + \stepsize \Umat^{\top} \parenth{\fitMat_t - \stepsize \bG_t}\parenth{ \bG_t -  \bG_t^n}^{\top} \Umat + \stepsize\Umat^{\top} \parenth{ \bG_t -  \bG_t^n} \parenth{\fitMat_t - \stepsize \bG_t}^{\top} \Umat \\
    =& \popUup(\Ucoeff_t) \popUup(\Ucoeff_t)^{\top}
    + \stepsize^2 \Umat^{\top} \parenth{ \bG_t -  \bG_t^n}\parenth{ \bG_t -  \bG_t^n}^{\top} \Umat \\
    & + \stepsize\popUup(\Ucoeff_t)\parenth{ \bG_t -  \bG_t^n}^{\top} \Umat + \stepsize\Umat^{\top} \parenth{ \bG_t -  \bG_t^n} \popUup(\Ucoeff_t)^{\top},
\end{split}
\end{equation}
where we define $\popUup(\Ucoeff_t)$ as follows:
$$\popUup(\Ucoeff_t) = \Umat^{\top} \parenth{\fitMat_t - \stepsize \bG_t}
= \Ucoeff_t - \stepsize \left( \Ucoeff_t \Ucoeff_t^\top \Ucoeff_t
 	 + \Ucoeff_t \Vcoeff_t^\top \Vcoeff_t - \trueUcoeff \Ucoeff_t \right).$$
Furthermore, direct calculation shows that
\begin{equation}
\label{eqn:St+1Tt+1}
\begin{split}
\Ucoeff_{t+1} \left(\Vcoeff_{t+1}\right)^{\top} =&
    \Umat^{\top} \fitMat_{t+1}  \left(\Vmat^T \fitMat_{t+1}\right)^{\top} \\
    =& \Umat^{\top} \parenth{\fitMat_t - \stepsize \bG_t}\parenth{\fitMat_t - \stepsize \bG_t}^{\top} \Vmat + \stepsize^2 \Umat^{\top} \parenth{ \bG_t -  \bG_t^n}\parenth{ \bG_t -  \bG_t^n}^{\top} \Vmat\\
    & + \stepsize \Umat^{\top} \parenth{\fitMat_t - \stepsize \bG_t}\parenth{ \bG_t -  \bG_t^n}^{\top} \Vmat + \stepsize\Umat^{\top} \parenth{ \bG_t -  \bG_t^n} \parenth{\fitMat_t - \stepsize \bG_t}^{\top} \Vmat \\
    =& \popUup(\Ucoeff_t) \popVup(\Vcoeff_t)^{\top}
    + \stepsize^2 \Umat^{\top} \parenth{ \bG_t -  \bG_t^n}\parenth{ \bG_t -  \bG_t^n}^{\top} \Vmat \\
    & + \stepsize\popUup(\Ucoeff_t)\parenth{ \bG_t -  \bG_t^n}^{\top} \Vmat + \stepsize\Umat^{\top} \parenth{ \bG_t -  \bG_t^n} \popVup(\Vcoeff_t)^{\top},
\end{split}
\end{equation}
where $\popVup(\Vcoeff_t)$ is given by:
$$\popVup(\Vcoeff_t) = \Vmat^{\top} \parenth{\fitMat_t - \stepsize \bG_t}
= \Vcoeff_{t} - \stepsize \parenth{ \Vcoeff_{t} \Vcoeff_{t}^\top \Vcoeff_{t}
 	 + \Vcoeff_{t} \Ucoeff_{t}^\top \Ucoeff_{t} - \trueVcoeff \Vcoeff_{t} }.$$
Similarly, we also have
\begin{equation}
\label{eqn:Tt+1Tt+1}
\begin{split}
\Vcoeff_{t+1} \left(\Vcoeff_{t+1}\right)^{\top} =&
    \Vmat^{\top} \fitMat_{t+1}  \left(\Vmat^{\top} \fitMat_{t+1}\right)^{\top} \\
    =& \Vmat^{\top} \parenth{\fitMat_t - \stepsize \bG_t}\parenth{\fitMat_t - \stepsize \bG_t}^{\top} \Vmat + \stepsize^2 \Vmat^{\top} \parenth{ \bG_t -  \bG_t^n}\parenth{ \bG_t -  \bG_t^n}^{\top} \Vmat\\
    & + \stepsize \Vmat^{\top} \parenth{\fitMat_t - \stepsize \bG_t}\parenth{ \bG_t -  \bG_t^n}^{\top} \Vmat + \stepsize\Vmat^T \parenth{ \bG_t -  \bG_t^n} \parenth{\fitMat_t - \stepsize \bG_t}^{\top} \Vmat \\
    =& \popVup(\Vcoeff_t) \popVup(\Vcoeff_t)^{\top}
    + \stepsize^2 \Vmat^{\top} \parenth{ \bG_t -  \bG_t^n}\parenth{ \bG_t -  \bG_t^n}^{\top} \Vmat \\
    & + \stepsize\popVup(\Vcoeff_t)\parenth{ \bG_t -  \bG_t^n}^{\top} \Vmat + \stepsize\Vmat^{\top} \parenth{ \bG_t -  \bG_t^n} \popVup(\Vcoeff_t)^{\top}. 
\end{split}
\end{equation}
Note that, in the above equations, $\popUup(\Ucoeff_t)$ and $\popVup(\Vcoeff_t)$ are the updates of the coefficients when we update $S$ and $T$ using the population gradient. Furthermore, $\stepsize^2 \Umat^{\top} \parenth{ \bG_t -  \bG_t^n}\parenth{ \bG_t -  \bG_t^n}^{\top}$, $\stepsize^2 \Umat^{\top} \parenth{ \bG_t -  \bG_t^n}\parenth{ \bG_t -  \bG_t^n}^{\top} \Vmat$, and $\stepsize^2 \Vmat^{\top} \parenth{ \bG_t -  \bG_t^n}\parenth{ \bG_t -  \bG_t^n}^{\top} \Vmat$ are second order terms and are relatively small. To facilitate the proof argument, we denote 
\begin{align*}
    \Delta_t : = \frac{1}{n} \sum_{i = 1}^n \left( \angles{\senseMat_i, \fitMat_t \fitMat_t^{\top} - \mathbf{X}^*} + \epsilon_i \right)\senseMat_i   - (\fitMat_t \fitMat_t^{\top} - \mathbf{X}^*).
\end{align*}
We can see that $\Delta_t$ is symmetric matrix, and 
\begin{align*}
    \bG_t^m - \bG_t = \Delta_t \fitMat_t.
\end{align*}

\subsection{Proof for Lemma \ref{lemma:sample-contraction-T-small}}
\begin{proof}
\label{section:proof-sample-contraction-T-small}
By Lemma \ref{lemma:pop-contraction} and Lemma \ref{lemma:more-pop-contraction}, we have the following contraction results:
\begin{equation}
\begin{split}
    \label{eqn:pop_contraction}
    \vecnorm{\popUup(\Ucoeff_t) \popUup(\Ucoeff_t)^\top - \trueUcoeff}{2}  &\leq \parenth{1 - \eta \sigma_r } \vecnorm{\Ucoeff_t \Ucoeff_t^\top - \trueUcoeff}{2} + 3\eta \vecnorm{\Ucoeff_t \Vcoeff_t^\top}{2}^2, \\
    \vecnorm{\popUup(\Ucoeff_t) \popVup(\Vcoeff_t)^\top}{2} & \leq \parenth{1 - \eta \sigma_r} \vecnorm{\Ucoeff_t \Vcoeff_t^\top}{2},\\
    \vecnorm{\popVup(\Vcoeff_t) \popVup(\Vcoeff_t)^\top }{2} &\leq \vecnorm{\Vcoeff_t \Vcoeff_t^\top}{2} \parenth{1 - \eta \vecnorm{\Vcoeff_t \Vcoeff_t^\top}{2} + 2\eta \vecnorm{\trueVcoeff}{2}},
\end{split}
\end{equation}
and the following non-expansion results:
\begin{equation}
\begin{split}
    \label{eqn:pop-non-expansion1}
    \vecnorm{\popUup(\Ucoeff_t) \Ucoeff_t^\top - \trueUcoeff}{2} &\leq \parenth{1 - \eta \sigma_r } \vecnorm{\Ucoeff_t \Ucoeff_t^\top - \trueUcoeff}{2} + \eta \vecnorm{\Ucoeff_t \Vcoeff_t^\top}{2}^2,   \\
    \vecnorm{\popUup(\Ucoeff_t) \Vcoeff_t^\top}{2} & \leq  \vecnorm{\Ucoeff_t \Vcoeff_t^\top}{2}, \\
    \vecnorm{\Ucoeff_t \popVup(\Vcoeff_t)^\top}{2} & \leq  \vecnorm{\Ucoeff_t \Vcoeff_t^\top}{2}, \\
    \vecnorm{\popVup(\Vcoeff_t) \Vcoeff_t^\top }{2} &\leq \vecnorm{\Vcoeff_t \Vcoeff_t^\top}{2} + \eta \vecnorm{\Ucoeff_t \Vcoeff_t^\top}{2}^2.
\end{split}
\end{equation}
For notation simplicity, let $D_t = \max \{ \vecnorm{\Ucoeff_t \Ucoeff_t^{\top} - \trueUcoeff}{2},  \vecnorm{\Vcoeff_t \Vcoeff_t^{\top}}{2}, \vecnorm{\Ucoeff_t \Vcoeff_t^{\top}}{2}\}$, and denote the statistical error $\epsilon_{stat} = \sqrt{\frac{ d \log d }{n}}\sigma$.
Since Assumption \ref{assumption:init} is satisfied, and $\vecnorm{\trueVcoeff}{2}\leq \epsilon_{stat}$, we have $D_t \leq \sigma_r$ by triangular inequality.
Since $\eta \vecnorm{\Ucoeff_t \Vcoeff_t^\top}{2} \leq \frac{1}{10} \eta \sigma_r $ by initialization, and $\vecnorm{\Ucoeff_t \Vcoeff_t^\top}{2} \leq D_t$, we have $\eta \vecnorm{\Ucoeff_t \Vcoeff_t^\top}{2}^2 \leq 0.1 \eta \sigma D_t$. Putting these results together, we have
\begin{equation}
\begin{split}
    \label{eqn:pop-non-expansion2}
    \vecnorm{\popUup(\Ucoeff_t) \popUup(\Ucoeff_t)^\top - \trueUcoeff}{2}  &\leq \parenth{1 - \frac{7}{10} \eta \sigma_r } D_t, \\
    \vecnorm{\popUup(\Ucoeff_t) \Ucoeff_t^\top - \trueUcoeff}{2} &\leq \parenth{1 - \frac{9}{10} \eta \sigma_r } D_t, \\
    \vecnorm{\popVup(\Vcoeff_{t}) \Vcoeff_{t}^\top }{2} &\leq \parenth{1 + \frac{1}{10}\eta \sigma_r} D_t.
\end{split}
\end{equation}

For the ease of the presentation, we assign a value to the constant $C_1$ as in the number of sample $n$.
From the requirements of the lemma we know that 
$
\vecnorm{\Delta_t}{2} \leq 10 \sqrt{\frac{k d \log d}{n} }  D_t + 4 \sqrt{\frac{ d \log d}{n}} \sigma.
$
We need to connect $\sqrt{\frac{k d \log d}{n} }$ with $\eta \sigma_r$ for the development of the proof.
Since $\eta = \frac{1}{100 \sigma_1}$ and $n > C_1 k \kappa^2 d \log^3 d \cdot \max(1, \sigma^2/\sigma_r^2)$, by choosing $C_1 \geq 1000^2$, we have
\begin{align}
\label{equation:condition-of-kdlogd/n}
    \sqrt{\frac{k d \log d}{n} }  \leq 0.1 \eta \sigma_r.
\end{align}
\paragraph{Upper bound for $\vecnorm{\Ucoeff_t \Ucoeff_t^{\top} - \trueUcoeff}{2}$:} According to equation \eqref{eqn:St+1St+1}, we have
\begin{align*}
    \Ucoeff_{t+1}\Ucoeff_{t+1}^{\top} - \trueUcoeff
    = & \underbrace{\popUup(\Ucoeff_t) \popUup(\Ucoeff_t)^{\top} - \trueUcoeff}_{\text{I}}
    + \underbrace{\stepsize^2 \Umat^{\top} \parenth{ \bG_t -  \bG_t^n}\parenth{ \bG_t -  \bG_t^n}^{\top} \Umat}_{\text{II}} \\
    & + \stepsize\popUup(\Ucoeff_t)\parenth{ \bG_t -  \bG_t^n}^{\top} \Umat + \stepsize\Umat^{\top} \parenth{ \bG_t -  \bG_t^n} \popUup(\Ucoeff_t)^{\top},
\end{align*}
where we can further expand $\stepsize\popUup(\Ucoeff_t)\parenth{ \bG_t -  \bG_t^n}^{\top} \Umat$ and $\stepsize\Umat^{\top} \parenth{ \bG_t -  \bG_t^n} \popUup(\Ucoeff_t)^{\top}$ as follows:
\begin{align*}
    & \hspace{-4 em} \stepsize\popUup(\Ucoeff_t)\parenth{ \bG_t -  \bG_t^n}^{\top} \Umat \\
    = & \stepsize\popUup(\Ucoeff_t)\fitMat_t^{\top} \Delta_t \Umat =  \stepsize\popUup(\Ucoeff_t)\parenth{\Umat \Ucoeff_t + \Vmat \Vcoeff_t}^{\top} \Delta_t \Umat \\
    = & \stepsize\popUup(\Ucoeff_t)\Ucoeff_t^{\top} \Umat^{\top} \Delta_t \Umat 
    + \stepsize\popUup(\Ucoeff_t)\Vcoeff_t^{\top} \Vmat^{\top} \Delta_t \Umat \\
    = & \underbrace{\stepsize\popUup(\Ucoeff_t)\Ucoeff_t^{\top} \Umat^{\top} \Delta_t \Umat - \stepsize \trueUcoeff \Umat^{\top} \Delta_t \Umat}_{\text{III}}
    + \underbrace{\stepsize\popUup(\Ucoeff_t)\Vcoeff_t^{\top} \Vmat^{\top} \Delta_t \Umat}_{\text{IV}}
    + \underbrace{\stepsize \trueUcoeff \Umat^{\top} \Delta_t \Umat}_{\text{V}},
\end{align*}
and
\begin{align*}
    & \hspace{- 4 em} \stepsize\Umat^{\top} \parenth{ \bG_t -  \bG_t^n} \popUup(\Ucoeff_t)^{\top}  \\
    = & \underbrace{\stepsize\Umat^{\top}  \Delta_t \Umat \Ucoeff_t \popUup(\Ucoeff_t)^{\top} - \stepsize \Umat^{\top} \Delta_t \Umat \trueUcoeff}_{\text{VI}}
    + \underbrace{\stepsize\Umat^{\top}  \Delta_t \Vmat \Vcoeff_t \popUup(\Ucoeff_t)^{\top}}_{\text{VII}}
    + \underbrace{\stepsize \Umat^{\top} \Delta_t \Umat \trueUcoeff}_{\text{VIII}}.
\end{align*}
Clearly, our target can be bounded by bounding the eight terms, marked from I to VIII. Note that the spectral norms of the terms (1) III and VI are the same,  (2) IV and VII are the same, and (3)V and VIII are the same, which can be upper bounded as follows:
\begin{align*}
    \text{III \& VI:}& \quad \quad \quad \vecnorm{\stepsize\popUup(\Ucoeff_t)\Ucoeff_t^{\top} \Umat^{\top} \Delta_t \Umat - \stepsize \trueUcoeff \Umat^{\top} \Delta_t \Umat}{2} 
    \leq  \eta \vecnorm{\popUup(\Ucoeff_t)\Ucoeff_t^{\top} - \trueUcoeff}{2} \vecnorm{\Delta_t}{2}, \\
    \text{IV \& VII:}& \quad \quad \quad \vecnorm{\stepsize\popUup(\Ucoeff_t)\Vcoeff_t^{\top} \Vmat^{\top} \Delta_t \Umat}{2} 
    \leq \eta \vecnorm{\popUup(\Ucoeff_t)\Vcoeff_t^{\top}}{2}\vecnorm{\Delta_t}{2}, \\
    \text{V \& VIII:}& \quad \quad \quad\vecnorm{\stepsize \trueUcoeff \Umat^{\top} \Delta_t \Umat }{2}  \leq \eta \vecnorm{\trueUcoeff}{2} \vecnorm{\Delta_t}{2}.
\end{align*}
Lastly,  consider the II term, we have the following bound:
\begin{equation}
\label{eqn:bounding-II-term}
\begin{split}
    \vecnorm{\stepsize^2 \Umat^{\top} \parenth{ \bG_t -  \bG_t^n}\parenth{ \bG_t -  \bG_t^n}^{\top} \Umat}{2}
    \leq &\stepsize^2 \vecnorm{\Delta_t \fitMat_t \fitMat_t^{\top} \Delta_t}{2} \\
    \leq & \stepsize^2 \parenth{\vecnorm{\Ucoeff_t \Ucoeff_t^{\top}}{2} + \vecnorm{\Vcoeff_t \Vcoeff_t^{\top}}{2} + 2\vecnorm{\Ucoeff_t \Vcoeff_t^{\top}}{2} } \vecnorm{\Delta_t}{2}^2\\
    \leq & \frac{1}{100}\eta \vecnorm{\Delta_t}{2}^2,
\end{split}
\end{equation}
where the last inequality holds by assuming $\rho \leq 0.1$.
Putting all the above results together, we obtain that
\begingroup
\allowdisplaybreaks
\begin{align*}
    & \hspace{- 4 em} \vecnorm{\Ucoeff_{t+1}\Ucoeff_{t+1}^\top - \trueUcoeff}{2} \\
    \overset{(1)}{\leq} & \underbrace{\vecnorm{\mathcal{M}(\Ucoeff_t) \mathcal{M}(\Ucoeff_t)^\top - \trueUcoeff}{2}}_{\text{I}}
    + \underbrace{2\eta \vecnorm{\trueUcoeff}{2} \vecnorm{\Delta_t}{2}}_{\text{V + VIII}}
    + \underbrace{\frac{1}{100}\eta \vecnorm{\Delta_t}{2}^2}_{\text{II}} \\
    & + \underbrace{2\eta \vecnorm{\popUup(\Ucoeff_t)\Ucoeff_t^T - \trueUcoeff}{2} \vecnorm{\Delta_t}{2}}_{\text{III +  VI }}
    + \underbrace{2\eta \vecnorm{\popUup(\Ucoeff_t)\Vcoeff_t^T}{2}\vecnorm{\Delta_t}{2}}_{\text{IV +  VII }}\\
    \overset{(2)}{\leq} & \underbrace{\parenth{1 - \frac{7}{10}\eta\sigma_r } \vecnorm{\Ucoeff_{t}\Ucoeff_{t}^\top - \trueUcoeff}{2}}_{\text{I}} + \underbrace{\frac{1}{50} \vecnorm{\Delta_t}{2}}_{\text{V + VIII}}
    + \underbrace{\frac{1}{100}\eta \vecnorm{\Delta_t}{2}^2}_{\text{II}}
    + \underbrace{4 \eta D_t \vecnorm{\Delta_t}{2}}_{\text{III +  VI + IV +  VII }} \\
    \overset{(3)}{\leq} & \parenth{1 - \frac{7}{10}\eta\sigma_r }\vecnorm{\Ucoeff_{t}\Ucoeff_{t}^\top - \trueUcoeff}{2} + \frac{1}{10} \vecnorm{\Delta_t}{2} \\
    \overset{(4)}{\leq} & \parenth{1 - \frac{7}{10}\eta\sigma_r} \vecnorm{\Ucoeff_{t}\Ucoeff_{t}^\top - \trueUcoeff}{2}
    + \sqrt{\frac{k d \log d}{n} }  D_t + \frac{4}{10} \sqrt{\frac{d \log d }{n}}\sigma
\end{align*}
\endgroup
where inequality $(2)$ is obtained by the non-expansion property of population update (cf. equations~\eqref{eqn:pop-non-expansion1} and~\eqref{eqn:pop-non-expansion2})
; inequality $(3)$ is obtained by the fact that $\frac{1}{100} \eta \vecnorm{\Delta_t}{2} < 0.0001$, and $4 \eta D_t < 0.04$;
inequality $(4)$ is obtained by plugging in the relaxation of $\vecnorm{\Delta_t}{2}$ (cf. equation~\eqref{eqn:tight-delta_bound}) and organizing according to $D_t$ and $\sigma$.

That is, we proved the equation~\eqref{eqn:contraction-SS-main-text} in the Lemma~\ref{lemma:sample-contraction-T-small}, namely, we have
\begin{align}
\label{eqn:contraction-SS}
    \vecnorm{\Ucoeff_{t+1}\Ucoeff_{t+1}^\top - \trueUcoeff}{2} \leq \parenth{1 - \frac{7}{10}\eta\sigma_r} \vecnorm{\Ucoeff_{t}\Ucoeff_{t}^\top - \trueUcoeff}{2}
    + \sqrt{\frac{k d \log d}{n} }  D_t + \frac{4}{10} \sqrt{\frac{d \log d }{n}}\sigma.
\end{align}
This indicates a contraction with respect to $D_t$. Since $\sqrt{\frac{k d \log d}{n} } \leq \frac{1}{1000\kappa}=0.1\eta\sigma_r$ by the choice of $n$, we find that
\begin{align}
\label{eqn:contraction-SS-wr-D_t}
    \vecnorm{\Ucoeff_{t+1}\Ucoeff_{t+1}^\top - \trueUcoeff}{2} \leq \parenth{1 - \frac{6}{10}\eta\sigma_r} D_t
     + \frac{4}{10} \sqrt{\frac{d \log d }{n}}\sigma.
\end{align}
From the above result, we can verify that
\begin{align}
\label{eqn:contraction-SS-wr-A_t}
    \vecnorm{\Ucoeff_{t+1}\Ucoeff_{t+1}^{\top} - \trueUcoeff}{2}  \leq \parenth{1 - \frac{6}{10} \eta \sigma_r}\parenth{D_t - 50 \kappa \sqrt{\frac{d \log d }{n}}\sigma} +  50 \kappa \sqrt{\frac{ d \log d }{n}}\sigma.
\end{align}
\paragraph{Upper bound for $\vecnorm{\Ucoeff_t \Vcoeff_t^{\top} }{2}$:} 
According to equation~\eqref{eqn:St+1Tt+1}, we have
\begin{align*}
    \Ucoeff_{t+1}\Vcoeff_{t+1}^{\top} 
    = & \underbrace{\popUup(\Ucoeff_t) \popVup(\Vcoeff_t)^{\top}}_{\text{I}'}
    + \underbrace{\stepsize^2 \Umat^{\top} \parenth{ \bG_t -  \bG_t^n}\parenth{ \bG_t -  \bG_t^n}^T \Vmat}_{\text{II}'} \\
    & + \stepsize\popUup(\Ucoeff_t)\parenth{ \bG_t -  \bG_t^n}^{\top} \Vmat + \stepsize\Umat^{\top} \parenth{ \bG_t -  \bG_t^n} \popVup(\Vcoeff_t)^{\top},
\end{align*}
where we can expand $\stepsize\popUup(\Ucoeff_t)\parenth{ \bG_t -  \bG_t^n}^{\top} \Vmat$ and $\stepsize\Umat^T \parenth{ \bG_t -  \bG_t^n} \popVup(\Vcoeff_t)^{\top}$ as follows:
\begin{align*}
    \stepsize\popUup(\Ucoeff_t)\parenth{ \bG_t -  \bG_t^n}^{\top} \Vmat = &  \stepsize\popUup(\Ucoeff_t)\parenth{\Umat \Ucoeff_t + \Vmat \Vcoeff_t}^{\top} \Delta_t \Vmat  \\
    = & \underbrace{\stepsize\popUup(\Ucoeff_t)\Ucoeff_t^{\top} \Umat^{\top} \Delta_t \Vmat - \eta \trueUcoeff \Umat^T \Delta_t \Vmat}_{\text{III}'} 
    + \underbrace{\stepsize\popUup(\Ucoeff_t)\Vcoeff_t^{\top} \Vmat^{\top} \Delta_t \Vmat}_{\text{IV}'}  \\
    + & \underbrace{\eta \trueUcoeff \Umat^{\top} \Delta_t \Vmat}_{\text{V}'}, \\
     \stepsize\Umat^{\top} \parenth{ \bG_t -  \bG_t^n} \popVup(\Vcoeff_t)^{\top} 
    = & \stepsize\Umat^{\top} \Delta_t \parenth{\Umat \Ucoeff_t + \Vmat \Vcoeff_t} \popVup(\Vcoeff_t)^{\top}  \\ 
    = & \underbrace{\stepsize\Umat^{\top}  \Delta_t \Umat \Ucoeff_t \popVup(\Vcoeff_t)^{\top}}_{\text{VI}'}
    + \underbrace{\stepsize\Umat^{\top}  \Delta_t \Vmat \Vcoeff_t \popVup(\Vcoeff_t)^{\top}}_{\text{VII}'}.
\end{align*}
Clearly, our target upper bound for $\vecnorm{\Ucoeff_{t+1}\Vcoeff_{t+1}^{\top}}{2}$ can be obtained by bounding the seven terms: I' to VII'. Specifically, direct application of inequalities with operator norms leads to
\begin{align*}
    \text{III}': & \quad \quad \quad \vecnorm{\stepsize\popUup(\Ucoeff_t)\Ucoeff_t^{\top} \Umat^{\top} \Delta_t \Vmat - \stepsize \trueUcoeff \Umat^{\top} \Delta_t \Vmat}{2} 
    \leq  \eta \vecnorm{\popUup(\Ucoeff_t)\Ucoeff_t^{\top} - \trueUcoeff}{2} \vecnorm{\Delta_t}{2}, \\
    \text{IV}': & \quad \quad \quad \vecnorm{\stepsize\popUup(\Ucoeff_t)\Vcoeff_t^{\top} \Vmat^{\top} \Delta_t \Vmat}{2} 
    \leq \eta \vecnorm{\popUup(\Ucoeff_t)\Vcoeff_t^{\top}}{2}\vecnorm{\Delta_t}{2}, \\
    \text{V}': & \quad \quad \quad \vecnorm{\stepsize \trueUcoeff \Umat^{\top} \Delta_t \Vmat}{2} 
    \leq \eta \vecnorm{\trueUcoeff}{2}\vecnorm{\Delta_t}{2}, \\
    \text{VI}': & \quad \quad \quad \vecnorm{\stepsize\Umat^{\top}  \Delta_t \Umat \Ucoeff_t \popVup(\Vcoeff_t)^{\top}}{2} 
    \leq \eta \vecnorm{\popVup(\Vcoeff_t)\Ucoeff_t^{\top}}{2}\vecnorm{\Delta_t}{2}, \\
    \text{VII}': & \quad \quad \quad \vecnorm{\stepsize\Umat^T  \Delta_t \Vmat \Vcoeff_t \popVup(\Vcoeff_t)^T}{2} 
    \leq \eta \vecnorm{\popVup(\Vcoeff_t)\Vcoeff_t^{\top}}{2}\vecnorm{\Delta_t}{2}.
\end{align*}
Lastly,  the II term is bounded as in Equation \eqref{eqn:bounding-II-term}, namely, we have
\begin{align*}
    \vecnorm{\stepsize^2 \Umat^{\top} \parenth{ \bG_t -  \bG_t^n}\parenth{ \bG_t -  \bG_t^n}^{\top} \Vmat}{2}
    \leq  \frac{1}{100}\eta \vecnorm{\Delta_t}{2}^2.
\end{align*}
Collecting the above results, we find that
\begin{align*}
    & \hspace{- 2 em} \vecnorm{\Ucoeff_{t+1}\Vcoeff_{t+1}^{\top} }{2} \\
    \leq & \underbrace{\vecnorm{\popUup(\Ucoeff_t) \popVup(\Vcoeff_t)^\top }{2}}_{\text{I}'}
    + \underbrace{\frac{1}{100}\eta \vecnorm{\Delta_t}{2}^2}_{\text{II}'}
    + \underbrace{\eta \vecnorm{\popUup(\Ucoeff_t)\Ucoeff_t^{\top} - \trueUcoeff}{2} \vecnorm{\Delta_t}{2} }_{\text{III}'} 
    + \underbrace{\eta \vecnorm{\trueUcoeff}{2}\vecnorm{\Delta_t}{2}}_{\text{V}'}\\
    & + \underbrace{\eta \vecnorm{\popUup(\Ucoeff_t)\Vcoeff_t^{\top}}{2}\vecnorm{\Delta_t}{2}}_{\text{IV}'}
    + \underbrace{\eta \vecnorm{\popVup(\Vcoeff_t)\Ucoeff_t^{\top}}{2}\vecnorm{\Delta_t}{2}}_{\text{VI}'}
    + \underbrace{\eta \vecnorm{\popVup(\Vcoeff_t)\Vcoeff_t^{\top}}{2}\vecnorm{\Delta_t}{2}}_{\text{VII}'}\\
    \overset{(1)}{\leq} & \underbrace{\parenth{1 - \eta \sigma_r } \vecnorm{\Ucoeff_t \Vcoeff_t^\top}{2} }_{\text{I}'}
    + \underbrace{\frac{1}{100}\eta \vecnorm{\Delta_t}{2}^2}_{\text{II}'}
    + \underbrace{\frac{1}{100} \vecnorm{\Delta_t}{2}}_{\text{V}}
    + \underbrace{5 \eta D_t \vecnorm{\Delta_t}{2}}_{\text{III}' + \text{IV}' + \text{VI}'+ \text{VII}'}\\
    \overset{(2)}{\leq} &  \parenth{1 - \eta \sigma_r } \vecnorm{\Ucoeff_t \Vcoeff_t^\top}{2}
    + \frac{1}{10} \vecnorm{\Delta_t}{2}\\
    \overset{(3)}{\leq} & \parenth{1 - \eta\sigma_r} \vecnorm{\Ucoeff_t \Vcoeff_t^\top}{2}
    + \sqrt{\frac{k d \log d}{n} }  D_t + \frac{4}{10} \sqrt{\frac{d \log d }{n}}\sigma,
\end{align*}
where inequality $(1)$ is obtained by the non-expansion property of population update (cf. equations~\eqref{eqn:pop-non-expansion1} and~\eqref{eqn:pop-non-expansion2});
inequality $(2)$ is obtained by the fact that $\frac{1}{100} \eta \vecnorm{\Delta_t}{2} < 0.001$ and $5 \eta D_t < 0.05$.
In summary, we have
\begin{align}
\label{eqn:contraction-ST}
    \vecnorm{\Ucoeff_{t+1}\Vcoeff_{t+1}^{\top} }{2} \leq \parenth{1 - \eta \sigma_r } \vecnorm{\Ucoeff_t \Vcoeff_t^\top}{2}
    + \sqrt{\frac{k d \log d}{n} }  D_t + \frac{4}{10} \sqrt{\frac{d \log d }{n}}\sigma.
\end{align}
With similar treatment as in $\vecnorm{\Ucoeff_t \Ucoeff_t^{\top} - \trueUcoeff}{2}$, we have the following contraction result with respect to $D_t$:
\begin{align}
\label{eqn:contraction-ST-wr-D_t}
    \vecnorm{\Ucoeff_{t+1}\Vcoeff_{t+1}^{\top} }{2} \leq \parenth{1 - \frac{9}{10}\eta\sigma_r} D_t 
    + \frac{4}{10} \sqrt{\frac{d \log d }{n}}\sigma.
\end{align}
Given the above result, we can verify that
\begin{align}
\label{eqn:contraction-ST-wr-A_t}
    \vecnorm{\Ucoeff_{t+1}\Vcoeff_{t+1}^{\top}}{2}  \leq \parenth{1 - \frac{9}{10} \eta \sigma_r}\parenth{D_t - 50 \kappa \sqrt{\frac{d \log d }{n}}\sigma} +  50 \kappa \sqrt{\frac{d \log d }{n}}\sigma.
\end{align}
\paragraph{Upper bound for $\vecnorm{\Vcoeff_t \Vcoeff_t^{\top} }{2}$:}
According to equation~\eqref{eqn:Tt+1Tt+1} and similar deductions as in previous bounds for $\vecnorm{\Ucoeff_t \Ucoeff_t^{\top} - \trueUcoeff}{2}$ and $\vecnorm{\Ucoeff_t \Vcoeff_t^{\top} }{2}$, we have
\begin{align*}
    \Vcoeff_{t+1}\Vcoeff_{t+1}^{\top} 
    = & \underbrace{\popVup(\Vcoeff_t) \popVup(\Vcoeff_t)^{\top}}_{\text{I}''}
    + \underbrace{\stepsize^2 \Vmat^{\top} \parenth{ \bG_t -  \bG_t^n}\parenth{ \bG_t -  \bG_t^n}^{\top} \Vmat}_{\text{II}''} \\
    & + \stepsize\popVup(\Vcoeff_t)\parenth{ \bG_t -  \bG_t^n}^{\top} \Vmat + \stepsize\Vmat^{\top} \parenth{ \bG_t -  \bG_t^n} \popVup(\Vcoeff_t)^{\top},
\end{align*}
where the following expansions hold:
\begin{align*}
     \stepsize\popVup(\Vcoeff_t)\parenth{ \bG_t -  \bG_t^n}^{\top} \Vmat 
    = & \underbrace{\stepsize\popVup(\Vcoeff_t)\Ucoeff_t^{\top} \Umat^{\top} \Delta_t \Vmat }_{\text{III}''}
    + \underbrace{\stepsize\popVup(\Vcoeff_t)\Vcoeff_t^{\top} \Vmat^{\top} \Delta_t \Vmat}_{\text{IV}''}, \\
    \stepsize\Vmat^{\top} \parenth{ \bG_t -  \bG_t^n} \popVup(\Vcoeff_t)^{\top}
    = & \underbrace{\stepsize\Vmat^{\top}  \Delta_t \Umat \Ucoeff_t \popVup(\Vcoeff_t)^{\top}}_{\text{V}''}
    + \underbrace{\stepsize\Vmat^{\top} \Delta_t \Vmat \Vcoeff_t \popVup(\Vcoeff_t)^{\top}}_{\text{VI}''}.
\end{align*}
Given the formulations of the terms $\text{I}''$-$\text{VI}''$, we find that
\begin{align*}
    \text{III}'' \& \text{V}'':& \quad \quad \quad \vecnorm{\stepsize\popVup(\Vcoeff_t)\Ucoeff_t^{\top} \Umat^{\top} \Delta_t \Vmat}{2} 
    \leq  \eta \vecnorm{\popVup(\Vcoeff_t)\Ucoeff_t^{\top} }{2} \vecnorm{\Delta_t}{2}, \\
    \text{IV}'' \& \text{VI}'':& \quad \quad \quad \vecnorm{\stepsize\popVup(\Vcoeff_t)\Vcoeff_t^{\top} \Vmat^{\top} \Delta_t \Vmat}{2} 
    \leq \eta \vecnorm{\popVup(\Vcoeff_t)\Vcoeff_t^{\top}}{2}\vecnorm{\Delta_t}{2}, \\
    \text{II}'':& \quad \quad \quad\vecnorm{\stepsize^2 \Umat^{\top} \parenth{ \bG_t -  \bG_t^n}\parenth{ \bG_t -  \bG_t^n}^{\top} \Umat}{2}  \leq \frac{1}{100}\eta \vecnorm{\Delta_t}{2}^2.
\end{align*}
Assume that $\vecnorm{\Vcoeff_{t}\Vcoeff_{t}^{\top}}{2} = z D_t$ for $0 < z \leq 1$. Note that, $z$ is not necessarily a constant. For notation simplicity we use the short hand that $\epsilon_{stat} = \sqrt{\frac{d \log d }{n}}\sigma$.
With the choice of $n$ and equation~\eqref{eqn:tight-delta_bound}, we have $\vecnorm{\Delta_t}{2} \leq \eta \sigma_r D_t + 4\epsilon_{stat}$. Therefore, we obtain that
\begin{align*}
    &\vecnorm{\Vcoeff_{t+1}\Vcoeff_{t+1}^{\top}}{2} \\
    \overset{(1)}{\leq} & \underbrace{\vecnorm{\popVup(\Vcoeff_t) \popVup(\Vcoeff_t)^{\top}}{2}}_{\text{I}''}
    + \underbrace{\frac{1}{100}\eta \vecnorm{\Delta_t}{2}^2}_{\text{II}''}
    + \underbrace{2\eta \vecnorm{\popVup(\Vcoeff_t)\Ucoeff_t^{\top} }{2} \vecnorm{\Delta_t}{2}}_{\text{III}'' +  \text{V}'' }
    + \underbrace{2\eta \vecnorm{\popVup(\Vcoeff_t)\Vcoeff_t^{\top}}{2}\vecnorm{\Delta_t}{2}}_{\text{IV}'' +  \text{VI}'' }\\
    \overset{(2)}{\leq} & \underbrace{\parenth{z - z^2\eta D_t  + 2z\eta \vecnorm{\trueVcoeff}{2} + 4 \eta \parenth{\eta \sigma_r D_t + 4\epsilon_{stat}}}D_t}_{\text{I}'' + \text{III}'' +  \text{V}'' + \text{IV}'' +  \text{VI}''} + \underbrace{\frac{1}{100}\eta \parenth{\eta \sigma_r D_t + 4\epsilon_{stat}}^2 }_{\text{II}}\\
    \overset{(3)}{\leq} & \parenth{1 - \eta D_t  + 2\eta \vecnorm{\trueVcoeff}{2} + 4 \eta \parenth{\eta \sigma_r D_t + 4\epsilon_{stat}}}D_t + \frac{1}{100}\eta \parenth{\eta \sigma_r D_t + 4\epsilon_{stat}}^2, 
\end{align*}
where inequality $(2)$ is obtained by the non-expansion property of population update (cf. equations~\eqref{eqn:pop-non-expansion1} and~\eqref{eqn:pop-non-expansion2}) and the assumption on $\vecnorm{\Delta_t}{2}$ (cf. equation~\eqref{eqn:tight-delta_bound}). For inequality $(3)$, observe that the above quantity is a quadratic formula with respect to $z$, and the maximum is taken when $z = \frac{1+2\eta\vecnorm{\trueVcoeff}{2}}{2\eta D_t} > 1$. Hence we can just safely plug-in $z = 1$. Now, we arrange by organizing according to $D_t$ and $\epsilon_{stat}$ and obtain that
\begin{align*}
    &\vecnorm{\Vcoeff_{t+1}\Vcoeff_{t+1}^{\top}}{2} \\
    \overset{(1)}{\leq} & \parenth{1 - \eta D_t  + 2\eta \epsilon_{stat} + 4 \eta^2 \sigma_r D_t + 16 \eta \epsilon_{stat}}D_t + \frac{1}{100}\eta \parenth{\eta^2 \sigma_r^2 D_t^2 + 16 \epsilon_{stat}^2 + 8 \epsilon_{stat}\eta \sigma_r D_t}\\
    \overset{(2)}{=} & \parenth{1 - \eta D_t  +  4 \eta^2 \sigma_r D_t + 0.01 \eta^3 \sigma_r^2 D_t}D_t +  \parenth{0.16 \epsilon_{stat} + 0.08 \eta \sigma_r D_t + 18 D_t} \eta \epsilon_{stat}\\
    \overset{(3)}{\leq} & \parenth{1 - 0.9 \eta D_t}D_t +  \parenth{0.16 \epsilon_{stat} + 19 D_t} \eta \epsilon_{stat},
\end{align*}
where inequality $(1)$ is obtained by $\vecnorm{\trueVcoeff}{2} \leq \epsilon_{stat}$, and inequality $(2)$ is obtained by organizing according to $D_t$ and $\sigma$.

For notation simplicity we introduce $A_t = D_t - 50 \epsilon_{stat}$, and hence $D_t = A_t + 50 \epsilon_{stat}$ where $\epsilon_{stat} = \kappa \sqrt{\frac{ d \log d }{n}}\sigma$. With some algebraic manipulations, we have
\begin{align*}
    \vecnorm{\Vcoeff_{t+1}\Vcoeff_{t+1}^{\top}}{2}  \leq \parenth{1 - 0.9 \eta A_t}A_t +  50 \kappa \sqrt{\frac{ d \log d }{n}}\sigma.
\end{align*}
Furthermore, from equations~\eqref{eqn:contraction-SS-wr-A_t} and~\eqref{eqn:contraction-ST-wr-A_t}, we have
\begin{align*}
    D_{t+1}  \leq \parenth{1 - 0.5 \eta A_t}A_t +  50 \kappa \sqrt{\frac{ d \log d }{n}}\sigma.
\end{align*}
Putting all these results together yields that
\begin{align*}
    A_{t+1}  \leq \parenth{1 - 0.5 \eta A_t}A_t.
\end{align*}
This completes the proof of the Lemma~\ref{lemma:sample-contraction-T-small}.
\end{proof}
\vspace{20pt}
In the lemma above we use $\sqrt{\frac{k d \log d}{n} }  D_t$ to demonstrate how sample iteration conerges to population iteration as  $n$ increases.
However we do not need $\vecnorm{\Delta_t}{2} \leq \sqrt{\frac{k d \log d}{n} }  D_t$ 
when $n$ is extremely large such that $\sqrt{\frac{k d \log d}{n} } \gg \eta \sigma_r$.
This will lead to a waste of sample and lead to sub-optimal sample complexity result.
Hence we introduce the following corollary in order to deal with the scenario .
\begin{corollary}
\label{corollary:extension-to-contraction-lemma-with-large-n}
Consider under the same setting and using the same notation as Lemma~\ref{lemma:sample-contraction-T-small}.
Suppose
$
\vecnorm{\Delta_t}{2} \leq \eta \sigma_r  D_t + 4 \sqrt{\frac{d \log d}{n}} \sigma,
$
Then
\begin{align*}
    \vecnorm{\Ucoeff_{t+1}\Ucoeff_{t+1}^\top - \trueUcoeff}{2} \leq&
    \parenth{1 - \frac{7}{10}\eta\sigma_r} \vecnorm{\Ucoeff_{t}\Ucoeff_{t}^\top - \trueUcoeff}{2}
    + 0.1 \eta \sigma_r  D_t 
    + \frac{4}{10} \sqrt{\frac{d \log d }{n}}\sigma,\\
    \vecnorm{\Ucoeff_{t+1}\Vcoeff_{t+1}^T }{2} \leq& 
    \parenth{1 - \eta \sigma_r } \vecnorm{\Ucoeff_t \Vcoeff_t^\top}{2}
    +  0.1 \eta \sigma_r  D_t 
    + \frac{4}{10} \sqrt{\frac{d \log d }{n}}\sigma.
\end{align*}
Moreover, denote $\epsilon_{stat} = \kappa \sqrt{\frac{d \log d }{n}}\sigma$. Then we have
\begin{align}
\label{eqn:sub-linear-contraction-in-corollary}
    \parenth{D_{t+1} -  50 \epsilon_{stat}}  \leq \bracket{1 - \frac{1}{2} \eta \parenth{D_t -  50 \epsilon_{stat}}}\parenth{D_t -  50 \epsilon_{stat}}.
\end{align}
\end{corollary}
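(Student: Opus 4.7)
[Proof proposal for Corollary~\ref{corollary:extension-to-contraction-lemma-with-large-n}]
The plan is to repeat the proof of Lemma~\ref{lemma:sample-contraction-T-small} verbatim and simply substitute the tighter hypothesis on $\vecnorm{\Delta_t}{2}$ at the one place where it is used. Recall that in that proof the finite-sample error was separated from the population contraction by expanding $\Ucoeff_{t+1}\Ucoeff_{t+1}^\top-\trueUcoeff$ and $\Ucoeff_{t+1}\Vcoeff_{t+1}^\top$ into (i)~a population term, which was handled by Lemma~\ref{lemma:pop-contraction} and Lemma~\ref{lemma:more-pop-contraction}, plus (ii)~several cross terms, each of which is a product of a non-expansive quantity (bounded by $D_t$ or by $\vecnorm{\popUup(\Ucoeff_t)\Ucoeff_t^\top-\trueUcoeff}{2}$, etc.) and a single copy of $\vecnorm{\Delta_t}{2}$. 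Combining all the cross terms exactly as before one obtains, without using any bound on $\vecnorm{\Delta_t}{2}$ yet,
\begin{align*}
\vecnorm{\Ucoeff_{t+1}\Ucoeff_{t+1}^\top-\trueUcoeff}{2} &\le \bigl(1-\tfrac{7}{10}\eta\sigma_r\bigr)\vecnorm{\Ucoeff_t\Ucoeff_t^\top-\trueUcoeff}{2} + \tfrac{1}{10}\vecnorm{\Delta_t}{2}, \\
\vecnorm{\Ucoeff_{t+1}\Vcoeff_{t+1}^\top}{2} &\le (1-\eta\sigma_r)\vecnorm{\Ucoeff_t\Vcoeff_t^\top}{2} + \tfrac{1}{10}\vecnorm{\Delta_t}{2}.
\end{align*}
This is exactly the intermediate step that was derived en route to equations~\eqref{eqn:contraction-SS} and~\eqref{eqn:contraction-ST} in Section~\ref{section:proof-sample-contraction-T-small}. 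Plugging in the hypothesis $\vecnorm{\Delta_t}{2}\le \eta\sigma_r D_t + 4\sqrt{d\log d/n}\,\sigma$ of the corollary then yields the two inequalities claimed.

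For the sub-linear contraction~\eqref{eqn:sub-linear-contraction-in-corollary}, the plan is to re-use the upper bound on $\vecnorm{\Vcoeff_{t+1}\Vcoeff_{t+1}^\top}{2}$ already derived in the proof of Lemma~\ref{lemma:sample-contraction-T-small}. Indeed, in that derivation the only place the hypothesis~\eqref{eqn:tight-delta_bound} is invoked is the intermediate step where the authors write ``With the choice of $n$ and equation~\eqref{eqn:tight-delta_bound}, we have $\vecnorm{\Delta_t}{2}\le \eta\sigma_r D_t+4\epsilon_{stat}$''. Under the hypothesis of the corollary this intermediate inequality holds by assumption (after identifying $\epsilon_{stat}$ with the appropriate scaling of $\sqrt{d\log d/n}\,\sigma$, and noting that $\kappa\ge 1$ so $\sqrt{d\log d/n}\,\sigma\le \epsilon_{stat}$). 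From that step onward the same quadratic-in-$z$ argument applies and gives the same bound
\begin{align*}
\vecnorm{\Vcoeff_{t+1}\Vcoeff_{t+1}^\top}{2} \le (1-0.9\eta A_t)A_t + 50\kappa\sqrt{\tfrac{d\log d}{n}}\,\sigma,
\end{align*}
with $A_t=D_t-50\epsilon_{stat}$. Combined with the contraction on the other two quantities (using the two displays established above), one concludes $D_{t+1}-50\epsilon_{stat}\le [1-\tfrac12\eta(D_t-50\epsilon_{stat})](D_t-50\epsilon_{stat})$ exactly as in the lemma.

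Because every computation in Section~\ref{section:proof-sample-contraction-T-small} is either independent of the specific form of the bound on $\vecnorm{\Delta_t}{2}$, or uses only the weaker consequence $\vecnorm{\Delta_t}{2}\le \eta\sigma_r D_t+4\sqrt{d\log d/n}\,\sigma$, there is no genuinely new obstacle: the mild complication is just to double-check that the numerical slack absorbed in constants such as $\tfrac1{100}\eta\vecnorm{\Delta_t}{2}^2$ and $5\eta D_t\vecnorm{\Delta_t}{2}$ still fits under the tighter hypothesis, which follows from $\eta\sigma_r=0.01/\kappa\le 0.01$ and $D_t\le\sigma_r$. The only reason to state the corollary separately is that in the large-$n$ regime, the coarser bound $10\sqrt{kd\log d/n}\,D_t$ can be larger than $\eta\sigma_r D_t$ (or equivalently, we wish to avoid paying the extra $\sqrt{k}$ factor), so plugging the tighter concentration directly into the already-established intermediate inequalities yields the statement without rerunning the entire argument.
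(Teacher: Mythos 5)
Your proposal is correct and follows essentially the same route as the paper, which simply observes that the proof of Lemma~\ref{lemma:sample-contraction-T-small} uses the bound on $\vecnorm{\Delta_t}{2}$ only through the intermediate inequalities you isolate, so substituting the hypothesis $\vecnorm{\Delta_t}{2} \leq \eta \sigma_r D_t + 4\sqrt{d \log d/n}\,\sigma$ changes nothing but notation (the paper's one-line justification invokes equation~\eqref{equation:condition-of-kdlogd/n}; you supply the same check in more detail). The only quibble is your closing motivational remark: in the large-$n$ regime $10\sqrt{kd\log d/n}\,D_t$ is \emph{smaller} than $\eta\sigma_r D_t$, so the point of the corollary is to admit the \emph{weaker} hypothesis that the large-$n$ concentration bound actually delivers — but this does not affect the validity of the argument.
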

\begin{proof}
As in the proof of Lemma~\ref{lemma:sample-contraction-T-small}, if we replace the condition 
$\vecnorm{\Delta_t}{2} \leq 10 \sqrt{\frac{k d \log d}{n} }  D_t + 4 \sqrt{\frac{d \log d}{n}} \sigma,$
by
$\vecnorm{\Delta_t}{2} \leq \eta \sigma_r  D_t + 4 \sqrt{\frac{d \log d}{n}} \sigma,$,
nothing changed except the notation, since equation \eqref{equation:condition-of-kdlogd/n} is satisfied directly.
Hence this corollory can be proved using the exactly same argument as in the proof in Lemma~\ref{lemma:sample-contraction-T-small}.
\end{proof}

\vspace{30pt}
Note that Lemma~\ref{lemma:sample-contraction-T-small} is established for $D_t > 50 \epsilon_{stat}$. 
To complete the proof of our main theorem, we want to make sure that $D_t$ do not expand too much after we reaches the statistical accuracy.

\begin{lemma}
\label{lemma:non-expansion-after-epsilon-stat}
Consider the same setting as Lemma \ref{lemma:sample-contraction-T-small}, except that $D_t \leq 50  \epsilon_{stat}$. We claim that $D_{t+1} \leq 100 \epsilon_{stat}$.
\end{lemma}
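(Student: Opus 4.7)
The plan is to recycle the one-step bounds derived inside the proof of Lemma~\ref{lemma:sample-contraction-T-small} and show that they already imply stability (rather than contraction) when $D_t \leq 50\epsilon_{stat}$. The key observation is that the expansions~\eqref{eqn:St+1St+1},~\eqref{eqn:St+1Tt+1},~\eqref{eqn:Tt+1Tt+1}, combined with the population contraction/non-expansion bounds of Lemma~\ref{lemma:pop-contraction} and Lemma~\ref{lemma:more-pop-contraction} and the concentration bound~\eqref{eqn:tight-delta_bound}, never actually use the assumption $D_t > 50\epsilon_{stat}$; they only require that $\Ucoeff_t, \Vcoeff_t$ stay in the basin of attraction of Assumption~\ref{assumption:init}. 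Since $50\epsilon_{stat} \ll \rho\sigma_r$ under the sample-size assumption of Theorem~\ref{theorem:sample-convergence-T-small}, we are still inside this basin, so all one-step bounds remain valid.

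First I would invoke equations~\eqref{eqn:contraction-SS} and~\eqref{eqn:contraction-ST} to control $\vecnorm{\Ucoeff_{t+1}\Ucoeff_{t+1}^\top - \trueUcoeff}{2}$ and $\vecnorm{\Ucoeff_{t+1}\Vcoeff_{t+1}^\top}{2}$. Substituting $D_t \leq 50\epsilon_{stat}$, $\sqrt{kd\log d/n}\leq 0.1\eta\sigma_r$ (which follows from the sample-size condition as in equation~\eqref{equation:condition-of-kdlogd/n}), and $\sqrt{d\log d/n}\,\sigma = \epsilon_{stat}/\kappa$, each of these two quantities is bounded by $(1-\tfrac{7}{10}\eta\sigma_r)\cdot 50\epsilon_{stat} + 5\eta\sigma_r\epsilon_{stat} + 0.4\epsilon_{stat}/\kappa$, which is comfortably $\leq 50\epsilon_{stat}$. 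So these two of the three terms defining $D_{t+1}$ satisfy the desired bound with a large margin.

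The remaining and trickier piece is $\vecnorm{\Vcoeff_{t+1}\Vcoeff_{t+1}^\top}{2}$. Here I would reuse the derivation for the $\Vcoeff\Vcoeff^\top$ block in the proof of Lemma~\ref{lemma:sample-contraction-T-small}, stopping just before the substitution $A_t = D_t - 50\epsilon_{stat}$, which yields
\begin{align*}
\vecnorm{\Vcoeff_{t+1}\Vcoeff_{t+1}^\top}{2} \leq (1 - 0.9\eta D_t)D_t + (0.16\epsilon_{stat} + 19 D_t)\eta\epsilon_{stat}.
\end{align*}
Plugging in $D_t \leq 50\epsilon_{stat}$ gives $\vecnorm{\Vcoeff_{t+1}\Vcoeff_{t+1}^\top}{2} \leq 50\epsilon_{stat} + 950\,\eta\epsilon_{stat}^2 + 0.16\,\eta\epsilon_{stat}^2$. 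Since $\eta = 1/(100\sigma_1)$ and $\epsilon_{stat} \ll \sigma_1$ in the relevant regime, $\eta\epsilon_{stat}^2 \ll \epsilon_{stat}$, so this term is at most $51\epsilon_{stat}$, certainly $\leq 100\epsilon_{stat}$.

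The main obstacle is exactly the last bound: the $\Vcoeff\Vcoeff^\top$ update is not a strict contraction in this regime (its curvature is of order $\eta D_t$, which is tiny when $D_t$ is already at $\epsilon_{stat}$ level), and the coupling term $19 D_t\,\eta\epsilon_{stat}$ produces mild expansion per step. One must therefore carefully verify that the expansion factor $(1+O(\eta\epsilon_{stat}))$ never doubles $\vecnorm{\Vcoeff_t\Vcoeff_t^\top}{2}$, and this is where the absolute constants $50$ and $100$ in the statement are chosen with slack. Taking the maximum of the three bounds above then yields $D_{t+1} \leq 100\epsilon_{stat}$, completing the proof.
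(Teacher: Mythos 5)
Your proposal is correct and follows essentially the same route as the paper: both recycle the one-step bounds from the proof of Lemma~\ref{lemma:sample-contraction-T-small} (which indeed never use $D_t > 50\epsilon_{stat}$, only the basin-of-attraction and $\Delta_t$ conditions) and substitute $D_t \leq 50\epsilon_{stat}$, with the $\Vcoeff\Vcoeff^\top$ block handled by the same mild-expansion estimate $(1+O(\eta\epsilon_{stat}))\cdot 50\epsilon_{stat} \leq 100\epsilon_{stat}$. The only difference is cosmetic — the paper bounds the $\Ucoeff\Ucoeff^\top$ and $\Ucoeff\Vcoeff^\top$ terms via the cruder intermediate form $(1-\tfrac{7}{10}\eta\sigma_r)\vecnorm{\Ucoeff_t\Ucoeff_t^\top-\trueUcoeff}{2}+\tfrac{1}{10}\vecnorm{\Delta_t}{2}$ with $\vecnorm{\Delta_t}{2}\leq 51\epsilon_{stat}$, whereas you use the final displays~\eqref{eqn:contraction-SS} and~\eqref{eqn:contraction-ST}.
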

\begin{proof}
The proof of this Lemma is a simple extension using the proof of Lemma \ref{lemma:sample-contraction-T-small}.
As in the proof of Lemma \ref{lemma:sample-contraction-T-small}, we know that
\begin{align*}
    \vecnorm{\Ucoeff_{t+1}\Ucoeff_{t+1}^\top - \trueUcoeff}{2} 
    \leq \parenth{1 - \frac{7}{10}\eta\sigma_r }\vecnorm{\Ucoeff_{t}\Ucoeff_{t}^\top - \trueUcoeff}{2} + \frac{1}{10} \vecnorm{\Delta_t}{2}.
\end{align*}
From the hypothesis, $\vecnorm{\Delta_t}{2} \leq D_t + \epsilon_{stat} \leq 51 \epsilon_{stat}$. Hence, we have 
\begin{align*}
    \vecnorm{\Ucoeff_{t+1}\Ucoeff_{t+1}^\top - \trueUcoeff}{2} 
    \leq 100 \epsilon_{stat}.
\end{align*}
Similarly for $\vecnorm{\Ucoeff_{t+1}\Vcoeff_{t+1}^{\top} }{2}$, we have
\begin{align*}
    \vecnorm{\Ucoeff_{t+1}\Vcoeff_{t+1}^{\top} }{2} 
    \leq   \parenth{1 - \eta \sigma_r } \vecnorm{\Ucoeff_t \Vcoeff_t^\top}{2}
    + \frac{1}{10} \vecnorm{\Delta_t}{2} \leq 100 \epsilon_{stat}.
\end{align*}
Finally, for $\vecnorm{\Vcoeff_{t+1}\Vcoeff_{t+1}^{\top}}{2}$, we find that
\begin{align*}
    &\vecnorm{\Vcoeff_{t+1}\Vcoeff_{t+1}^{\top}}{2} \\
    \overset{(1)}{\leq} & \underbrace{\vecnorm{\popVup(\Vcoeff_t) \popVup(\Vcoeff_t)^{\top}}{2}}_{\text{I}}
    + \underbrace{\frac{1}{100}\eta \vecnorm{\Delta_t}{2}^2}_{\text{II}}
    + \underbrace{2\eta \vecnorm{\popVup(\Vcoeff_t)\Ucoeff_t^{\top}}{2} \vecnorm{\Delta_t}{2}}_{\text{III +  V }}
    + \underbrace{2\eta \vecnorm{\popVup(\Vcoeff_t)\Vcoeff_t^{\top}}{2}\vecnorm{\Delta_t}{2}}_{\text{IV +  VI }}\\
    \overset{(2)}{\leq} & \vecnorm{\Vcoeff_t \Vcoeff_t^\top}{2} \parenth{1 - \eta \vecnorm{\Vcoeff_t \Vcoeff_t^\top}{2} + 2\eta \vecnorm{\trueVcoeff}{2}} + 5 \eta \cdot 50\epsilon_{stat} \cdot 51 \epsilon_{stat} \\
    \overset{(3)}{\leq} & \parenth{1 +  300 \eta \epsilon_{stat}} 50\epsilon_{stat}\\
    \overset{(4)}{\leq} & 100 \epsilon_{stat}
\end{align*}
where inequality (1) is deducted in the proof of Lemma \ref{lemma:sample-contraction-T-small}; inequality (2) is by relaxing term I using Equation \eqref{eqn:pop_contraction}, relaxing $\vecnorm{\Delta_t}{2} \leq 51 \epsilon_{stat}$, and grouping all other terms; inequality (3) is by the assumption that $\vecnorm{\trueVcoeff}{2} \leq \epsilon_{stat}$; inequality (4) is by the choice of $n$ such that $\epsilon_{stat} \leq 0.1$.

Putting all the results together, we obtain the conclusion of Lemma~\ref{lemma:non-expansion-after-epsilon-stat}.
\end{proof}

\subsection{Proof of Theorem~\ref{theorem:sample-convergence-T-small}}
\label{sec:proof-main-theorem}
Our proof is divided into verifying claim (a) and claim (b).
\paragraph{Proof for claim (a) with the linear convergence:} From the result of Lemma~\ref{lemma:sample-contraction-T-small}, we know that
\begin{align*}
    \vecnorm{\Ucoeff_{t+1}\Ucoeff_{t+1}^\top - \trueUcoeff}{2} \leq \parenth{1 - \frac{7}{10}\eta\sigma_r} \vecnorm{\Ucoeff_{t}\Ucoeff_{t}^\top - \trueUcoeff}{2}
    + \sqrt{\frac{k d \log d}{n} }  D_t + \frac{4}{10} \sqrt{\frac{d \log d }{n}}\sigma.
\end{align*}
We consider $\vecnorm{\Ucoeff_{t}\Ucoeff_{t}^\top - \trueUcoeff}{2} > 1000 \sqrt{\frac{k \kappa^2 d \log d}{n} }  \sigma_r > 1000 \sqrt{\frac{k \kappa^2 d \log d}{n} }  D_t$. 
Note that $\eta \sigma_r = 0.01/\kappa$.
Then $\sqrt{\frac{k d \log d}{n} }  D_t \leq 0.1\eta \sigma_r \vecnorm{\Ucoeff_{t}\Ucoeff_{t}^\top - \trueUcoeff}{2}$ 
and $\frac{4}{10} \sqrt{\frac{d \log d }{n}}\sigma < 0.1\eta \sigma_r \vecnorm{\Ucoeff_{t}\Ucoeff_{t}^\top - \trueUcoeff}{2}$ by the choice of the constants in the lower bound of $n$.
Hence, when $\vecnorm{\Ucoeff_{t}\Ucoeff_{t}^\top - \trueUcoeff}{2} > 1000 \sqrt{\frac{k \kappa^2 d \log d}{n} }  \sigma_r$, we find that
\begin{equation}
\label{eqn:linear-contraction-in-proof-main-thm}
    \vecnorm{\Ucoeff_{t+1}\Ucoeff_{t+1}^\top - \trueUcoeff}{2} \leq \parenth{1 - \frac{5}{10}\eta\sigma_r} \vecnorm{\Ucoeff_{t}\Ucoeff_{t}^\top - \trueUcoeff}{2}.
\end{equation}
We now have constant contraction for one iteration.
We can invoke the Lemma  \ref{lemma:concentration-Aepsilon} for once,
Lemma \ref{lemma:concentration-standard} for $t$ iterations,
and take the union bounds,
to quantify the probability that equation \eqref{eqn:tight-delta_bound} holds for all iteration $t$.
Shortly we will show that this probability is at least $1 - d^{-c}$ for some universal constant $c$.
But first we need to know how large we need the number of iterations $t$ to be.
Suppose equation \eqref{eqn:tight-delta_bound} holds for all iteration $t$, then
$$
\vecnorm{\Ucoeff_{t}\Ucoeff_{t}^\top - \trueUcoeff}{2} 
\leq \parenth{1 - \frac{5}{10}\eta\sigma_r}^t \vecnorm{\Ucoeff_{0}\Ucoeff_{0}^\top - \trueUcoeff}{2} \leq \parenth{1 - \frac{5}{10}\eta\sigma_r}^t 0.1 \sigma_r,
$$
where the final inequality holds by simply plugging in the initialization condition. 
After at most $t = \frac{1}{\log \frac{1}{1-0.005/\kappa}} \cdot \log \frac{1}{10000 \sqrt{\frac{k \kappa^2 d \log d}{n} } }$ iterations, $\vecnorm{\Ucoeff_{t}\Ucoeff_{t}^\top - \trueUcoeff}{2} < 1000 \sqrt{\frac{k \kappa^2 d \log d}{n} }  \sigma_r$. 
Since $\frac{1}{\log \frac{1}{1-0.005/\kappa}} \leq 1.1$, we further simplify this to $t > \log \frac{n}{k \kappa^2 d \log d}$.
As a consequence, we claim that after $t = \ceil{\log \frac{n}{k \kappa^2 d \log d}}$ iterations,
$\vecnorm{\Ucoeff_{t}\Ucoeff_{t}^\top - \trueUcoeff}{2} < C \sqrt{\frac{k \kappa^2 d \log d}{n} }  \sigma_r$ for some universal constant $C$.

Now the remaining task is to show that equation \eqref{eqn:tight-delta_bound} holds for all iteration $t$ with probability at least $1 - d^{-c}$.
If $n$ is not too large, i.e. $n < d^{c_5}$ for some constant $c_5$, then we invoke equation~\ref{eqn:lemma-concentration-standard-AUA} in Lemma~\ref{lemma:concentration-standard} for $t$ iterations. This holds with probability at least $1 - t d^{-c} > 1 - d^{-c+1}$ for some universal constant $c$, since $t < \log n < C_5 \log d$.
If $n$ is large, i.e.,
$n^{z_1} > C_2 d \log^3 d k \kappa^2$ for some universal constant  $z_1 \in (0,1)$,
then we use Corollary \ref{corollary:extension-to-contraction-lemma-with-large-n} to establish equation~\eqref{eqn:linear-contraction-in-proof-main-thm}, 
and we invoke equation~\eqref{eqn:lemma-concentration-standard-AUA-large-n} in Lemma  \ref{lemma:concentration-standard} for $t$ iterations.
This holds with probability at least $1 - t/\exp(n^{z}) > 1 - d^{-c}$ for some universal constant $c$, since $t < \log n$.

Therefore equation \eqref{eqn:tight-delta_bound} holds with probability at least $1 - d^{-c}$ for all iteration $t$, and we complete our proof with $\vecnorm{\Ucoeff_{t}\Ucoeff_{t}^\top - \trueUcoeff}{2}$. 

With the same argument, we also obtain $\vecnorm{\Ucoeff_t \Vcoeff_t^{\top}}{2} < C \sqrt{\frac{k \kappa^2 d \log d}{n} }  \sigma_r$ after $t = \ceil{\log \frac{n}{k \kappa^2 d \log d}}$ iterations. Therefore, we obtain the conclusion of claim (a) in Theorem~\ref{theorem:sample-convergence-T-small}.

\paragraph{Proof for claim (b) with the sub-linear convergence:} For the sublinear convergence part in claim (b), we prove it by induction. 
We consider the base case. Since $n > C_1 \kappa^2 d \log^3 d \cdot \max(1, \sigma^2/\sigma_r^2)$, we have $50 \kappa \sqrt{\frac{d \log d }{n}}\sigma \leq 0.05 \sigma_r$ by choosing $\sqrt{C_1} = 1000$. Therefore the base case is correct by the definitions of $A_0$ and $D_0$.

The key induction step is proven in the Lemma \ref{lemma:sample-contraction-T-small}, as the equation \eqref{eqn:sub-linear-contraction-in-lemma}.
However, as the convergence rate is sub-linear ultimately, it is sub-optimal to directly invoke concentration result (Lemma \ref{lemma:concentration-standard}) to establish equation \eqref{eqn:tight-delta_bound} at each iteration and take union bound over all the iterations.
Hence, we adapt the standard localization techniques from empirical process theory to sharpen the rates. Note that, these techniques had also been used to study the convergence rates of optimization algorithms in mixture models settings~\citep{Raaz_Ho_Koulik_2018, kwon2020minimax}.

The key idea of the localization technique is that, instead of invoking the concentration result at each iteration, we only do so when $D_t$ is decreased by $2$. More precisely, we divide all the iterations into epochs, where $i$-th epoch starts at iteration $\alpha_i$, ends at iteration $\alpha_{i+1} - 1$, and $D_{\alpha_{i+1}} \leq 0.5 D_{\alpha_i}$.  We invoke Lemma~\ref{lemma:concentration-uniform} at $\alpha_i$ to establish equation~\eqref{eqn:tight-delta_bound} for all the iterations in $i$-th epoch. Finally, we take a union bound over all the epochs.

By definition, we have
\begin{align*}
    \Delta_t = \frac{1}{n} \sum_i^n  \angles{\senseMat_i, \fitMat_t \fitMat_t^T - \mathbf{X}^*} \senseMat_i   - (\fitMat_t \fitMat_t^T - \mathbf{X}^*) + \frac{1}{n} \sum_i^n \epsilon_i \senseMat_i .
\end{align*}
From Lemma~\ref{lemma:concentration-Aepsilon}, we know that with  probability at least $1 - \exp(-C)$,
\begin{align*}
    \frac{1}{n} \sum_i^n \epsilon_i \senseMat_i \leq \sqrt{\frac{d \log d}{n} }\sigma.
\end{align*}
We only have to invoke this concentration result once for the entire algorithm analysis.

At iteration ${\alpha_i}$, note that $\vecnorm{\fitMat_{\alpha_i} \fitMat_{\alpha_i}^T - \mathbf{X}^*}{2} \leq \vecnorm{\Ucoeff_{\alpha_i} \Ucoeff_{\alpha_i}^T - \trueUcoeff}{2} + \vecnorm{\Vcoeff_{\alpha_i} \Vcoeff_{\alpha_i}^T - \trueVcoeff}{2} + 2\vecnorm{\Ucoeff_{\alpha_i} \Vcoeff_{\alpha_i}^T}{2} \leq 4 D_{\alpha_i} + \vecnorm{\trueVcoeff}{2} < 5 D_{\alpha_i} $. By Lemma \ref{lemma:concentration-uniform} we know that, with  probability at least $1 - \exp(-C)$, we have
\begin{align*}
    \sup_{\vecnorm{\bX}{2}\leq 5  D_{\alpha_i}}
    \frac{1}{n} \sum_i^n  \angles{\senseMat_i, \bX} \senseMat_i   - \bX \leq 5\sqrt{\frac{k d \log d}{n} } D_{\alpha_i}.
\end{align*}
Therefore, we find that 
\begin{align*}
\vecnorm{\Delta_{\alpha_i}}{2} \leq 5\sqrt{\frac{k d \log d}{n} }  D_{\alpha_i} + \sqrt{\frac{d \log d}{n}} \sigma,
\end{align*}
and equation~\eqref{eqn:tight-delta_bound} is satisfied at iteration $\alpha_i$. For notation simplicity, we define $A_t = D_t - 50 \kappa \sqrt{\frac{d \log d }{n}}\sigma$.
Invoking Lemma \ref{lemma:sample-contraction-T-small}, we have
\begin{align*}
    D_{{\alpha_i}+1}  = A_{{\alpha_i}+1} +  50 \kappa \sqrt{\frac{d \log d }{n}}\sigma \leq \parenth{1 - \frac{1}{2} \eta A_{\alpha_i}}A_{\alpha_i} +  50 \kappa \sqrt{\frac{d \log d }{n}}\sigma \leq D_{{\alpha_i}},
\end{align*}
where the last inequality just comes from $D_{{\alpha_i}} = A_{{\alpha_i}} +  50 \kappa \sqrt{\frac{d \log d }{n}}\sigma$.
At iteration $t \in \parenth{{\alpha_i}, {\alpha_{i+1}} - 1}$, by induction $D_t = A_t +  50 \kappa \sqrt{\frac{d \log d }{n}}\sigma$, and $D_t \leq D_{t-1} \leq D_{\alpha_i}$. Furthermore, we also have $2 D_t > D_{\alpha_i}$. Therefore, the following bounds hold:
\begingroup
\allowdisplaybreaks
\begin{align*}
    \Delta_t &= \frac{1}{n} \sum_i^n  \angles{\senseMat_i, \fitMat_t \fitMat_t^T - \mathbf{X}^*} \senseMat_i   - (\fitMat_t \fitMat_t^T - \mathbf{X}^*) + \frac{1}{n} \sum_i^n \epsilon_i \senseMat_i \\
    &\leq 5 \sqrt{\frac{d \log d}{n} } D_{\alpha_i} + \sqrt{\frac{d \log d}{n}} \sigma\\
    &\leq 10 \sqrt{\frac{d \log d}{n} } D_t + \sqrt{\frac{d \log d}{n}} \sigma\\
    &\leq \eta \sigma_r D_t + \sqrt{\frac{d \log d}{n}} \sigma.
\end{align*}
\endgroup
Hence, equation~\eqref{eqn:tight-delta_bound} is satisfied for all iteration $t \in \parenth{{\alpha_i}, {\alpha_{i+1}} - 1}$.
Invoking Lemma \ref{lemma:sample-contraction-T-small}, we have
\begin{align*}
    D_{t+1}  = A_{t+1} +  50 \kappa \sqrt{\frac{d \log d }{n}}\sigma \leq \parenth{1 - \frac{1}{2} \eta A_t}A_t +  50 \kappa \sqrt{\frac{d \log d }{n}}\sigma
\end{align*}
with probability at least $1 - d^{-c}$ for a universal constant $c$.
This directly implies that
\begin{equation}
\label{eqn:sub-linear-contraction-in-proof-main-thm}
    A_{t+1} \leq \parenth{1 - \frac{1}{2} \eta A_t} A_t.
\end{equation}
We first assume that equation~\eqref{eqn:sub-linear-contraction-in-proof-main-thm} holds for all iterations $t$, and then show that this is true with probability at least $1 - d^{-c}$ for some constant $c$.
With this, we claim that $A_t \leq \frac{4}{\eta t + \frac{4}{A_0}}$.
To see this, we have
\begin{align*}
     A_{t+1} \leq \parenth{1 - \frac{1}{2} \eta A_t} A_t 
     \overset{(1)}{\leq} &\parenth{1 -   \frac{2}{ t + \frac{4}{\eta A_0}}} \frac{4}{\eta t + \frac{4}{A_0}} \\  
    = &\frac{\parenth{t + \frac{4}{\eta A_0}}-2}{t + \frac{4}{\eta A_0}} \frac{4}{\eta \parenth{t + \frac{4}{\eta A_0}}}\\
     \overset{(2)}{\leq} & \frac{4}{\eta \parenth{t + 1+ \frac{4}{\eta A_0}}}
\end{align*}
where inequality $(1)$ holds because $\parenth{1 - \frac{1}{2} \eta A_t} A_t$ is quadratic with respect to $A_t$ and we plug-in the optimal $A_t$; inequality $(2)$ holds because $\frac{\parenth{t + \frac{4}{\eta A_0}}-2}{\parenth{t + \frac{4}{\eta A_0}}^2} \leq \frac{1}{\parenth{t + \frac{4}{\eta A_0}} + 1}$.

Therefore, after $t \geq \Theta\parenth{\frac{1}{\eta \epsilon_{stat}}}$ number of iterations, $A_t = D_t - 50 \kappa \sqrt{\frac{d \log d }{n}}\sigma \leq \Theta\parenth{\epsilon_{stat}}$,
which indicates that
\begin{align}
\label{eqn:final-stat-error-in-proof-main-thm}
    \max \braces{ \vecnorm{\Ucoeff_t \Ucoeff_t^T - \trueUcoeff}{2},  \vecnorm{\Vcoeff_t \Vcoeff_t^T}{2}, \vecnorm{\Ucoeff_t \Vcoeff_t^T}{2} }  \leq \Theta\parenth{\epsilon_{stat}}.
\end{align}

Now what is left to be shown is that equation~\eqref{eqn:sub-linear-contraction-in-proof-main-thm} holds for all iterations $t$ with probability at least $1 - d^{-c}$ for some constant $c$.
We first consider $t = \Theta\parenth{\frac{1}{\eta \epsilon_{stat}}}$.
If $n$ is not too large, i.e., $n < d^{c_5}$ for some constant $c_5$, then we invoke equation~\eqref{eqn:lemma-concentration-standard-AUA} in Lemma~\ref{lemma:concentration-standard} for $t$ iterations.
This holds with probability at least $1 - t d^{-c} > 1 - d^{-c+1}$ for some universal constant $c$, since $t < \log n < C_5 \log d$.
If $n$ is large, i.e.,
$n^{z_1} > C_2 d \log^3 d k \kappa^2$ for some universal constant  $z_1 \in (0,1)$,
then we use Corollary~\ref{corollary:extension-to-contraction-lemma-with-large-n} to establish equation~\eqref{eqn:sub-linear-contraction-in-proof-main-thm}, 
and we invoke equation~\eqref{eqn:lemma-concentration-standard-AUA-large-n} in Lemma~\ref{lemma:concentration-standard} for $t$ iterations.
This holds with probability at least $1 - t/\exp(n^{z}) > 1 - d^{-c}$ for some universal constant $c$, since $t < \log n$.
If $t > \Theta\parenth{\frac{1}{\eta \epsilon_{stat}}}$, we can show using above argument that 
after $\Theta\parenth{\frac{1}{\eta \epsilon_{stat}}}$ number of iterations equation~\eqref{eqn:final-stat-error-in-proof-main-thm} holds.
After this, by Lemma~\ref{lemma:non-expansion-after-epsilon-stat} we know that $D_t = \Theta\parenth{\epsilon_{stat}}$. Then, we can invoke Lemma~\ref{lemma:sample-contraction-T-small} or Corollary~\ref{corollary:extension-to-contraction-lemma-with-large-n} without further invoking the concentration argument anymore, since the radius in the uniform concentration result does not change .

As a consequence, after $t \geq \Theta\parenth{\frac{1}{\eta \epsilon_{stat}}}$ number of iterations, by triangular inequality, and the assumption that $\vecnorm{\trueVcoeff}{2}\leq \epsilon_{stat}$, we have $\vecnorm{\fitMat_t \fitMat_t^\top - \trueMat}{2} \leq \Theta \parenth{\epsilon_{stat}}$.
Combined with Lemma \ref{lemma:non-expansion-after-epsilon-stat}, we complete the proof of Theorem~\ref{theorem:sample-convergence-T-small}.

\section{Supporting Lemma}
In this appendix, we provide proofs for supporting lemmas in the main text.
\subsection{Proof of Lemma \ref{lemma:init}}
\label{appendix:proof-init}

\begin{proof}
From the definition of operator norm, we have
\begin{align*}
    \vecnorm{\trueVcoeff - \Vcoeff_0 \Vcoeff_0^\top}{2} = \max_{\bx \in \mathbb{R}^{d-r}: \vecnorm{\bx}{2}\leq 1} \abss{\bx^\top \parenth{\trueVcoeff - \Vcoeff_0 \Vcoeff_0^\top}\bx }.
\end{align*}
Since $\Vmat\in\mathbb{R}^{d * (d-r)}$ is an orthonormal matrix, for any $\bx \in \mathbb{R}^{d-r}$, we can find a vector $\bz \in \mathbb{R}^{d}$ such that $\Vmat^\top \bz = \bx$. Hence, we find that
\begin{align*}
    \vecnorm{\trueVcoeff - \Vcoeff_0 \Vcoeff_0^\top}{2} = \vecnorm{\Vmat \parenth{\trueVcoeff - \Vcoeff_0 \Vcoeff_0^\top}\Vmat^\top}{2} = \max_{\bx \in \mathbb{R}^{d}: \vecnorm{\bx}{2}\leq 1} \abss{\bx^\top \Vmat \parenth{\trueVcoeff - \Vcoeff_0 \Vcoeff_0^\top}\Vmat^\top\bx }.
\end{align*}
Without loss of generality we can write any $\bx \in \mathbb{R}^{d}$ as $\bx = \bx_u + \bx_v$, such that $\Umat \bz = \bx_u $ for some $\bz \in \mathbb{R}^{r}$, and $\Vmat \bz' = \bx_v$ for some $\bz' \in \mathbb{R}^{d - r}$ since $\Umat$ and $\Vmat$ are perpendicular to each other and they together span $\mathbb{R}^d$. 
If $\bx^* = \arg \max_{\bx \in \mathbb{R}^{d}: \vecnorm{\bx}{2}\leq 1} \abss{\bx^\top \Vmat \parenth{\trueVcoeff - \Vcoeff_0 \Vcoeff_0^\top}\Vmat^\top\bx  }$ then $\bx^*_u$ is zero. It is because if $\bx^*_u \neq 0$, one can decrease $\bx^*_u$ to zero and increase $\bx^*_v$ to $\bx^*_v/\vecnorm{\bx^*_v}{2}$, which does make the target quantity smaller. Therefore, we have
\begin{align*}
    & \hspace{- 1 em} \vecnorm{\trueVcoeff - \Vcoeff_0 \Vcoeff_0^\top}{2} \\
    =& \max_{\bx \in \mathbb{R}^{d}: \vecnorm{\bx}{2}\leq 1} \abss{\bx^\top \Vmat \parenth{\trueVcoeff - \Vcoeff_0 \Vcoeff_0^\top}\Vmat^\top\bx }\\
    =& \max_{\substack{\bx: \vecnorm{\bx}{2}\leq 1,\\ \Umat^\top \bx = 0}} \abss{\bx^\top \Vmat \parenth{\trueVcoeff - \Vcoeff_0 \Vcoeff_0^\top}\Vmat^\top\bx + \bx^\top \Umat \parenth{\trueUcoeff - \Ucoeff_0 \Ucoeff_0^\top} \Umat^\top\bx + 2 \bx^\top \parenth{\Umat \Ucoeff_0 \Vcoeff_0^\top\Vmat^\top}\bx}\\
    \leq& \max_{\bx: \vecnorm{\bx}{2}\leq 1} \abss{\bx^\top \Vmat \parenth{\trueVcoeff - \Vcoeff_0 \Vcoeff_0^\top}\Vmat^\top\bx + \bx^\top \Umat \parenth{\trueUcoeff - \Ucoeff_0 \Ucoeff_0^\top} \Umat^\top\bx + 2 \bx^\top \parenth{\Umat \Ucoeff_0 \Vcoeff_0^\top\Vmat^\top}\bx}\\
    =& \vecnorm{\fitMat_0 \fitMat_0^\top - \trueMat}{2} \leq 0.7 \rho \sigma_r,
\end{align*}
where the final inequality is due to the Assumption~\ref{assumption:init}.
The same techniques can be applied to obtain
\begin{align*}
    \vecnorm{\trueUcoeff - \Ucoeff_0 \Ucoeff_0^\top}{2} \leq \vecnorm{\fitMat_0 \fitMat_0^\top - \trueMat}{2} \leq 0.7 \rho \sigma_r.
\end{align*}  

For $\vecnorm{ \Ucoeff_0 \Vcoeff_0^\top}{2}$, we claim that the following equations hold:
\begin{align*}
    \vecnorm{ \Ucoeff_0 \Vcoeff_0^\top}{2} = \vecnorm{ \Umat \Ucoeff_0 \Vcoeff_0^\top\Vmat^\top}{2}= 0.5 \vecnorm{\Umat \Ucoeff_0 \Vcoeff_0^\top \Vmat^\top + \Vmat \Vcoeff_0 \Ucoeff_0^\top \Umat^\top}{2}.
\end{align*} 
To see the last equality, let $\sigma_1$ be the largest eigen-value (in magnitude) of $\Umat \Ucoeff_0 \Vcoeff_0^\top\Vmat^\top$ and let $\bar{\bx}$ be the corresponding eigen-vector. 
For some $c \in (0, 1)$, let $\bar{\bx} = c \bar{\bx}_u + \sqrt{1-c^2} \bar{\bx}_v$ such that $\Umat \bz = \bar{\bx}_u $ for some $\bz \in \mathbb{R}^{r}$,  $\Vmat \bz' = \bar{\bx}_v$ for some $\bz' \in \mathbb{R}^{d - r}$, and $\vecnorm{\bar{\bx}_u}{2} = 1$ and $\vecnorm{\bar{\bx}_v}{2}=1$. 
Then, direct algebra leads to
\begin{align*}
    \sigma_1 = \parenth{\bar{\bx}}^\top \Umat \Ucoeff_0 \Vcoeff_0^\top\Vmat^\top \bar{\bx} = c \sqrt{1-c^2}\parenth{\bar{\bx}}^\top_u \Umat \Ucoeff_0 \Vcoeff_0^\top\Vmat^\top \bar{\bx}_v. 
\end{align*}
For the RHS of the above equation, the optimal choice of $c$ is $1/\sqrt{2}$. We already know that the largest eigen-value (in magnitude) of $ \Vmat\Vcoeff_0  \Ucoeff_0^\top\Umat^\top$ is also $\sigma_1$. 
Therefore, we obtain that
\begin{align*}
    \parenth{\bar{\bx}}^\top  \Vmat\Vcoeff_0  \Ucoeff_0^\top\Umat^\top \bar{\bx} = c \sqrt{1-c^2}\parenth{\bar{\bx}}^\top_u \Umat \Ucoeff_0 \Vcoeff_0^\top\Vmat^\top \bar{\bx}_v = \sigma_1.
\end{align*}
Collecting the above results, we have $\vecnorm{ \Umat \Ucoeff_0 \Vcoeff_0^\top\Vmat^\top}{2}= 0.5 \vecnorm{\Umat \Ucoeff_0 \Vcoeff_0^\top \Vmat^\top + \Vmat \Vcoeff_0 \Ucoeff_0^\top \Umat^\top}{2}$. Then, an application of triangular inequality yields that
\begin{align*}
    2 \vecnorm{ \Ucoeff_0 \Vcoeff_0^\top}{2} =& \vecnorm{\Umat \Ucoeff_0 \Vcoeff_0^\top \Vmat^\top + \Vmat \Vcoeff_0 \Ucoeff_0^\top \Umat^\top}{2} \\
    \leq& \vecnorm{\fitMat_0 \fitMat_0^\top - \trueMat}{2} + \vecnorm{\trueVcoeff - \Vcoeff_0 \Vcoeff_0^\top + \trueUcoeff - \Ucoeff_0 \Ucoeff_0^\top}{2}.
\end{align*}
We can check that $\vecnorm{\trueVcoeff - \Vcoeff_0 \Vcoeff_0^\top + \trueUcoeff - \Ucoeff_0 \Ucoeff_0^\top}{2} \leq 0.7\cdot \sqrt{2} \rho \sigma_r$ by decomposing the eigen-vector $\bx = c \bx_u + \sqrt{1-c^2} \bx_v$ as above. Therefore, $\vecnorm{ \Ucoeff_0 \Vcoeff_0^\top}{2} < \rho \sigma_r$.

As a consequence, we obtain the conclusion of the lemma.
\end{proof}

\section{Concentration bounds}
\label{appendix:concentration-proof}
In this appendix, we want establish the uniform concentration bound for the following term: 
$$\frac{1}{n} \sum_{i = 1}^n \left( \angles{\senseMat_i, \fitMat \fitMat^{\top} - \mathbf{X}^*} + \epsilon_i \right)\senseMat_i   - (\fitMat \fitMat^{\top} - \mathbf{X}^*),$$ for any matrix $F \in \mathbb{R}^{d * k}$ such that $\| \fitMat \fitMat^{\top} - \mathbf{X}^*\|_{2} \leq R$ for some radius $R > 0$. To do so, we have to bound the spectral norm of each random observation, and then take Bernstein/Chernoff type bound.
\begin{lemma}
\label{lemma:matrix-bernstein}
\textbf{(Matrix Bernstein, Theorem 1.4 in \cite{troppuserfriendly})}
Consider a finite sequence $\{\bX_k\}$ of independent, random, self-adjoint matrices with dimension $d$.
Assume that each random matrix satisfies
\begin{align*}
    \Exs[\bX_k] = \mathbf{0}, \quad \quad \rm{and} \quad \quad \lambda_{\rm{max}}(\bX_k) \leq R \quad \text{almost surely}.
\end{align*}
Then, for all $t \geq 0$,
\begin{align}
\label{eqn:mat-bernstein}
\Prob \parenth{\lambda_{\rm{max}}\parenth{\sum_k \bX_k} \geq t} \leq d \cdot \exp \parenth{\frac{-t^2/2}{\sigma^2 + Rt/3}} \quad where \quad \sigma^2 \vcentcolon= \vecnorm{\sum_k \Exs(\bX_k^2)}{2}.
\end{align}
\end{lemma}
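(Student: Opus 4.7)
The plan is to prove this classical matrix Bernstein bound via the matrix Laplace transform method, which is the operator-theoretic lift of the scalar Chernoff argument, powered by Lieb's concavity theorem. First, I would reduce the tail bound to a trace-exponential moment bound: for any $\theta > 0$, Markov's inequality applied to the exponential of the maximum eigenvalue (using $\exp(\eigmax(\bA)) \leq \trace \exp(\bA)$ for self-adjoint $\bA$) yields
\begin{align*}
\Prob\parenth{\eigmax\parenth{\sum_k \bX_k} \geq t} \leq e^{-\theta t}\, \Exs\bracket{\trace \exp\parenth{\theta \sum_k \bX_k}}.
\end{align*}

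Second, I would apply Lieb's concavity theorem: the map $\bA \mapsto \trace \exp(\bH + \log \bA)$ is concave on the cone of positive definite matrices for any fixed self-adjoint $\bH$. Iterating Jensen's inequality over the independent summands gives
\begin{align*}
\Exs\bracket{\trace \exp\parenth{\theta \sum_k \bX_k}} \leq \trace \exp\parenth{\sum_k \log \Exs\bracket{\exp(\theta \bX_k)}}.
\end{align*}
Third, I would control each matrix cumulant generating function. Using the scalar inequality $e^{\theta x} \leq 1 + \theta x + g(\theta)\, x^2$ valid for $x \leq R$, where $g(\theta) = (e^{\theta R} - 1 - \theta R)/R^2$, and lifting it to bounded self-adjoint matrices via the spectral mapping theorem together with $\Exs[\bX_k] = \mathbf{0}$, I would obtain
\begin{align*}
\Exs\bracket{\exp(\theta \bX_k)} \preceq \Id + g(\theta)\, \Exs[\bX_k^2].
\end{align*}
Taking logs via the operator-concave inequality $\log(\Id + \bA) \preceq \bA$ for $\bA \succeq 0$ yields $\log \Exs\bracket{\exp(\theta \bX_k)} \preceq g(\theta)\, \Exs[\bX_k^2]$.

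Fourth, summing and using $\trace \exp(\bA) \leq d \cdot \exp(\eigmax(\bA))$ gives
\begin{align*}
\trace \exp\parenth{\sum_k \log \Exs\bracket{\exp(\theta \bX_k)}} \leq d \cdot \exp\parenth{g(\theta)\, \vecnorm{\sum_k \Exs[\bX_k^2]}{2}},
\end{align*}
so the tail probability is at most $d \cdot \exp(-\theta t + g(\theta)\, \sigma^2)$. Finally, I would optimize over $\theta > 0$: using the convenient bound $g(\theta) \leq \theta^2/(2(1 - R\theta/3))$ and choosing $\theta = t/(\sigma^2 + Rt/3)$ yields the announced Bernstein inequality~\eqref{eqn:mat-bernstein}. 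The main obstacle is invoking Lieb's concavity theorem correctly to decouple the expectation across independent summands; the rest is essentially the operator analogue of the standard scalar Bernstein derivation and proceeds by routine calculation.
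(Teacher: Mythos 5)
Your proposal is correct: it is precisely the matrix Laplace transform argument (Lieb's concavity theorem for subadditivity of matrix cumulants, the moment-generating-function bound $\Exs[e^{\theta \bX_k}] \preceq \Id + g(\theta)\Exs[\bX_k^2]$, and the optimization $\theta = t/(\sigma^2 + Rt/3)$) by which Theorem 1.4 is proved in the cited reference. The paper itself gives no proof of this lemma — it is imported verbatim from \cite{troppuserfriendly} — so your sketch matches the source's argument and nothing further is needed.
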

\begin{lemma}
Let $\bA$ be a symmetric random matrix in $\mathbb{R}^{d*d}$, with the upper triangle entries ($i\geq j$) being independently sampled from an identical  sub-Gaussian distribution whose mean is $0$ and variance proxy is $1$. Let $\epsilon$ follows $N(0, \sigma)$. Then
\begin{align*}
    \Prob \parenth{\vecnorm{\epsilon \bA}{2} > C_1 \sigma \sqrt{d}}  \leq \exp \parenth{- C_2}.
\end{align*}
\end{lemma}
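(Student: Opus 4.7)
The plan is to decouple the scalar noise from the matrix and bound each factor separately, since $\vecnorm{\epsilon \bA}{2} = |\epsilon| \cdot \vecnorm{\bA}{2}$.

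First I would handle the scalar factor. Since $\epsilon \sim N(0, \sigma)$, a standard Gaussian tail bound gives $\Prob\parenth{|\epsilon| > t \sigma} \leq 2 \exp(-t^2/2)$. Choosing $t$ to be a modest constant (say $t = 2$) yields $|\epsilon| \leq 2\sigma$ with probability at least $1 - 2\exp(-2)$.

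Next I would bound $\vecnorm{\bA}{2}$. Since $\bA$ is a symmetric $d \times d$ matrix whose upper-triangular entries are independent, zero-mean, sub-Gaussian with variance proxy $1$, standard results on the operator norm of Wigner-type matrices give $\vecnorm{\bA}{2} \leq C_1' \sqrt{d}$ with probability at least $1 - \exp(-C_2' d)$ for universal constants $C_1', C_2'$. One way to obtain this is to write $\bA = \sum_{i \leq j} A_{ij} \bE_{ij}$ where $\bE_{ij}$ is the symmetric indicator matrix for position $(i,j)$, and then apply Lemma~\ref{lemma:matrix-bernstein} (matrix Bernstein) after a standard truncation argument to handle the unbounded sub-Gaussian entries; alternatively one could use an $\epsilon$-net over the sphere $\sphere^{d-1}$ and take a union bound, since for any fixed unit vector $\bu$ the quadratic form $\bu^\top \bA \bu$ is sub-exponential with variance proxy $\order{1}$.

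Combining the two bounds via a union bound, with probability at least $1 - 2\exp(-2) - \exp(-C_2' d) \geq 1 - \exp(-C_2)$ for a suitable universal constant $C_2$, we have
\begin{align*}
\vecnorm{\epsilon \bA}{2} = |\epsilon| \cdot \vecnorm{\bA}{2} \leq 2\sigma \cdot C_1' \sqrt{d} = C_1 \sigma \sqrt{d},
\end{align*}
which gives the desired inequality. The only nontrivial step is the operator-norm bound on $\bA$; the main obstacle there is that matrix Bernstein as stated requires bounded summands, so a truncation step (truncating each $A_{ij}$ at, say, $\order{\sqrt{\log d}}$ and controlling the residual) is needed before the bound can be applied. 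The net-based alternative sidesteps truncation but requires a concentration inequality for sub-exponential sums, which is equally standard.
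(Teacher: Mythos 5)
Your proposal is correct and follows essentially the same route as the paper: factor $\vecnorm{\epsilon \bA}{2} = |\epsilon|\,\vecnorm{\bA}{2}$, apply a Gaussian tail bound to $|\epsilon|$, bound $\vecnorm{\bA}{2} \leq C\sqrt{d}$ via a standard $\epsilon$-net (the paper's choice) or equivalent argument, and combine with a union bound. The resulting constant-probability guarantee matches the lemma's (admittedly weak) conclusion, so nothing further is needed.
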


\begin{proof}

As $\epsilon$ is sub-Gaussian, we know that for all $t > 0$
\begin{align*}
    \Prob \parenth{|\epsilon| > t \sigma} \leq 2 \exp \parenth{- \frac{t}{2}}
\end{align*}
By standard $\epsilon$-net argument~\citep{troppuserfriendly, Vershynin_2018}, for some universal constant $C_1, C_2$, we have
\begin{align*}
    \Prob \parenth{\vecnorm{\bA}{2} > C_1 \sqrt{d}} \leq \exp \parenth{- \frac{d}{C_2}}.
\end{align*}
Applying the union bound to the above concentration results leads to
\begin{align*}
    \Prob \parenth{|\epsilon| > C_1 \sigma \text{ or } \vecnorm{\bA}{2} > C_2 \sqrt{d}} \leq 2 \exp \parenth{- \frac{C_1}{2}} + \exp \parenth{- \frac{d}{C_3}} \leq \exp \parenth{- C_4}.
\end{align*}
Note that,
$
\vecnorm{\bA \epsilon}{2} = |\epsilon| \vecnorm{\bA}{2}
$. Therefore, we have
\begin{align*}
    \Prob \parenth{\vecnorm{\epsilon \bA}{2} > C_1 \sigma \sqrt{d}}  \leq \exp \parenth{- C_2}.
\end{align*}
\end{proof}

\begin{lemma}
\label{lemma:concentration-Aepsilon}
\textbf{(Lemma \ref{lemma:concentration-Aepsilon-main-text} re-stated)}
Let $\bA_i$ be symmetric random matrices in $\mathbb{R}^{d*d}$, with the upper triangle entries ($i\geq j$) being independently sampled from an identical  sub-Gaussian distribution whose mean is $0$ and variance proxy is $1$. Let $\epsilon_i$ follows $N(0, \sigma)$. Then
\begin{align*}
\Prob \parenth{\vecnorm{\frac{1}{n}\sum_i^n \bA_i \epsilon_i}{2} \geq C \sqrt{\frac{d \sigma^2}{n}}} \leq \exp(-C).
\end{align*}
\end{lemma}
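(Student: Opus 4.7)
The plan is to apply the matrix Bernstein inequality (Lemma~\ref{lemma:matrix-bernstein}) to the independent, symmetric, mean-zero sequence $\bX_i := \bA_i \epsilon_i$ for $i=1,\ldots,n$, after handling the lack of an almost-sure spectral bound by a truncation step that exploits the single-summand tail bound $\Prob(\vecnorm{\epsilon\bA}{2} > C_1 \sigma \sqrt{d}) \le \exp(-C_2)$ proved just above.

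First I would compute the matrix variance proxy
\[
v \;:=\; \vecnorm{\,\sum_{i=1}^n \Exs[\bX_i^2]\,}{2} \;=\; n\sigma^2\, \vecnorm{\Exs[\bA_1^2]}{2}.
\]
A direct entry-wise calculation using independence and mean-zero of the upper-triangular sub-Gaussian entries of $\bA_1$ shows that $\Exs[\bA_1^2]$ is diagonal with entries of order $d$; hence $v \le C_3\, n\sigma^2 d$ for a universal constant $C_3$. For the almost-sure bound, I would introduce the truncated sequence $\tilde{\bX}_i := \bX_i \mathbf{1}\{\vecnorm{\bX_i}{2} \le R\}$ with $R := C_1 \sigma \sqrt{d}$; a union bound over the already-established single-term tail yields $\tilde{\bX}_i = \bX_i$ for every $i$ with probability at least $1 - n\exp(-C_2)$, and the induced bias $\vecnorm{\Exs[\tilde{\bX}_i]}{2} \le R\exp(-C_2)$ is negligible after summation. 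Applying Lemma~\ref{lemma:matrix-bernstein} to the centered truncations with deviation $u := C\sqrt{nd\sigma^2}$ then gives
\[
\Prob\!\left( \vecnorm{\sum_{i=1}^n \tilde{\bX}_i}{2} \ge u \right) \;\le\; 2d\, \exp\!\left( \frac{-u^2/2}{v + Ru/3} \right) \;\le\; 2d\,\exp(-c C^2),
\]
because the variance term $v = \Theta(n\sigma^2 d)$ dominates $Ru = \Theta(\sigma^2 d\sqrt{n})$ as soon as $C \le \sqrt{n}$. Rescaling by $1/n$ converts the deviation to $C\sqrt{d\sigma^2/n}$, matching the target in the lemma.

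The main obstacle is precisely the unboundedness of the product $\bA_i \epsilon_i$, which the truncation cleanly dispatches using the preceding tail bound. The remaining bookkeeping is to absorb the $2d$ prefactor from matrix Bernstein and the $n\exp(-C_2)$ truncation-failure probability into the stated $\exp(-C)$ bound; this is routine at the cost of an additional $\log d$ or $\log n$ factor inside $C$, consistent with the informal constant-level precision of the statement.
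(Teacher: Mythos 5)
Your proposal follows essentially the same route as the paper: compute $\Exs[(\bA_i\epsilon_i)^2]=\sigma^2 d\,\Id$ so the matrix variance is $n\sigma^2 d$, then apply matrix Bernstein with the single-summand spectral bound $C_1\sigma\sqrt{d}$ and set $t=C\sqrt{d\sigma^2/n}$. Your explicit truncation step is in fact slightly more careful than the paper, which plugs the high-probability bound on $\vecnorm{\epsilon\bA}{2}$ directly into the almost-sure condition of Lemma~\ref{lemma:matrix-bernstein} without formally accounting for the truncation event or bias.
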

\begin{proof}
We prove the lemma by applying the matrix Bernstein bound. In fact, direct calculation shows that
\begin{align*}
    \Exs \parenth{\parenth{\bA_i \epsilon_i}^2 }= \sigma^2 \Exs \parenth{\bA_i^2} =  \sigma^2 d \mathbf{I}. 
\end{align*}
Hence, we obtain
\begin{align*}
    \vecnorm{\sum_i^n \Exs \parenth{\parenth{\bA_i \epsilon_i}^2 }}{2} \leq n \sigma^2 d. 
\end{align*}
From the matrix Bernstein bound~\citep{Wainwright_nonasymptotic}, we find that
\begin{align*}
\Prob \parenth{\vecnorm{\frac{1}{n}\sum_i^n \bA_i \epsilon_i}{2} \geq t} \leq &d \cdot \exp \parenth{ \frac{-3 t^2 n^2}{6 d n \sigma^2 + 2 C_1 \sigma \sqrt{d} t n}}
=  d \cdot \exp \parenth{ \frac{-3 t^2 n}{6 d \sigma^2 + 2 C_1 \sigma \sqrt{d} t}}.
\end{align*}
For any $\delta < 1/e$, let $t = \log \frac{1}{\delta} \sqrt{\frac{d  \sigma^2}{n}}$. Then, the above bound becomes
\begin{align*}
\Prob \parenth{\vecnorm{\frac{1}{n}\sum_i^n \bA_i \epsilon_i}{2} \geq \log \frac{1}{\delta} \sqrt{\frac{d \sigma^2}{n}}} \leq \delta.
\end{align*}
Or equivalently, for any $C > 1$, let $t = C \sqrt{\frac{d  \sigma^2}{n}}$,
\begin{align*}
\Prob \parenth{\vecnorm{\frac{1}{n}\sum_i^n \bA_i \epsilon_i}{2} \geq C \sqrt{\frac{d \sigma^2}{n}}} \leq \exp(-C).
\end{align*}
As a consequence, we reach the conclusion of the lemma.
\end{proof}


\begin{lemma}
\label{lemma:concentration-one-term}
Let $\bA$ be a symmetric random matrix in $\mathbb{R}^{d*d}$, with the upper triangle entries ($i\geq j$) being independently sampled from an identical  sub-Gaussian distribution whose mean is $0$ and variance proxy is $1$. Let $\bU$ be a deterministic symmetric matrix of the same dimension. Then, for some universal constant $C_1, C_2$, we have
\begin{align*}
    \Prob \parenth{\vecnorm{\langle \bA,  \bU \rangle \bA -  \bU}{2} \geq C_1 d  \vecnorm{\bU}{F}} \leq \exp\parenth{-d / C_2}.
\end{align*}
\end{lemma}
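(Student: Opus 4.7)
My plan is to decouple the quantity $\langle \bA, \bU\rangle \bA - \bU$ into a scalar factor, $\langle \bA, \bU\rangle$, times a matrix factor, $\bA$, control each factor separately with high probability, and apply a union bound plus the triangle inequality. The target bound $C_1 d \|\bU\|_F$ with failure probability $\exp(-d/C_2)$ matches exactly the product of a $\sqrt{d}\|\bU\|_F$-scale deviation for the scalar and a $\sqrt{d}$-scale deviation for the operator norm.

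First I would observe that, because $\bA$ is symmetric with independent upper-triangular sub-Gaussian entries of variance proxy $1$, the inner product
\begin{equation*}
\langle \bA, \bU\rangle = \sum_i A_{ii} U_{ii} + 2 \sum_{i<j} A_{ij} U_{ij}
\end{equation*}
is a linear combination of independent sub-Gaussian variables, hence itself sub-Gaussian with variance proxy at most $\sum_i U_{ii}^2 + 4 \sum_{i<j} U_{ij}^2 \leq 2\|\bU\|_F^2$. Standard sub-Gaussian tail bounds then give $|\langle \bA, \bU\rangle| \leq C\sqrt{d}\,\|\bU\|_F$ with probability at least $1 - 2\exp(-c_1 d)$.

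Next I would invoke the well-known operator-norm bound for symmetric sub-Gaussian matrices via the standard $\epsilon$-net discretization of the unit sphere (as also used elsewhere in the appendix), which yields $\|\bA\|_2 \leq C'\sqrt{d}$ with probability at least $1 - \exp(-c_2 d)$. A union bound gives that both events hold simultaneously with probability at least $1 - \exp(-d/C_2)$ for an appropriate $C_2$.

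Finally, on this joint good event I would apply the triangle inequality and submultiplicativity of the spectral norm,
\begin{equation*}
\vecnorm{\langle \bA,\bU\rangle \bA - \bU}{2} \leq |\langle \bA,\bU\rangle|\cdot \vecnorm{\bA}{2} + \vecnorm{\bU}{2} \leq C C' d \,\vecnorm{\bU}{F} + \vecnorm{\bU}{F},
\end{equation*}
which is absorbed into $C_1 d\,\vecnorm{\bU}{F}$ for a universal $C_1$, completing the proof. There is no genuine obstacle here; the only mild subtlety is handling the symmetry of $\bA$ correctly when computing the sub-Gaussian parameter of $\langle \bA, \bU\rangle$ (the factor of $2$ on the off-diagonal entries), but this only changes the constants. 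The looseness relative to a sharper Hanson--Wright style bound is exactly what allows the clean, product-form statistical rate needed for Lemma~\ref{lemma:concentration-uniform-main-text} after integrating over an $\epsilon$-net of low-rank matrices $\bU$.
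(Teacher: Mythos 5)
Your proof is correct and rests on the same two estimates as the paper's: the sub-Gaussianity of $\langle \bA,\bU\rangle$ with variance proxy of order $\vecnorm{\bU}{F}^2$, and a $\sqrt{d}$-scale high-probability bound for the deviation of $\bA$, combined via a union bound to give the product rate $d\,\vecnorm{\bU}{F}$ with failure probability $\exp(-d/C_2)$. The only difference is organizational: you invoke the operator-norm bound $\vecnorm{\bA}{2}\lesssim \sqrt{d}$ as a black box and conclude by submultiplicativity, whereas the paper runs the $\epsilon$-net directly on the quadratic form $\bx^\top\parenth{\langle \bA,\bU\rangle \bA - \bU}\bx$ and bounds the scalar product $\langle \bA,\bU\rangle\langle \bA,\bx\bx^\top\rangle$ pointwise on the net; the two are equivalent up to constants.
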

\begin{proof}
We show this by standard $\epsilon$-net argument. In particular, we have
\begin{align*}
    \vecnorm{\langle \bA,  \bU \rangle \bA -  \bU}{2}
    = & \max_{\bx \in \mathcal{S}^{d-1}} \bx^{\top} \parenth{\langle \bA,  \bU \rangle \bA -  \bU}\bx \\
    = & \max_{\bx \in \mathcal{S}^{d-1}}  \langle \bA,  \bU \rangle \langle \bA,  \bx \bx^{\top} \rangle - \parenth{\bx^{\top} \bU \bx}.
\end{align*}
Note that $\langle \bA,  \bU \rangle = \sum_{i,j} A_{ij} U_{ij}$ is sub-Gaussian with variance proxy $\vecnorm{\bU}{F}^2$, and $\langle \bA,  \bx \bx^{\top} \rangle = \sum_{i,j} A_{ij} x_i x_j$ is sub-Gaussian with variance proxy $1$. Therefore 
$\Prob \parenth{|\langle \bA,  \bU \rangle| > t \vecnorm{\bU}{F}} \leq \exp\parenth{-t^2}$
and 
$\Prob \parenth{|\langle \bA, \bx \bx^{\top} \rangle| > t } \leq \exp\parenth{-t^2}$.
By the union bound, 
$$\Prob \parenth{|\langle \bA,  \bU \rangle \langle \bA,  \bx \bx^{\top} \rangle| > t \vecnorm{\bU}{F}} \leq 2\exp\parenth{-t}. $$
Since $\parenth{\bx^{\top} \bU \bx} \leq \vecnorm{\bU}{2} \leq \vecnorm{\bU}{F}$, we have
\begin{align}
\label{eqn:concentration-one-x}
    \Prob \parenth{\bx^{\top} \parenth{\langle \bA,  \bU \rangle \bA -  \bU}\bx \geq t \vecnorm{\bU}{F}} \leq \exp\parenth{-\frac{t}{C_1}}.
\end{align}
By the standard $\epsilon$-net argument, let $\Vcal$ be the $\epsilon$ covering of 
$\mathcal{S}^{d-1}$. Then, we find that
$$\vecnorm{\langle \bA,  \bU \rangle \bA -  \bU}{2} \leq \frac{1}{1-2\epsilon}\max_{\bx \in \Vcal} \bx^T \parenth{\langle \bA,  \bU \rangle \bA -  \bU}\bx.$$
Now we fix $\epsilon$ to be $1/4$. Then, for equation~\eqref{eqn:concentration-one-x} we take union bound over $\Vcal$ and we have
\begin{align*}
    \Prob \parenth{\max_{\bx \in \Vcal} \bx^{\top} \parenth{\langle \bA,  \bU \rangle \bA -  \bU}\bx \geq t\vecnorm{\bU}{F}} \leq  |\Vcal| \exp\parenth{-\frac{t}{C_1}}, \quad \rm{for} \quad t > C_2
\end{align*}
and $|\Vcal| = e^{d \log 9}$. By choosing $t = C_1 d $ for reasonably large universal constant $C_1$ we have
\begin{align*}
    \Prob \parenth{\vecnorm{\langle \bA,  \bU \rangle \bA -  \bU}{2} \geq C_1 d  \vecnorm{\bU}{F}} \leq \exp\parenth{-d / C_2}.
\end{align*}
As a consequence, we obtain the conclusion of the lemma.
\end{proof}

\begin{lemma}
\label{lemma:concentration-standard}
Let $\bA_i$ be a symmetric random matrix of dimension $d$ by $d$, with the upper triangle entries ($i\geq j$) being independently sampled from an identical  sub-Gaussian distribution whose mean is $0$ and variance proxy is $1$.
Let $\bU$ be a deterministic symmetric matrix of the same dimension. Then  as long as $n > C_1  d \log^3 d $ for some universal $C_1, C_2 > 10$, we have
\begin{align}
\label{eqn:lemma-concentration-standard-AUA}
\Prob \parenth{\vecnorm{\frac{1}{n}\sum_i^n \parenth{\langle \bA_i, \bU \rangle \bA_i - \bU}}{2} \leq \sqrt{\frac{d \log d}{n}} \vecnorm{\bU}{F}} 
\geq  1 - \exp \parenth{ - C_2 \log d} .
\end{align}
Moreover when $n$ is larger than the order of $d$, that is, 
if there exists a constant $z_1 \in (0,1)$ such that 
$
    n^{z_1} > C_2 d \log^3 d k \kappa^2,
$
for some universal constant $z_2 \in (0,1)$ we have
\begin{align}
\label{eqn:lemma-concentration-standard-AUA-large-n}
\Prob \parenth{\vecnorm{\frac{1}{n}\sum_i^n \parenth{\langle \bA_i, \bU \rangle \bA_i - \bU}}{2} \leq \frac{1}{\kappa\log d\sqrt{ k C_2 }}\vecnorm{\bU}{F}} 
\geq  1 - \exp \parenth{ - n^{z_2}} .
\end{align}
\end{lemma}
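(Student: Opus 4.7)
The plan is to apply the matrix Bernstein inequality (Lemma~\ref{lemma:matrix-bernstein}) to the sum of i.i.d.\ centered self-adjoint matrices $\bX_i \defn \langle \bA_i, \bU\rangle \bA_i - \bU$. Recall from Section~\ref{sec:popu-analysis} that $\Exs[\langle \bA_i, \bU\rangle \bA_i] = \bU$, so each $\bX_i$ has mean zero. I will (i) supply an almost-sure spectral bound $R$ on $\|\bX_i\|_2$ by a truncation argument, (ii) bound the matrix variance $\sigma^2 \defn \bigl\|\sum_{i=1}^n \Exs[\bX_i^2]\bigr\|_2$, and (iii) plug both into the Bernstein tail bound with the two different choices of the target radius $t$ corresponding to the two displayed conclusions.

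For step (i), the single-term Lemma~\ref{lemma:concentration-one-term} already gives $\|\bX_i\|_2 \leq C_1 d\, \vecnorm{\bU}{F}$ with probability at least $1 - \exp(-d/C_2)$. Since matrix Bernstein formally requires an almost-sure bound, I will set a truncation threshold $R = C d \vecnorm{\bU}{F} \log n$, decompose $\bX_i = \bX_i\mathbb{1}\{\|\bX_i\|_2 \leq R\} + \bX_i\mathbb{1}\{\|\bX_i\|_2 > R\}$, and absorb the (extremely unlikely) tail event into a union bound over $i = 1,\ldots,n$. For step (ii), direct expansion yields
\[
\bX_i^2 = \langle \bA_i, \bU\rangle^2 \bA_i^2 - \langle \bA_i, \bU\rangle (\bA_i \bU + \bU \bA_i) + \bU^2.
\]
Since $\langle \bA_i, \bU\rangle$ is sub-Gaussian with variance proxy $O(\vecnorm{\bU}{F}^2)$ and $\Exs[\bA_i^2] = d\,\Id$ with sub-exponential spectral concentration of $\bA_i^2$ around this mean, careful bookkeeping gives $\|\Exs[\bX_i^2]\|_2 \leq C_v d\, \vecnorm{\bU}{F}^2$, so $\sigma^2 \leq C_v n d \vecnorm{\bU}{F}^2$.

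With these ingredients, Lemma~\ref{lemma:matrix-bernstein} applied to $\frac{1}{n}\sum_i \bX_i$ yields, for any $t > 0$,
\[
\Prob\!\left(\bigg\|\tfrac{1}{n}\sum_{i=1}^n \bX_i\bigg\|_2 \geq t\right) \leq d \exp\!\left(-\frac{n t^2 / 2}{C_v d \vecnorm{\bU}{F}^2 + C_r R t / (3n)}\right).
\]
To obtain the first conclusion, I take $t = \sqrt{d \log d / n}\, \vecnorm{\bU}{F}$; the hypothesis $n > C_1 d \log^3 d$ ensures the variance term dominates the Bernstein denominator and the exponent simplifies to $-\Theta(\log d)$, which absorbs the prefactor $d$ and leaves $\exp(-C_2 \log d)$. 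For the second conclusion, I take $t = \frac{1}{\kappa \log d \sqrt{k C_2}} \vecnorm{\bU}{F}$; the strengthened hypothesis $n^{z_1} > C_2 d \log^3 d\, k\, \kappa^2$ now guarantees $n t^2 / (C_v d \vecnorm{\bU}{F}^2) \geq n^{z_2}$ for a suitable $z_2 \in (0, 1-z_1)$, again with the variance term dominating, which produces the claimed $\exp(-n^{z_2})$ tail.

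The main obstacle I anticipate is producing a clean variance estimate $\|\Exs[\bX_1^2]\|_2 = O(d\vecnorm{\bU}{F}^2)$: the product $\langle \bA_i,\bU\rangle^2 \bA_i^2$ couples four copies of the entries of $\bA_i$, and a naive bound would produce a stray factor of $d$, giving $O(d^2 \vecnorm{\bU}{F}^2)$, which would push the required sample complexity to $\Omega(d^2 \log d)$ and spoil the stated rate. To avoid this, I will compute $\Exs[\bX_1^2]$ entrywise, using independence of the off-diagonal entries of $\bA_i$ together with $\Exs[A_{ij}^2] = 1$, and observe that the dominant contribution arises from the diagonal $\Exs[\bA_i^2] = d\,\Id$ pre-multiplied by $\Exs[\langle\bA_i,\bU\rangle^2] = O(\vecnorm{\bU}{F}^2)$, with all other cross-moment contributions being lower order. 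A secondary but routine point is choosing the truncation radius large enough that the discarded tail has negligible expectation compared to $\sigma^2/n$, which is immediate from the sub-exponential tail established in Lemma~\ref{lemma:concentration-one-term}.
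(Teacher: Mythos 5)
Your proposal is correct and follows essentially the same route as the paper: matrix Bernstein applied to $\bX_i = \langle \bA_i, \bU\rangle \bA_i - \bU$, with the single-term spectral bound of Lemma~\ref{lemma:concentration-one-term} supplying the range parameter $R$ and an entrywise moment computation supplying the matrix variance, then the two choices of $t$ giving the two tails. Two points of comparison where you are more careful than the paper: your explicit truncation step addresses the almost-sure boundedness hypothesis of Bernstein, which the paper passes over silently by inserting the high-probability bound $C_1 d\log d\,\vecnorm{\bU}{F}$ directly into the denominator; and your variance estimate $\|\Exs[\bX_1^2]\|_2 = O(d\,\vecnorm{\bU}{F}^2)$ is the correct one, whereas the paper's entrywise calculation of the diagonal of $\Exs[(\langle\bA,\bU\rangle\bA)^2]$ drops the sum over the internal index $j$ and reports $3\vecnorm{\bU}{F}^2$ — this does not change the final rate, since under $n \gtrsim d\log^3 d$ the $Rt$ term in the Bernstein denominator is already of order $nd\,\vecnorm{\bU}{F}^2$, but it does mean the unoptimized universal constant in front of $\sqrt{d\log d/n}$ must be enlarged.
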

\begin{proof}
Following Lemma \ref{lemma:matrix-bernstein}, we want to first bound the second order moment of the random matrices. Since $\Exs \langle \bA_i, \bU \rangle \bA_i = \bU$ and $\bU$ has no randomness, we have
\begin{align*}
    \Exs \parenth{\langle \bA_i, \bU \rangle \bA_i - \bU}^2 = \Exs \parenth{\langle \bA_i, \bU \rangle \bA_i}^2 - \bU^2.
\end{align*}
The $(m,n)$  entry  of  $\Exs \parenth{\langle \bA, \bU \rangle \bA}^2$ equals to
\begin{align*}
    \sum_{a,b,c,d, j=1}^d \Exs \parenth{A_{ab} A_{cd} U_{ab} U_{cd} A_{mj} A_{jn}}.
\end{align*}
For diagonal entries, i.e., $m=n$, the expectation is not zero if and only if $A_{ab} = A_{cd}$. Hence for diagonal entry $(m,m)$, its expectation is 
\begin{align*}
    \sum_{a,b}^d \Exs \parenth{ A_{ab}^2 A_{mm}^2 }  U_{ab}^2 = \sum_{a,b}^d  U_{ab}^2 + 2 U_{mm}^2 = \vecnorm{\bU}{F}^2 + 2 U_{mm}^2. 
\end{align*}
For off diagonal entries, i.e., $m\neq n$, the expectation is not zero for that entry when (1) $A_{ab} = A_{mj}$ and $A_{cd} = A_{jn}$, or when (2) $A_{ab} = A_{jn}$ and $A_{cd} = A_{mj}$. For both cases, the expectation equals the $(m,n)$ entry of $\bU^2$. Therefore, we obtain that
\begin{align*}
    \sum_{j=1}^d \Exs \parenth{ A_{mj}^2 A_{jn}^2 U_{mj} U_{jn}} = \sum_{j=1}^d   U_{mj} U_{jn}.
\end{align*}
Hence the $(m,n)$ entry of $\Exs \parenth{\langle \bA_i, \bU \rangle \bA_i - \bU}^2$ equals $0$ when $m\neq n$, and equals $\vecnorm{\bU}{F}^2 + 2 U_{mm}^2 - \sum_j U_{mj}^2$ when $m=n$. Hence $\vecnorm{\Exs \parenth{\langle \bA_i, \bU \rangle \bA_i - \bU}^2}{2} \leq 3 \vecnorm{\bU}{F}^2 $ and
\begin{align*}
    \vecnorm{\sum_i^n \Exs \parenth{\langle \bA_i, \bU \rangle \bA_i - \bU}^2}{2} \leq  3n\vecnorm{\bU}{F}^2. 
\end{align*}
Then, the following inequality holds:
\begin{align*}
\Prob \parenth{\lambda_{\rm{max}}\parenth{\sum_i^n \parenth{\langle \bA_i, \bU \rangle \bA_i - \bU}} \geq t} &\leq d \cdot \exp \parenth{\frac{-t^2/2}{3n \vecnorm{\bU}{F}^2 + \frac{C_1 d \log d \vecnorm{\bU}{F} t}{3}}}
\end{align*}
where $C_1$ is a universal constant inherited from Lemma \ref{lemma:concentration-one-term} and
\begin{align*}
\Prob \parenth{\lambda_{\rm{max}} \parenth{\frac{1}{n}\sum_i^n \parenth{\langle \bA_i, \bU \rangle \bA_i - \bU}} \geq t} &\leq d \cdot \exp \parenth{\frac{-3 t^2 n}{ 18 \vecnorm{\bU}{F}^2 + 2 C_1 d \log d \vecnorm{\bU}{F} t}}. 
\end{align*}

Let $t = \sqrt{\frac{d \log d}{n}} \vecnorm{\bU}{F}$, and as long as $n > C_2  d \log^3 d $ for some universal constant $C_4 > 1000$, we have
\begin{align*}
& \hspace{- 5 em} \Prob \parenth{\lambda_{\rm{max}} \parenth{\frac{1}{n}\sum_i^n \parenth{\langle \bA_i, \bU \rangle \bA_i - \bU}} \geq \sqrt{\frac{d \log d}{n}} \vecnorm{\bU}{F}} \\
\leq& d \cdot \exp \parenth{\frac{-3 \parenth{\sqrt{\frac{d \log d}{n}} \vecnorm{\bU}{F}}^2 n}{ 18 \vecnorm{\bU}{F}^2 + 2 C_1 d \log d \vecnorm{\bU}{F} \parenth{\sqrt{\frac{d \log d}{n}} \vecnorm{\bU}{F}}}}  \\
\leq &  d \cdot \exp \parenth{\frac{- d \log d}{ C_3 d }} \quad \quad \text{(for $\sqrt{\frac{d \log d}{n}} \cdot \log d < 1$)} \\
\leq & \exp \parenth{ - C_4 \log d}  \quad \quad \text{(for some universal constant $C_4$)}.
\end{align*}  
Hence we finish the proof for equation~\ref{eqn:lemma-concentration-standard-AUA}.


For the tightness of our statistical analysis, we need to consider the case when $n$ is larger than the order of polynomial of $d$.
If there exists a constant $z \in (0,1)$ such that 
\begin{align*}
    n^z > C_2 d \log^3 d k \kappa^2,
\end{align*}
then plugging in $t = \sqrt{\frac{d\log d}{C_2 d \log^3 d k \kappa^2}} \vecnorm{\bU}{F} =  \frac{1}{\kappa\log d\sqrt{ k C_2 }}\vecnorm{\bU}{F}$, we have
\begin{align*}
& \hspace{- 4 em} \Prob \parenth{\lambda_{\rm{max}} \parenth{\frac{1}{n}\sum_i^n \parenth{\langle \bA_i, \bU \rangle \bA_i - \bU}} \geq \frac{1}{\kappa\log d\sqrt{ k C_2 }}\vecnorm{\bU}{F}} \\
\leq& d \cdot \exp \parenth{\frac{-3 \parenth{\frac{1}{\kappa\log d\sqrt{ k C_2 }}\vecnorm{\bU}{F}}^2 n}{ 18 \vecnorm{\bU}{F}^2 + 2 C_1 d \log d \vecnorm{\bU}{F} \parenth{\frac{1}{\kappa\log d\sqrt{ k C_2 }}\vecnorm{\bU}{F}}}}  \\
= &  d \cdot \exp \parenth{\frac{-3 n}{ 18  \parenth{\kappa\log d\sqrt{ k C_2 }}^2 + 2 C_1 d \log d \parenth{\kappa\log d\sqrt{ k C_2 }}}} \\
\leq &  \exp \parenth{\frac{- n}{C_3 n^{z_1}}} \\
\leq & \exp \parenth{ - n^{z_2}}  \quad \quad \text{(for some universal constant $z_2 \in (0,1)$)}.
\end{align*}  
In summary, we reach the conclusion of the lemma.
\end{proof}

\begin{lemma}
\label{lemma:concentration-uniform}
\textbf{(Lemma \ref{lemma:concentration-uniform-main-text} re-stated)}
Let $\bA_i$ be a symmetric random matrix of dimension $d$ by $d$. Its upper triangle entries ($i\geq j$) are independently sampled from an identical  sub-Gaussian distribution whose mean is $0$ and variance proxy is $1$. If $\bU$ is of rank $k$ and is in a bounded spectral norm ball of radius $R$ (i.e. $\|\bU\|_2 \leq R$), then we have
\begin{align}
\label{eqn:lemma-concentration-uniform-AUA}
\Prob \parenth{\sup_{\bU: \vecnorm{\bU}{2} \leq R}\vecnorm{\frac{1}{n}\sum_i^n \parenth{\langle \bA_i, \bU \rangle \bA_i - \bU}}{2} \leq \sqrt{\frac{d \log d}{n}} \sqrt{k}R} 
\geq  1 - \exp \parenth{ - C_2 \log d} .
\end{align}
Moreover when $n$ is larger than the order of $d$, that is, 
if there exists a constant $z_1 \in (0,1)$ such that 
$
    n^{z_1} > C_2 d \log^3 d k \kappa^2,
$
for some universal constant $z_2 \ in (0,1)$ we have
\begin{align}
\label{eqn:lemma-concentration-uniform-AUA-large-n}
\Prob \parenth{\sup_{\bU: \vecnorm{\bU}{2} \leq R}\vecnorm{\frac{1}{n}\sum_i^n \parenth{\langle \bA_i, \bU \rangle \bA_i - \bU}}{2} \leq \frac{1}{\kappa\log d\sqrt{ k C_2 }}R} 
\geq  1 - \exp \parenth{ - n^{z_2}} .
\end{align}
\end{lemma}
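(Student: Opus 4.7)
The operator $T(\bU) := \frac{1}{n}\sum_{i=1}^n \bigl(\langle \bA_i,\bU\rangle\bA_i - \bU\bigr)$ is linear in $\bU$, so the natural approach is to discretize the index set
\[
\mathcal{M}_k := \{\bU \in \mathbb{R}^{d\times d} : \bU^\top = \bU,\ \mathrm{rank}(\bU)\leq k,\ \vecnorm{\bU}{2}\leq R\}
\]
by an $\epsilon$-net in Frobenius norm, apply the pointwise bound of Lemma~\ref{lemma:concentration-standard} at each net point, and then stitch the pointwise bounds together via the trivial Lipschitzness $\vecnorm{T(\bU)-T(\bU_0)}{2} = \vecnorm{T(\bU-\bU_0)}{2}$. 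Parameterize $\bU \in \mathcal{M}_k$ via its spectral decomposition $\bU = \bW \Lambda \bW^\top$ with $\bW \in \mathbb{R}^{d\times k}$ having orthonormal columns and $\Lambda \in \mathbb{R}^{k\times k}$ diagonal with $\vecnorm{\Lambda}{2}\leq R$; in particular $\vecnorm{\bU}{F} \leq \sqrt{k}\,R$. Combining the standard covering estimates for the Stiefel manifold ($(C/\epsilon)^{dk}$ entries) with a covering of the cube $[-R,R]^k$ ($(CR/\epsilon)^{k}$ entries) produces a Frobenius-norm $\epsilon$-net $\Vcal_\epsilon \subset \mathcal{M}_k$ with $\log|\Vcal_\epsilon| \leq C_0\, dk \log(CR/\epsilon)$.

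Next I would invoke (a mild refinement of) Lemma~\ref{lemma:concentration-standard} at each net point. Tracking the deviation parameter through the matrix Bernstein step, the probability that $\vecnorm{T(\bU_0)}{2} \geq \alpha\sqrt{d\log d/n}\,\vecnorm{\bU_0}{F}$ is at most $d\exp(-c\alpha^2 \log d)$ so long as the sub-exponential regime of Bernstein does not dominate, which is guaranteed under the sample-size hypothesis $n > C_1 k d \log^3 d$. Choosing $\alpha$ a sufficiently large universal constant, and absorbing the $\sqrt{k}\,R$ bound on $\vecnorm{\bU_0}{F}$ into the right-hand side, each net point contributes the target ceiling $\tfrac{1}{2}\sqrt{d\log d/n}\sqrt{k}\,R$ with failure probability small enough to overcome the $\exp(C_0 dk \log d)$ cardinality penalty (after fixing $\epsilon$ of order $R/\mathrm{poly}(d)$, so that $\log(R/\epsilon) = O(\log d)$). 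A union bound over $\Vcal_\epsilon$ then yields the claim on $\Vcal_\epsilon$ with the advertised probability $1-\exp(-C_2\log d)$. Finally, for an arbitrary $\bU \in \mathcal{M}_k$, pick $\bU_0 \in \Vcal_\epsilon$ with $\vecnorm{\bU-\bU_0}{F}\leq \epsilon$; linearity of $T$ gives $\vecnorm{T(\bU)}{2} \leq \vecnorm{T(\bU_0)}{2} + \vecnorm{T(\bU-\bU_0)}{2}$. Since $\bU-\bU_0$ has rank $\leq 2k$ and $\vecnorm{\bU-\bU_0}{F}\leq \epsilon$, the residual is absorbed by the same Bernstein estimate applied with doubled rank at a much smaller scale, which is negligible compared with the main term.

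The second claim~\eqref{eqn:lemma-concentration-uniform-AUA-large-n} follows by the identical template, but replacing the per-point tail used in step~2 with the sharpened tail produced by the large-$n$ branch of Lemma~\ref{lemma:concentration-standard} (the $\exp(-n^{z_2})$ regime that becomes available once $n^{z_1} > C_2 d\log^3 d\, k\kappa^2$). The union bound then costs at most $\exp(C_0 dk\log d)$, which is still dwarfed by $\exp(-n^{z_2})$ in that regime, leaving a net failure probability of $\exp(-n^{z_2})$ after adjusting the exponent $z_2$ slightly.

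The main obstacle I anticipate is reconciling the metric entropy $\log|\Vcal_\epsilon| = \Theta(dk\log d)$ of $\mathcal{M}_k$ with the rather soft per-point tail $\exp(-C\log d)$ appearing in Lemma~\ref{lemma:concentration-standard} as stated: a naive union bound fails by a factor of $k$ in the exponent. I would address this by reopening the matrix Bernstein argument of Lemma~\ref{lemma:concentration-standard} and making the deviation-probability tradeoff explicit — in effect trading $\alpha$ in the sub-Gaussian regime against the net cardinality — while taking care that the sub-exponential (second) term of Bernstein does not become dominant at the enlarged deviation level. Once that accounting is in place, the $\epsilon$-net/Lipschitz absorption is entirely routine.
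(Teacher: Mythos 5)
Your proposal follows essentially the same route as the paper: decompose the rank-$k$ matrix into orthogonal factors (the paper writes $\bU = \sum_{j=1}^k \bu_j \bu_j^\top$ with $\vecnorm{\bu_j}{2}\leq\sqrt{R}$ and nets each $\bu_j$; you parameterize via the Stiefel manifold plus a diagonal — these are interchangeable and both give metric entropy $\Theta(dk\log d)$), bound $\vecnorm{\bU}{F}\leq\sqrt{k}R$, invoke the pointwise matrix-Bernstein bound of Lemma~\ref{lemma:concentration-standard} at each net point, and close with a union bound and the linearity of $\bU\mapsto\frac{1}{n}\sum_i(\langle\bA_i,\bU\rangle\bA_i-\bU)$.

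The obstacle you flag at the end is genuine, and it is worth saying plainly that the paper's own proof does not resolve it either: the per-point tail delivered by Lemma~\ref{lemma:concentration-standard} is only $\exp(-C\log d)$, while the net has cardinality $\exp(\Theta(dk))$ or larger, so the union bound as written in the paper does not close. Your proposed repair — reopening the Bernstein computation and inflating the deviation level $\alpha$ until the sub-Gaussian exponent $\asymp\alpha^2 d\log d$ dominates the entropy $\asymp dk\log d$ — forces $\alpha\gtrsim\sqrt{k}$, which (i) degrades the final bound from $\sqrt{k}\sqrt{d\log d/n}\,R$ to $k\sqrt{d\log d/n}\,R$, and (ii) pushes the deviation level into the sub-exponential branch of Bernstein (where the denominator term $C_1 d\log d\,\vecnorm{\bU}{F}\,t$ dominates) unless $n$ is taken polynomially larger than the stated $C_1 k\kappa^2 d\log^3 d$. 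So the fix as sketched does not recover the claimed rate; a genuinely different per-point tail (e.g., a scalar sub-exponential Bernstein bound for $\frac{1}{n}\sum_i\langle\bA_i,\bU\rangle\langle\bA_i,\bx\bx^\top\rangle$ with failure probability exponentially small in $n$ times the squared deviation, as in standard RIP arguments) would be needed to pay for the net. In short: same architecture as the paper, an honestly identified gap, but the gap is not closed — and it is a gap the paper itself leaves open.
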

\begin{proof}
To show this uniform convergence result, we use the standard discretization techniques (i.e. $\epsilon$-net). In particular, we have
\begin{align*}
    & \hspace{- 4 em} \sup_{\bU: \vecnorm{\bU}{2} \leq R}\vecnorm{\frac{1}{n}\sum_i^n \parenth{\langle \bA_i, \bU \rangle \bA_i - \bU}}{2}\\
    = &\sup_{\bU: \vecnorm{\bU}{2} \leq R}\sup_{\bx: \vecnorm{\bx}{2}\leq 1}\frac{1}{n} \abss{\sum_i^n \parenth{\inprod{\bA_i}{\bU} \inprod{\bA_i}{\bx \bx^\top} - \inprod{\bU}{\bx \bx^\top} }}.
\end{align*}
Since the above quantity is symmetric, we can take off the absolute value and only look at the one-side deviation.
The crux is how to construct the $\epsilon$-net for $\bU$.
We decompose $\bU$. 
Since $\bU$ is of rank $k$ and $\vecnorm{\bU}{2}\leq R$, we can write $\bU = \sum_i^k \bu_i \bu_i^\top$ where $\bu_i$  are vectors, with $\vecnorm{\bu_i}{2}\leq \sqrt{R}$ and $\bu_i^\top \bu_j = 0$ for $i \neq j$. Therefore
\begin{align*}
    & \hspace{- 4 em} \frac{1}{n} \sum_i^n \parenth{\inprod{\bA_i}{\bU} \inprod{\bA_i}{\bx \bx^\top} - \inprod{\bU}{\bx \bx^\top} }\\
    =&\frac{1}{n} \sum_i^n \parenth{\inprod{\bA_i}{\sum_i^k \bu_i \bu_i^\top} \inprod{\bA_i}{\bx \bx^\top} - \inprod{\sum_i^k\bu_i \bu_i^\top}{\bx \bx^\top} }.
\end{align*}
Now we can construct a standard $\epsilon$-net for each $\bu_i \in \mathbb{R}^d$, and in total we construct $k$ such epsilon net for $\bU$.
Hence we invoke equation~\eqref{eqn:lemma-concentration-standard-AUA} in Lemma \ref{lemma:concentration-standard} for $1/4$ $\epsilon$-net on these $k$ norm balls:  $\vecnorm{\bu_i}{2}\leq R$ and take an union bound, we have
\begin{align*}
\Prob \parenth{\sup_{\bU: \vecnorm{\bU}{2} \leq R}\vecnorm{\frac{1}{n}\sum_i^n \parenth{\langle \bA_i, \bU \rangle \bA_i - \bU}}{2} \leq \sqrt{\frac{d \log d}{n}} \sqrt{k}R} 
\geq  1 - \exp \parenth{ - C_2 \log d} .
\end{align*}
Similarly if we invoke the equation~\eqref{eqn:lemma-concentration-standard-AUA-large-n} for $1/4$ $\epsilon$-net on these $k$ norm balls:  $\vecnorm{\bu_i}{2}\leq R$ and take an union bound, we also have
\begin{align*}
\Prob \parenth{\sup_{\bU: \vecnorm{\bU}{2} \leq R}\vecnorm{\frac{1}{n}\sum_i^n \parenth{\langle \bA_i, \bU \rangle \bA_i - \bU}}{2} \leq \frac{1}{\kappa\log d\sqrt{ k C_2 }}R} 
\geq  1 - \exp \parenth{ - n^{z_2}} .
\end{align*}
As a consequence, the conclusion of the lemma follows.
\end{proof}

\bibliography{Nhat}



\end{document}